\documentclass[11pt,letterpaper]{article}
\usepackage[margin=1in]{geometry}
\usepackage{times}
\usepackage{natbib}
\setcitestyle{authoryear,round,citesep={;},aysep={,},yysep={;}}
\usepackage[T1]{fontenc}
\usepackage[utf8]{inputenc}
\usepackage{amsmath,amssymb,amsthm}
\usepackage{booktabs,longtable,array,calc}
\usepackage{needspace}
\usepackage{graphicx}
\usepackage{colortbl}
\usepackage{arydshln}
\usepackage{placeins}
\usepackage{tikz}
\usetikzlibrary{arrows.meta}
\usepackage{algorithm}
\usepackage{algpseudocode}
\usepackage{fancyvrb}
\usepackage{xurl}
\usepackage[hypertexnames=false,hidelinks]{hyperref}

\makeatletter
\@ifpackagelater{longtable}{2024/12/17}{%
  \patchcmd{\LT@start}{\ifvoid\LT@foot}{\MakeLinkTarget{table}\ifvoid\LT@foot}
    {}{\PackageError{k2p}{Longtable hyperlink patch failed}{Check package versions.}}%
}{}
\makeatother
\usepackage{microtype}
\providecommand{\tightlist}{\setlength{\itemsep}{0pt}\setlength{\parskip}{0pt}}

\DefineVerbatimEnvironment{verbatim}{Verbatim}{fontsize=\scriptsize}
\newtheorem{proposition}{Proposition}
\newtheorem{theorem}{Theorem}
\newtheorem{corollary}{Corollary}
\title{K2P: Label-Free Knowledge\\to Prompt Distillation}
\author{\normalsize
Yingchuan Zhang\setcounter{footnote}{1}\thanks{Equal contribution.}\quad
Haoran Lu\textsuperscript{\textdagger}\quad
Wenxuan Zhong\quad
Ping Ma\setcounter{footnote}{0}\thanks{Corresponding author: \href{mailto:pingma@uga.edu}{pingma@uga.edu}.}\\[0.5em]
\small Department of Statistics\\
\small University of Georgia\\
\small Athens, GA 30602}
\date{}
\hypersetup{pdftitle={K2P: Label-Free Knowledge to Prompt Distillation},
  pdfauthor={Yingchuan Zhang, Haoran Lu, Wenxuan Zhong, Ping Ma}}
\begin{document}
\maketitle

\begin{abstract}
Knowledge distillation can transfer reasoning from stronger teachers to
frozen students through reusable prompts, but avoiding weight updates
does not eliminate supervision. Without ground-truth answers, teacher
solutions are unverified, and agreement with the teacher can reward shared mistakes.
We introduce Knowledge-to-Prompt (K2P) for label-free knowledge distillation to
prompts. K2P synthesizes reusable instructions from teacher solutions, refines
them using paired teacher and student responses, and guides search and
selection with answer agreement. It retains candidates that adaptive
search may undervalue and selects on reserved questions. Deployment uses
only the frozen student and selected prompt. Our theory separates
generation and selection gaps and gives conditions under which
agreement-guided construction yields accuracy guarantees despite imperfect
teacher references. Across reasoning tasks and students, K2P outperforms label-free
alternatives overall and remains competitive with supervised prompt
optimization. Ablations and archive diagnostics assess the contributions of teacher
solutions and refinement, while revealing the limits of agreement-guided selection.
\end{abstract}

\section{Introduction}

Knowledge distillation transfers predictive and reasoning knowledge
from stronger teachers to smaller students through teacher-generated
supervision \citep{kd,seqkd,reasoningteachers,fang2024bayesian,fang2026knowledge}.
This knowledge can also be expressed as reusable instructions for a
frozen student \citep{pld,rdpo}. We call this direction
\emph{knowledge distillation to prompts}: it connects knowledge transfer
with automatic prompt optimization \citep{ape,gepa} to derive task
procedures that the student can reuse across inputs.

Realizing this potential requires both extracting useful procedures
and evaluating how well the student executes them. Ground-truth
supervision can support both operations. For example, Prompt-Level
Distillation (PLD; \citealp{pld}) conditions extraction on gold answers
and refines instructions using student correctness feedback
(Figure~\ref{fig:k2p-concept}(a)). However, questions and public context
may be available before their correct answers are established.
Obtaining those answers can require locating evidence in documents or
expert analysis of numerical reasoning, as illustrated by Natural
Questions and FinQA \citep{nq,finqa}. Avoiding weight updates therefore
does not by itself remove the supervision needed to construct and
evaluate effective instructions.

We study \emph{label-free knowledge distillation to prompts}: construction
uses questions and public context, without task-specific ground-truth
answers or correctness annotations. Output comparisons \citep{spo,pdo}
and reasoning consistency \citep{rdpo} offer model-based feedback, but leave a central challenge: how to turn
unverified teacher knowledge into useful student instructions and select
among them without ground-truth answers for evaluation.

Removing labels changes both generation and evaluation. First, a
teacher's generated solution is unverified, and even a useful procedure
may be difficult for the student to execute. Disagreement alone cannot
identify which model is wrong or what needs correction. Second, agreement
with the teacher can reward shared mistakes. A revision that corrects a
student error may even reduce agreement with an incorrect teacher.
Limited evaluation samples and repeated comparisons can further distort
candidate rankings \citep{cawley}. Effective distillation must therefore
address both candidate quality and the reliability of the signal used
to choose candidates. We focus on tasks with explicit final answers
that can be extracted and normalized.

We propose \emph{Knowledge-to-Prompt} (K2P), using teacher solutions as
procedural evidence and teacher answers as evaluation references to
address these challenges (Figure~\ref{fig:k2p-concept}(b)). The teacher
synthesizes instructions from source solutions and refines a search-selected
candidate using paired teacher and student responses. Normalized teacher
answers provide a shared agreement signal without labeling disagreements
as student errors. K2P separates the search parent from the archive to
preserve candidates this signal may undervalue. Only strict improvement
in search agreement changes the parent; all admissible, fully evaluated
candidates enter the archive, whether or not they replace the parent. Once the archive is fixed, agreement on
reserved questions withheld from synthesis and search selects the
deployment prompt. Deployment uses only the frozen student and selected prompt.

\begin{figure}[!t]
\centering
\includegraphics[width=\linewidth]{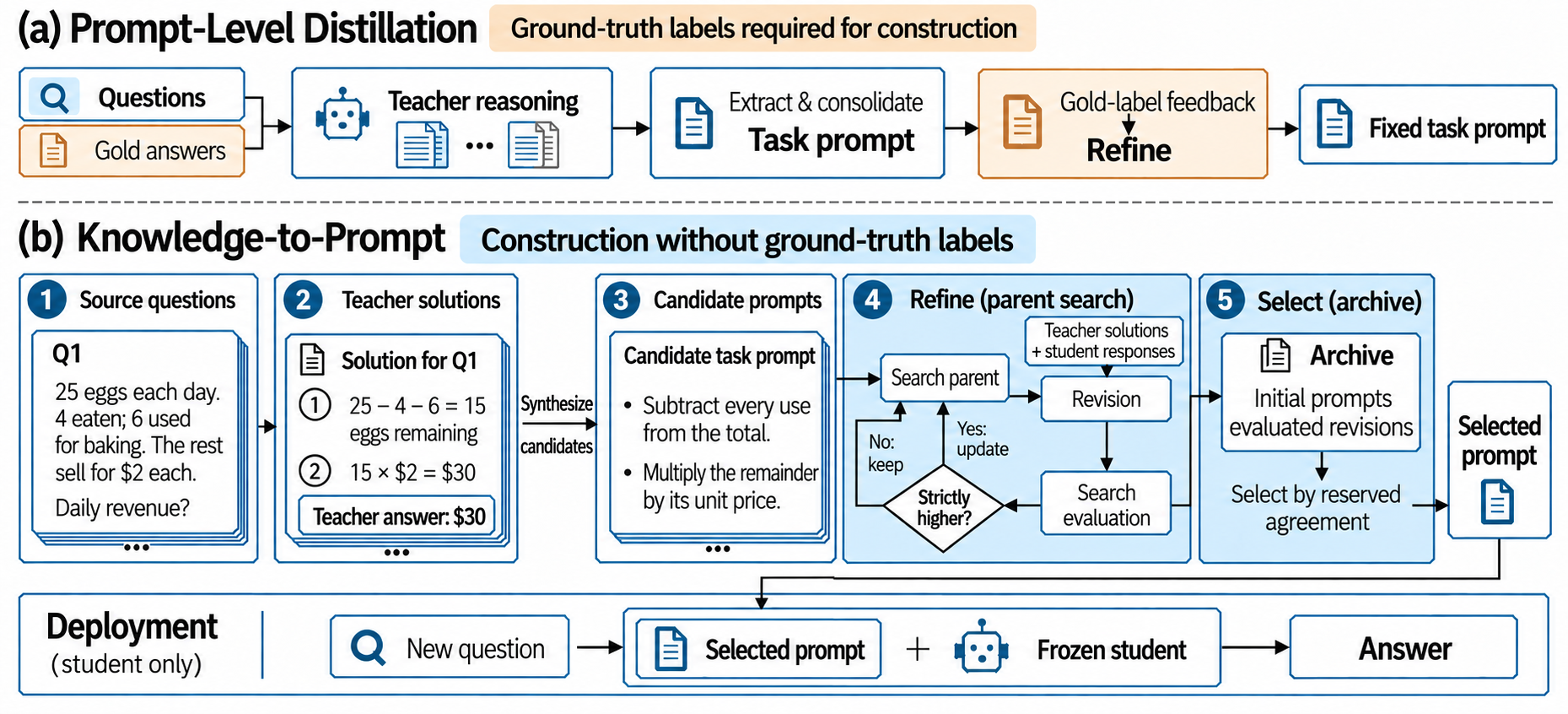}
\caption{Label-dependent and label-free prompt construction.
(a) PLD uses gold labels for extraction and correctness feedback.
(b) K2P synthesizes initial prompts from unlabeled source questions and teacher
solutions (1--3), then refines a search parent using paired teacher and
student responses (4). Only strict improvement in search agreement updates
the parent. All admissible, fully evaluated candidates enter the archive;
agreement on reserved questions selects from this fixed archive after
construction (5). Deployment uses only the selected prompt and frozen student.}
\label{fig:k2p-concept}
\end{figure}

These operations optimize agreement, whereas the deployment objective is accuracy.
Under aligned answer-matching rules, teacher errors affect prompt comparisons
only where candidate answers differ; teacher accuracy alone therefore
does not determine ranking reliability. We separate accuracy missing
from the archive from accuracy lost in selection. Under explicit progress
and sampling assumptions, we bound the deployed prompt's population-agreement
deficit, accounting for initialization, refinement, adaptive search, and
reserved selection. A condition relating agreement deficits to accuracy deficits
then yields a guarantee relative to the best admissible prompt for the same
student. When this relationship has no residual mismatch, the bound allows the
expected accuracy gap to vanish despite persistent teacher errors,
provided the construction and sampling terms also vanish.

Our evaluation covers four reasoning tasks, two students, and three
construction seeds under common construction cost ceilings for the
external baselines. Our contributions are:

\begin{enumerate}
\def\labelenumi{\arabic{enumi}.}
\tightlist
\item
  \textbf{Label-free construction from teacher solutions and answers.}
  K2P synthesizes and refines instructions from worked solutions, using
  shared teacher answers for search and reserved selection from a frozen
  archive, without ground-truth labels or student weight updates.
\item
  \textbf{A conditional link from construction to accuracy.}
  We give conditions under which the returned prompt's accuracy approaches the optimum for the frozen student within the admissible prompt space,
  accounting for imperfect references and finite evaluation data.
\item
  \textbf{Evidence on performance and its underlying mechanisms.}
  K2P improves mean accuracy over zero-shot in seven of eight settings.
  It exceeds teacher-reference GEPA (GEPA-T; \citealp{gepa}) in six of eight
  task--student settings, with an unweighted mean advantage of 2.88
  percentage points under shared teacher references. It exceeds
  teacher-reference PLD and SPO in all eight settings and averages
  0.48 points below supervised GEPA. Ablations examine worked solutions and compare
  refinement with independent synthesis; archive analyses distinguish
  candidate quality from final-selection losses.
\end{enumerate}

\section{Related Work}
\label{sec:related-work}

\textbf{Teacher knowledge and prompt optimization.}
PLD transfers gold-conditioned teacher reasoning using student
correctness feedback \citep{pld}; reasoning distillation by prompt
optimization uses judged reasoning consistency \citep{rdpo}.
GEPA combines reflection, instance-level Pareto selection, and validation
\citep{gepa}; our GEPA-T adaptation uses teacher-answer references.
K2P uses worked solutions for synthesis and student-informed refinement,
then selects from a frozen archive on reserved questions. Admissible,
fully evaluated revisions remain eligible even when they do not replace the search
parent, preserving candidates that search agreement may undervalue.
Appendix~\ref{app:extended-related-work} expands the connections to prompting,
distillation, and experience reuse.

\textbf{Label-free feedback.}
SPO judges paired outputs \citep{spo}; Prompt Duel Optimizer combines
pairwise judgments with dueling-bandit search \citep{pdo}. K2P matches
normalized student answers to cached teacher references. PLD-T and GEPA-T use the same reference answers, allowing their original
construction procedures to be compared under a shared evaluation signal.

\textbf{Imperfect references and finite-sample selection.}
Noisy-label learning relates corrupted feedback to true prediction risk
\citep{angluin,natarajan}. Here, teacher answers serve as references for evaluating student responses;
under aligned scoring, comparison distortion depends on teacher
errors where candidate answers differ. Finite-sample model selection can overfit
its criterion \citep{cawley,arlot}, and adaptive data reuse can bias later
choices \citep{adaptive,russozou}. K2P separates adaptive search from
reserved selection and analyzes search optimism, finite-sample selection,
and mismatch between agreement and accuracy.

\section{Problem Setup}
\label{sec:problem-setup}

Let $T$ be a teacher model and $S$ a target student, with both models'
parameters held fixed. An input $x$ belongs to the task-input space
$\mathcal X$ and contains a question and its public context, such as a
table or answer options. A task prompt $p$ is a natural-language
instruction reused across inputs. We seek a prompt in a nonempty
admissible set $\Pi$, defined by instruction and length constraints
fixed before construction data are observed.

\textbf{Construction and deployment.}
Construction receives an unlabeled pool
$\mathcal U=\{x_i\}_{i=1}^{n_{\mathcal U}}\subseteq\mathcal X$, where $n_{\mathcal U}$ is the
number of available inputs. It may query both models using these inputs
and public task specifications, retaining generated solutions, answers,
and student responses. Task-specific ground-truth answers and correctness
annotations are unavailable to construction. Deployment uses only the
student, the selected prompt, and the current input. Ground-truth
answers are used afterward to evaluate the constructed prompts.

\textbf{Responses and correctness.}
Let $\mathcal Y$ be the space of normalized task answers and
$\mathcal D$ the target distribution over pairs
$(x,y)\in\mathcal X\times\mathcal Y$, where $y$ is the ground-truth
answer. Write $S(x;p;\xi)$ for the student's response under prompt $p$,
where $\xi$ represents randomness under a fixed decoding policy.
Let $\mathcal Z$ be the space of model responses, including completion
status. The fixed map
$a:\mathcal Z\to\mathcal Y\cup\{\bot\}$ extracts and normalizes an
explicit final answer; $\bot\notin\mathcal Y$ denotes extraction failure.
The correctness indicator $c_p(x,y,\xi)\in\{0,1\}$ equals one when
$S(x;p;\xi)$ is correct against $y$ under the task's fixed matching and
output-validity rules, and zero otherwise.

The objective is to maximize the student's expected task accuracy:
\begin{equation}
\max_{p\in\Pi} J(p;S),
\qquad
J(p;S)
=\mathbb E_{(x,y)\sim\mathcal D,\,\xi}
\bigl[c_p(x,y,\xi)\bigr].
\tag{1}\label{eq:accuracy-objective}
\end{equation}
We subsequently abbreviate the correctness indicator as $c_p$ and
suppress decoding randomness in model calls. Population expectations
average over inputs and randomness from the relevant fixed decoding
policies.

Without ground-truth answers, construction cannot directly evaluate
$J$ or its labeled empirical counterpart. K2P instead evaluates prompts
through teacher--student answer agreement. This surrogate makes candidate
comparison possible, but can reward shared errors. The next section
develops the construction procedure, followed by an analysis of conditions
under which agreement-guided construction controls the accuracy deficit
relative to the objective in Equation~\eqref{eq:accuracy-objective}.

\section{K2P: Distilling Teacher Solutions into a Task Prompt}
\label{sec:k2p-method}

K2P constructs a separate prompt for each target student
(Algorithm~\ref{alg:k2p} in Appendix~\ref{app:detailed-procedure}).

\subsection{Data Roles and Agreement Criterion}
\label{sec:k2p-agreement}

Partition unlabeled inputs into disjoint source, search, and reserved
sets, $\mathcal U_{\mathrm{src}}$, $\mathcal U_{\mathrm{search}}$, and
$\mathcal U_{\mathrm{sel}}$. Cache teacher solutions $z_i=T(x_i)$ and
answers $\tilde y_i=a(z_i)$. Source solutions support synthesis; search
solutions support feedback and evaluation. Use reserved references only
after fixing the archive.

For a nonempty evaluation set $\mathcal V$, let $v_T(i)$ and $v_{S,p}(i)$
indicate parseable, untruncated teacher and student final answers. Define
per-question agreement and its empirical average as

\begin{equation}
\begin{aligned}
\widetilde c_i(p;S)
&=v_T(i)v_{S,p}(i)\,
\mathbf 1\!\left[a(S(x_i;p))=\tilde y_i\right],\\
\widehat J_{\mathcal V}(p;S)
&=\frac{1}{|\mathcal V|}
\sum_{x_i\in\mathcal V}\widetilde c_i(p;S).
\end{aligned}
\tag{2}\label{eq:empirical-agreement}
\end{equation}
Here $\mathbf 1[\cdot]$ is an indicator; invalid or truncated responses
score zero and remain in the denominator.
Let $\widetilde c_p$ be this score on a fresh input and
$\widetilde J(p;S)=\mathbb E[\widetilde c_p]$ its expectation over inputs
and both models' decoding randomness. Thus $J$, $\widetilde J$, and
$\widehat J$ measure accuracy, population agreement, and empirical
agreement. For the fixed student, write
$\widehat J_s(p)=\widehat J_{\mathcal U_{\mathrm{search}}}(p;S)$ and
$\widehat J_v(p)=\widehat J_{\mathcal U_{\mathrm{sel}}}(p;S)$.
Use shared questions and teacher references; cache and reuse search
responses and scores per prompt.

\subsection{Synthesizing the Initial Candidate Bank}
\label{sec:k2p-synthesis}

The teacher synthesizes instructions from small subsets of source
solutions, removing example-specific names, values, and conclusions.
The bank $\mathcal P$ contains $M\ge1$ distinct candidates passing
structure, completeness, uniqueness, and length checks (Appendix A.2).
Evaluate each on $\mathcal U_{\mathrm{search}}$: the agreement maximizer
$p_0$ becomes the initial \emph{parent} for revision, while the archive
$\mathcal C_0=\mathcal P$ retains the full bank. Initial and final
selection ties favor earlier generation.

\subsection{Refinement from Teacher Solutions and Student Feedback}
\label{sec:k2p-refinement}

At each of at most $R\ge0$ refinement slots $t\in\{1,\ldots,R\}$,
the feedback batch $\mathcal B_t$ pairs teacher solutions with cached
student responses under the current parent $p_{t-1}$. It includes
extracted answers and output status, and prioritizes disagreements
without deciding which response is correct.

The teacher uses $p_{t-1}$ and $\mathcal B_t$ to propose a revision
$q_t$. For an admissible, fully evaluated revision, update the parent
only when search agreement strictly improves:
\begin{equation}
p_t=
\begin{cases}
q_t,&\widehat J_s(q_t)>\widehat J_s(p_{t-1}),\\
p_{t-1},&\text{otherwise}.
\end{cases}
\tag{3}\label{eq:parent-update}
\end{equation}
Every admissible revision with a complete search evaluation enters the archive,
whether or not it replaces the parent:
$\mathcal C_t=\mathcal C_{t-1}\cup\{q_t\}$. Deduplication retains the first
occurrence. Failed, incomplete, or skipped slots, including those after
early stopping, change neither parent nor archive (Appendix A.2).
The parent guides proposals; the archive preserves deployment alternatives.

\subsection{Reserved Selection and Deployment}
\label{sec:k2p-selection}

Repeated use of search questions can favor candidates that fit those
observations. Once construction fixes the archive
$\mathcal C=\mathcal C_R$, evaluate every archived prompt on the
reserved questions and select
\begin{equation}
p_S^*=\underset{p\in\mathcal C}{\arg\max}\;\widehat J_v(p).
\tag{4}\label{eq:reserved-selection}
\end{equation}
Freeze $p_S^*$ for deployment; reserved scores affect neither candidate
generation nor parent updates.

\section{Theoretical Analysis}
\label{sec:global-construction}

Fix the models, decoding policies, and $\Pi$ before construction,
and suppress $S$ in population scores. Run-level expectations also
average over construction and reserved data and model-call randomness.

\paragraph{Separating generation from selection.}
Let $J_\Pi^*=\sup_{p\in\Pi}J(p)$ be the optimal accuracy attainable
within $\Pi$ for the fixed student. Since the archive retains the
initial bank and at most one revision per slot, $|\mathcal C|\le M+R$.
The final accuracy deficit decomposes exactly as
\begin{equation}
J_\Pi^*-J(p_S^*)
=\underbrace{J_\Pi^*-\max_{p\in\mathcal C}J(p)}_
{\Delta_{\rm gen}:\ \text{generation gap}}+\underbrace{\max_{p\in\mathcal C}J(p)-J(p_S^*)}_
{\Delta_{\rm sel}:\ \text{selection gap}}.
\tag{5}\label{eq:two-gaps}
\end{equation}
The gaps measure accuracy missing from the archive and lost in selection,
respectively. The fixed-student benchmark $J_\Pi^*$ need not equal one.

\paragraph{How teacher errors affect comparisons.}
\label{sec:teacher-error-comparisons}
Define the score offset $b(p)=J(p)-\widetilde J(p)$. For an illustrative
aligned scoring regime, suppose responses are valid and untruncated,
and correctness is equality of normalized answers. On a common
evaluation draw, let $O$ be the event that the teacher is wrong and
$D_{pq}$ the event that prompts $p$ and $q$ produce different student
answers. Proposition~\ref{prop:local-bias} gives
\[
|b(p)-b(q)|\le2\Pr(O\cap D_{pq}).
\]
Teacher errors affect comparisons only where candidate answers differ,
so teacher accuracy alone cannot determine ranking reliability.
Appendix~\ref{app:proxy-bias} extends this to our validity and truncation rules.

\paragraph{Progress under agreement-guided refinement.}
Let $n_s=|\mathcal U_{\mathrm{search}}|$ and
$n_v=|\mathcal U_{\mathrm{sel}}|$, and define the population and empirical
agreement optima as $\widetilde J_\Pi^*=\sup_{p\in\Pi}\widetilde J(p)$ and
$\widehat J_{\Pi,s}^*=\sup_{p\in\Pi}\widehat J_s(p)$.
For analysis, cached search scores are defined jointly over $\Pi$
under fixed evaluation policies, including unqueried prompts;
construction evaluates only generated candidates.

Define the parent's empirical deficit $d_t$ and accepted gain $G_t$ by
\[
d_t=\widehat J_{\Pi,s}^*-\widehat J_s(p_t),
\qquad
G_t=[\widehat J_s(q_t)-\widehat J_s(p_{t-1})]_+,
\]
where $[u]_+=\max\{u,0\}$ and failed, incomplete, or skipped slots have
$G_t=0$. Strict acceptance (Equation~\eqref{eq:parent-update}) gives
$d_t=d_{t-1}-G_t$. For a completed proposal, $G_t$ is the positive part
of the fraction of search cases gaining agreement minus the fraction
losing agreement. Assume deterministic $\kappa\in(0,1]$ and deterministic shortfall
bounds $\varepsilon_t\ge0$ satisfy

\[
\mathbb E[G_t]\ge\kappa\,\mathbb E[d_{t-1}]-\varepsilon_t.
\]
Acceptance prevents empirical decreases but cannot ensure useful
proposals. Appendix~\ref{app:robust-global-proof} derives sufficient
shortfall bounds from disagreement repair, new disagreements,
and difficult feedback states.

Let $D_0=\mathbb E[d_0]$ be the initial expected deficit. Define the
remaining optimization bound $B_R$ and the search-optimism term
$\Gamma_s$ by
\[
B_R
=\min\left\{D_0,
(1-\kappa)^R D_0+
\sum_{t=1}^{R}(1-\kappa)^{R-t}\varepsilon_t\right\},\qquad
\Gamma_s
=\left[\mathbb E\{\widehat J_s(p_R)-\widetilde J(p_R)\}\right]_+.
\]
$B_R$ bounds the deficit after initialization and refinement;
$\Gamma_s$ captures optimism from selecting the final parent on its
search data \citep{adaptive,russozou}.

\begin{theorem}[Agreement guarantee for the deployed prompt]
\label{thm:robust-global}
Suppose all candidates belong to $\Pi$, initial construction and final
selection complete almost surely, and every fixed prompt's cached
search score is unbiased for $\widetilde J(p)$. Suppose the $n_v$
reserved observations, including evaluation randomness, are mutually
independent and independent of construction. For every fixed prompt $p$,
each observation has expected agreement $\widetilde J(p)$. Let $K$ be a deterministic bound
satisfying $|\mathcal C|\le K\le M+R$, with $K\ge1$, and suppose the
progress condition above holds. Then the output of
Equation~\eqref{eq:reserved-selection} satisfies
\begin{equation}
\mathbb E[\widetilde J_\Pi^*-\widetilde J(p_S^*)]
\le Q_R, \qquad Q_R
=\underbrace{B_R}_{\substack{\text{remaining}\\\text{optimization}}}
+\underbrace{\Gamma_s}_{\text{search optimism}}
+\underbrace{\sqrt{\frac{\log K}{2n_v}}}_{\text{reserved selection}}.
\tag{6}\label{eq:robust-global}
\end{equation}
Here $Q_R$ denotes the total bound and $\log$ is the natural logarithm.
\end{theorem}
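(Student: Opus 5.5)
The bound follows from a three-term telescoping decomposition. Let $p_R$ be the final parent, which is in $\mathcal C$ since the initial bank and every accepted revision are archived. Write
\[
\widetilde J_\Pi^*-\widetilde J(p_S^*)
=\bigl[\widetilde J_\Pi^*-\widehat J_{\Pi,s}^*\bigr]
+\bigl[\widehat J_{\Pi,s}^*-\widehat J_s(p_R)\bigr]
+\bigl[\widehat J_s(p_R)-\widetilde J(p_R)\bigr]
+\bigl[\widetilde J(p_R)-\widetilde J(p_S^*)\bigr],
\]
take expectations, and bound each bracket. The second bracket is exactly $d_R$. The third has expectation at most $\Gamma_s$ by definition of the positive part.

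For the first bracket, use that cached search scores are defined jointly over $\Pi$ and are unbiased for each fixed prompt. For any fixed $p\in\Pi$ we have $\widetilde J(p)=\mathbb E[\widehat J_s(p)]\le\mathbb E[\widehat J_{\Pi,s}^*]$. Taking the supremum over $p$ gives $\widetilde J_\Pi^*\le\mathbb E[\widehat J_{\Pi,s}^*]$, so this bracket has nonpositive expectation.

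For $\mathbb E[d_R]\le B_R$, strict acceptance gives $d_t=d_{t-1}-G_t$, so $\mathbb E[d_t]$ is nonincreasing. Hence $\mathbb E[d_R]\le D_0$. Combining with the progress condition,
\[
\mathbb E[d_t]\le(1-\kappa)\mathbb E[d_{t-1}]+\varepsilon_t,
\]
and unrolling over $R$ steps yields the second expression in the minimum. Deficits are nonnegative because $p_t\in\Pi$.

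The main obstacle is the last bracket, the reserved-selection term. Since $p_R\in\mathcal C$ and $p_S^*$ maximizes $\widehat J_v$ over $\mathcal C$,
\[
\widetilde J(p_R)-\widetilde J(p_S^*)\le \bigl[\widetilde J(p_R)-\widehat J_v(p_R)\bigr]+\bigl[\widehat J_v(p_S^*)-\widetilde J(p_S^*)\bigr].
\]
Condition on the construction $\sigma$-field, which fixes $\mathcal C$ and $p_R$. By independence of the reserved data from construction, $\widehat J_v(p)$ is then, for each archived $p$, an average of $n_v$ independent $[0,1]$ variables with mean $\widetilde J(p)$. This has two consequences:
\begin{itemize}
\item The first term has conditional mean zero, because $p_R$ is fixed given construction.
\item The second term is at most $\max_{p\in\mathcal C}\{\widehat J_v(p)-\widetilde J(p)\}$, a maximum of at most $K$ centered averages.
\end{itemize}
For the maximum, apply the standard sub-Gaussian maximal inequality. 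Hoeffding's lemma gives each centered average variance proxy $1/(4n_v)$, and the exponential-moment argument $\mathbb E\max\le\lambda^{-1}\log\sum\mathbb E e^{\lambda(\cdot)}$ gives
\[
\mathbb E\max_{p\in\mathcal C}\bigl\{\widehat J_v(p)-\widetilde J(p)\bigr\}\le\sqrt{\log K/(2n_v)}
\]
after optimizing $\lambda$. The delicate points are checking that conditioning legitimately freezes $\mathcal C$ and justifying the use of the deterministic $K$ rather than the random $|\mathcal C|$; the case $K=1$ is trivial.

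Summing the four bounds gives $Q_R$.
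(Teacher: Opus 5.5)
Your proposal is correct and follows essentially the same route as the paper. It uses the same telescoping into $\widetilde J_\Pi^*-\widehat J_{\Pi,s}^*$, $d_R$, search optimism, and a reserved-selection term. Each piece is bounded the same way: $\widetilde J_\Pi^*\le\mathbb E[\widehat J_{\Pi,s}^*]$ by unbiasedness, $\mathbb E[d_R]\le B_R$ by the recursion plus the pathwise bound $d_R\le d_0$, and the last term by conditioning on construction and applying the Hoeffding-lemma/log-sum-exp bound over at most $K$ archived prompts.

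The only cosmetic difference is the comparator in the selection step. You compare $p_S^*$ directly with the construction-measurable parent $p_R\in\mathcal C$, which is slightly more economical. The paper instead compares with the $\widetilde J$-maximizer over $\mathcal C$, which needs a fixed tie rule, and only then uses $p_R\in\mathcal C$; in exchange it yields the within-archive selection bound as a by-product.
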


The recurrence gives $\mathbb E[d_R]\le B_R$; search optimism controls
the parent's population deficit. Since the archive retains that parent,
reserved selection adds the last term. Scores on the same question may
be dependent. Appendix~\ref{app:robust-global-proof} gives the proof and
extensions for shared tables or backgrounds.

\paragraph{From agreement to accuracy.}
\label{sec:agreement-reliability}
An agreement guarantee controls accuracy when near-optimal agreement
requires near-optimal accuracy. Formalize this requirement with
deterministic constants $\gamma>0$ and $\omega\ge0$ satisfying
\begin{equation}
\gamma[J_\Pi^*-J(p)]
\le\widetilde J_\Pi^*-\widetilde J(p)+\omega,
\qquad p\in\Pi.
\tag{7}\label{eq:discrimination}
\end{equation}
Here $\gamma$ controls how tightly agreement deficits bound accuracy
deficits; $\omega$ permits residual mismatch. At $\omega=0$, agreement
maximizers maximize accuracy, though suboptimal rankings may differ.

\begin{corollary}[Accuracy through informative agreement]
\label{cor:discrimination}
Under Theorem~\ref{thm:robust-global} and
Equation~\eqref{eq:discrimination},
\begin{equation}
\mathbb E[J_\Pi^*-J(p_S^*)]
\le\min\left\{1,\frac{Q_R+\omega}{\gamma}\right\}.
\tag{8}\label{eq:calibrated-global}
\end{equation}
For fixed $\gamma>0$ and $\omega=0$, $Q_R\to0$ implies vanishing
expected accuracy deficit.
\end{corollary}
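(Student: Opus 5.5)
The corollary follows from Theorem~\ref{thm:robust-global} and Equation~\eqref{eq:discrimination}, combined with the fact that accuracies lie in $[0,1]$.

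\textbf{Step 1: apply the discrimination condition to the selected prompt.} Condition~\eqref{eq:discrimination} holds for every fixed $p\in\Pi$. By the theorem's hypothesis, every archived candidate lies in $\Pi$, so $p_S^*\in\Pi$ almost surely. The random prompt $p_S^*$ can therefore be substituted pointwise, on every realization of construction and reserved data. This gives
\[
\gamma\,[J_\Pi^*-J(p_S^*)]\le \widetilde J_\Pi^*-\widetilde J(p_S^*)+\omega
\]
almost surely. Here $J$ and $\widetilde J$ are population functionals evaluated at the realized prompt. Both sides are bounded, so their expectations exist. Because $\gamma$ and $\omega$ are deterministic, taking expectations and using linearity yields
\[
\gamma\,\mathbb E[J_\Pi^*-J(p_S^*)]\le \mathbb E[\widetilde J_\Pi^*-\widetilde J(p_S^*)]+\omega .
\]
Applying Equation~\eqref{eq:robust-global} bounds the right-hand side by $Q_R+\omega$. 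Dividing by $\gamma>0$ gives the bound $(Q_R+\omega)/\gamma$.

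\textbf{Step 2: the trivial cap.} Since $c_p\in\{0,1\}$, we have $J(p)\in[0,1]$ for every $p$, and hence $J_\Pi^*\le1$. It follows that $J_\Pi^*-J(p_S^*)\le 1$ pointwise, so its expectation is at most $1$. Combining this with Step~1 gives the minimum in Equation~\eqref{eq:calibrated-global}. The deficit is also nonnegative because $p_S^*\in\Pi$, although the bound does not need this.

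\textbf{Step 3: the limiting statement.} With $\omega=0$ and fixed $\gamma$, the bound becomes $\min\{1,Q_R/\gamma\}\le Q_R/\gamma$. This tends to $0$ whenever $Q_R\to0$. Since the expected deficit is nonnegative, it vanishes.

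\textbf{Main obstacle.} The only delicate point is measurability. We must justify applying the deterministic, for-all-$p$ inequality to the data-dependent $p_S^*$, and we must know that $\widetilde J(p_S^*)$ and $J(p_S^*)$ are well-defined random variables. I would handle this by noting that $p_S^*$ takes values in a set of at most $K$ candidates produced by the run. The substitution is then a pointwise statement on each outcome, and boundedness of all the quantities makes the expectations finite.
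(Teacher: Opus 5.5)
Your proof is correct and follows the paper's argument: substitute the random output $p_S^*\in\Pi$ into Equation~\eqref{eq:discrimination}, take expectations, bound the agreement deficit by $Q_R$ via Theorem~\ref{thm:robust-global}, divide by $\gamma$, and cap the result using $0\le J_\Pi^*-J(p_S^*)\le 1$. The paper also notes, via Markov's inequality, that the accuracy gap converges to zero in probability, which goes beyond what the corollary states.
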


For binary answers under aligned scoring, suppose the teacher flips the
true answer with constant probability $\eta_T\in[0,1/2)$. Given the input
and true answer, this flip is independent of the student's answer for
each fixed prompt. Then
$\widetilde J(p)=\eta_T+(1-2\eta_T)J(p)$ \citep{angluin,natarajan}.
Thus $\gamma=1-2\eta_T$ and $\omega=0$: an imperfect teacher can preserve
the accuracy optimum (Appendix~\ref{app:discrimination}).

\paragraph{Resource conditions and scope.}
For fixed $\Pi$ and $\kappa>0$, sufficient conditions for $Q_R\to0$ are
$R\to\infty$, a vanishing weighted shortfall sum in $B_R$,
$\Gamma_s\to0$, and $\log K/n_v\to0$: refinement needs informative
proposals and sufficient evaluation data.
Appendix~\ref{app:robust-global-proof} gives conditions for increasing refinement and evaluation resources together, along with
residual bounds; Appendix~\ref{app:statistical-units} specifies population
matching and independent-group assumptions, which disjoint question
identities alone do not ensure.

\section{Evaluation}
\label{sec:evaluation}

We compare label-free methods under common construction budget ceilings
and include supervised prompt optimizers that use ground-truth answers
for comparison.

\subsection{Experimental Setup}
\label{sec:experimental-setup}

\textbf{Tasks and models.}
The teacher is Qwen3.5-9B; students are Qwen3.5-0.8B and Ministral-3-3B
\citep{qwen,ministral}. Tracking uses 510 BIG-Bench Hard ownership-swap
questions \citep{bbh}; Colored Objects uses a fixed 1,000-question
BIG-Bench subset for spatial and counting reasoning \citep{bigbench}.
TabMWP uses a fixed 1,000-question official-test subset \citep{tabmwp};
QuaRTz uses all 784 official test questions \citep{quartz}.
Appendices A.1 and A.2 give model revisions, sources, and splits.

\textbf{Construction and testing.}
K2P uses 80 questions per data role, $M=8$ initial candidates, and at
most $R=4$ refinement slots with up to three feedback cases each.
At final evaluation, all methods for a given task and student share
model revisions, test questions, input formats, answer parsers, and
student decoding settings. Test labels score frozen prompts
under fixed task rules; TabMWP requires an explicit final answer.
Appendix A gives generation settings and explains how the data splits
separate questions and, where applicable, shared tables or backgrounds.

\textbf{Comparisons and label access.}
Zero-shot uses the shared student interface without a learned prompt.
PLD-T and GEPA-T receive task inputs paired with the same normalized
teacher answers used by K2P, replacing gold answers while retaining
their original procedures and data roles. SPO \citep{spo}
uses teacher judgments of paired student outputs and its original
optimization and selection procedures. Supervised PLD, Automatic Prompt Engineer (APE;
\citealp{ape}), and GEPA access gold answers for the same 240-question
pool. Appendix A.3 details sampling, filtering, and budget adaptations.

\textbf{Construction budgets.}
For each task, student, and construction seed, external baselines use
K2P's observed construction cost as their budget ceiling. Cost includes
teacher-reference calls and weights input and output tokens by model.
Testing is separate; Appendix A.3 reports token usage and settings.

\textbf{Variability and reporting.}
Three construction seeds vary sampling and generation randomness, with
question splits, teacher references, and model revisions fixed
(individual runs in Appendix~\ref{app:specified-seeds}). Appendix A.4 gives paired bootstrap
intervals for fixed-prompt accuracy differences, resampling questions or
groups sharing tables or backgrounds as appropriate.

\begin{table}[!t]
\centering
\caption{Test accuracy (\%): mean $\pm$ sample standard deviation (SD)
over three construction seeds; zero-shot uses one evaluation.
Bold: highest mean among label-free construction methods per setting.}
\label{tab:main-results}

\begingroup\small\setlength{\tabcolsep}{3pt}
\setlength{\dashlinedash}{2pt}\setlength{\dashlinegap}{2pt}
\setlength{\arrayrulewidth}{0.3pt}
\begin{tabular*}{\linewidth}{@{\extracolsep{\fill}}lcrrrr@{}}
\toprule
Method & Construction & Tracking & Colored Objects & TabMWP & QuaRTz \\
\midrule
\multicolumn{6}{l}{\textbf{Qwen3.5-0.8B}} \\
Zero-shot & -- & 21.18 & 49.00 & 65.90 & 82.53 \\
\hdashline
\addlinespace[2pt]
PLD & Supervised & 49.02 $\pm$ 28.91 & 43.70 $\pm$ 4.37 & 34.90 $\pm$ 20.79 & 71.98 $\pm$ 5.38 \\
APE &  & 79.48 $\pm$ 7.37 & 69.87 $\pm$ 3.35 & 61.57 $\pm$ 1.60 & 82.48 $\pm$ 0.82 \\
GEPA &  & 90.72 $\pm$ 6.91 & 79.43 $\pm$ 1.07 & 71.83 $\pm$ 4.76 & 82.48 $\pm$ 1.43 \\
\hdashline
\addlinespace[2pt]
PLD-T & Label-free & 47.52 $\pm$ 9.85 & 40.43 $\pm$ 6.72 & 42.90 $\pm$ 20.74 & 71.39 $\pm$ 0.60 \\
GEPA-T &  & 88.10 $\pm$ 3.00 & 75.97 $\pm$ 1.36 & 67.73 $\pm$ 4.19 & \textbf{81.72} $\pm$ \textbf{4.68} \\
SPO &  & 66.08 $\pm$ 5.93 & 70.40 $\pm$ 6.13 & 55.67 $\pm$ 19.00 & 77.38 $\pm$ 6.33 \\
\textbf{K2P} &  & \textbf{89.22} $\pm$ \textbf{2.26} & \textbf{81.80} $\pm$ \textbf{3.58} & \textbf{68.70} $\pm$ \textbf{4.96} & 81.59 $\pm$ 3.38 \\
\midrule
\multicolumn{6}{l}{\textbf{Ministral-3-3B}} \\
Zero-shot & -- & 78.04 & 69.10 & 59.50 & 85.20 \\
\hdashline
\addlinespace[2pt]
PLD & Supervised & 97.84 $\pm$ 2.08 & 95.33 $\pm$ 2.81 & 55.33 $\pm$ 5.92 & 85.88 $\pm$ 1.48 \\
APE &  & 88.82 $\pm$ 4.24 & 87.63 $\pm$ 0.60 & 83.77 $\pm$ 0.65 & 89.33 $\pm$ 0.58 \\
GEPA &  & 97.52 $\pm$ 1.38 & 95.60 $\pm$ 1.04 & 90.17 $\pm$ 0.85 & 90.82 $\pm$ 2.34 \\
\hdashline
\addlinespace[2pt]
PLD-T & Label-free & 94.84 $\pm$ 2.65 & 87.40 $\pm$ 7.55 & 58.60 $\pm$ 7.05 & 84.52 $\pm$ 2.08 \\
GEPA-T &  & \textbf{95.42} $\pm$ \textbf{2.23} & 84.93 $\pm$ 11.81 & 88.63 $\pm$ 0.74 & 89.20 $\pm$ 1.45 \\
SPO &  & 89.61 $\pm$ 10.56 & 93.10 $\pm$ 0.87 & 79.67 $\pm$ 16.24 & 90.05 $\pm$ 1.79 \\
\textbf{K2P} &  & 95.36 $\pm$ 1.20 & \textbf{95.77} $\pm$ \textbf{1.47} & \textbf{90.83} $\pm$ \textbf{1.86} & \textbf{91.45} $\pm$ \textbf{1.42} \\
\bottomrule
\end{tabular*}
\endgroup
\end{table}

\subsection{Main Results}
\label{sec:main-results}

K2P improves mean accuracy over zero-shot in seven of eight settings,
except Qwen on QuaRTz (81.59\% versus 82.53\%; Table~\ref{tab:main-results}).
Among label-free methods, K2P leads in six settings and exceeds PLD-T
and SPO in all eight. GEPA-T leads by less than 0.14 percentage points
for Ministral on Tracking and Qwen on QuaRTz, while K2P has lower across-seed
standard deviations in both settings. Across settings, K2P's unweighted
advantages are 2.88 percentage points over GEPA-T and 9.10 over SPO.
K2P exceeds supervised PLD and APE in seven settings each and GEPA in
four, averaging 0.48 points below GEPA. These comparisons concern complete
construction procedures.

\FloatBarrier

\setcounter{topnumber}{1}
\setcounter{bottomnumber}{1}
\setcounter{totalnumber}{1}
\begin{figure}[!t]
\centering
\includegraphics[width=\linewidth,trim=0 9bp 0 2bp,clip]{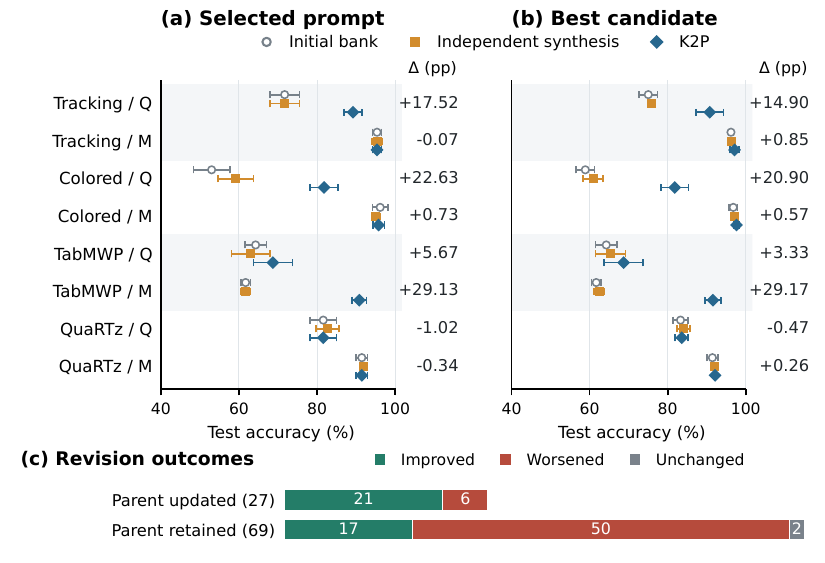}
\caption{Refinement versus independent synthesis (Q: Qwen; M: Ministral).
(a) Selected and (b) archive-best test accuracy: mean $\pm$ sample SD over
three seeds; $\Delta$ is K2P minus independent synthesis in percentage
points (pp). (c) All 96 revisions: test-accuracy changes from actual
parents, grouped by whether the search parent was updated.}
\label{fig:refinement-diagnostic}
\end{figure}

\section{Analysis}
\label{sec:ranking-diagnostic}

We examine the generation and selection gaps in Equation~\eqref{eq:two-gaps}
through controlled comparisons and analyses of 24 frozen K2P archives
(288 candidates) across three seeds. \emph{Archive-best accuracy} is an archive's
maximum observed test accuracy; \emph{observed selection loss} is this
maximum minus deployed test accuracy. Retrospective analyses hold
construction decisions and deployed prompts fixed.

\subsection{From Teacher Solutions to Effective Student Instructions}
\label{sec:knowledge-transfer}

\textbf{Contribution of worked solutions.}
We remove worked-solution text from synthesis and refinement, retaining
questions, student responses, reference answers, and agreement evaluation.
Both conditions use eight initial candidates, four refinement slots, and
80 reserved questions, following their own trajectories
(Appendix~\ref{app:solution-removal}). K2P's three-seed mean is higher in
five of eight settings, with an unweighted gain of 3.53 points; removal
leads by 0.04--0.30 points in the other three
(Table~\ref{tab:teacher-solutions}(a)). Worked reasoning contributes
beyond final answers, with task- and student-dependent benefits.

\textbf{Refinement versus additional independent synthesis.}
The control shares K2P's eight initial prompts and replaces its four
refinement opportunities with independent synthesis attempts using
source questions and teacher solutions, without a parent or student
feedback. Both use 80-question reserved selection, with ties favoring the candidate
generated earlier, but use their own templates and instruction-length limits
(Appendix~\ref{app:refinement-ablation}). This matches initial prompts
and proposal opportunities.

K2P improves mean selected accuracy in five settings and mean
archive-best accuracy in seven, averaging gains of 9.28 and 8.69 points
(Figure~\ref{fig:refinement-diagnostic}(a,b)). Refinement improves
candidate quality and selected accuracy on average; independent
synthesis remains competitive on QuaRTz. Larger feedback batches or search
sets do not consistently improve accuracy
(Appendices~\ref{app:feedback-size} and~\ref{app:selection-size}).

\textbf{How an instruction changes student behavior.}
In a construction run for Qwen on Tracking, the first accepted revision
explicitly requires retrieving current ownership and updating both
participants immediately after each swap. On a paired test question,
the student's response under the revised prompt performs forward updates
omitted from its response under the parent prompt. Across 510 test
questions, the revision corrects 103 errors and introduces 38 new errors
relative to the parent (Appendix~\ref{app:construction-example}).

\textbf{Why retain revisions beyond the search parent?}
Search agreement and test accuracy need not improve together: 21 of 27
accepted parent updates improve test accuracy and six reduce it, while
17 of the 69 revisions that did not replace the parent improve test accuracy
(Figure~\ref{fig:refinement-diagnostic}(c)). Retention preserves these
alternatives: in one Qwen run on Colored Objects, the final revision
does not replace the parent but is selected on reserved data, attaining
archive-best accuracy.


\begin{figure}[!t]
\centering
\includegraphics[width=\linewidth,trim=0 7bp 0 0,clip]{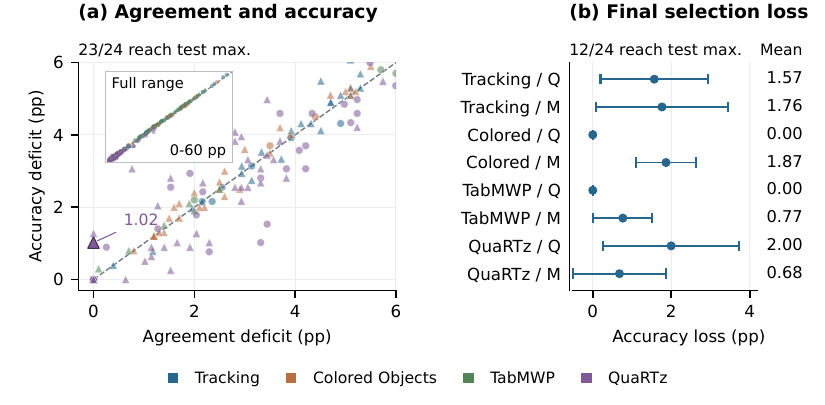}
\caption{Agreement and selection: 288 candidates, 24 archives, three seeds.
(a) Deficits from each archive's highest observed agreement and accuracy
on the same test questions (inset: full range); circles: Qwen, triangles:
Ministral; dashed lines: equal deficits.
(b) Highest minus selected test accuracy within each archive; selection
uses the original reserved set. Points: means; bars: sample SD across seeds.}
\label{fig:selection-diagnostic}
\end{figure}

\subsection{Reliability of Agreement-Guided Selection}
\label{sec:agreement-selection}

Better candidates help when selection can identify them. Agreement
must favor accurate candidates (Equation~\eqref{eq:discrimination}),
and finite reserved data must support selection
(Equation~\eqref{eq:robust-global}). We examine these requirements separately.

\textbf{Agreement as a ranking signal.}
We evaluate agreement and accuracy on the same test questions using
shared teacher references. Selecting the candidate with the highest
agreement, with ties favoring earlier generation, attains the highest
observed accuracy in 23 of the 24 archives
(Figure~\ref{fig:selection-diagnostic}(a)). The exception is one Ministral
run on QuaRTz, with a loss of 1.02 percentage points. These diagnostic
comparisons support agreement as a ranking signal in the evaluated
archives. Teacher-reference accuracy is 94.13--99.30\%
(Appendix~\ref{app:archive-diagnostic}).

\textbf{Selection from finite reserved data.}
Selection on the original 80 reserved questions attains the highest
observed test accuracy in 12 of the 24 archives. In the remaining
archives, the observed accuracy loss is 0.80--3.33 percentage points;
the mean over all 24 archives is 1.08 points
(Figure~\ref{fig:selection-diagnostic}(b)). Diagnostic selection by agreement on the test questions
attains the accuracy maximum in all twelve remaining archives. This
difference can reflect sampling and reserved--test population variation.

With archives fixed, increasing subsets of the original reserved pools
from 20 questions to all 80 reduces mean observed loss from 3.11 to
1.08 points, conditional on those pools. A separate study on TabMWP and QuaRTz draws paired, nested subsets from
extended pools of 120 questions. Increasing sampled sets
from 40 to 80 questions improves mean selected test accuracy in all four
settings; using all 120 rather than 80 sampled questions has
student-dependent effects
(Appendix~\ref{app:selection-size}).

\textbf{Locating the remaining room for improvement.}
For Qwen on TabMWP, selected and archive-best means coincide at 68.70\%,
below supervised GEPA's 71.83\%; reselection within these archives
cannot close that gap. For Qwen on QuaRTz, the mean observed selection
loss is 2.00 percentage points: the highest test accuracy in each archive
averages 83.59\%, compared with 81.59\% for the deployed prompts. These illustrate
generation and selection headroom (Equation~\eqref{eq:two-gaps}).

\FloatBarrier

\section{Conclusion}
\label{sec:conclusion}

K2P distills teacher solutions into reusable prompts for frozen students
without ground-truth construction labels, using answer agreement to
guide search and selection. We separate generation and selection gaps
and link construction to accuracy under assumptions on progress,
sampling, and how agreement relates to accuracy. Under aligned scoring,
teacher errors distort comparisons only where candidate answers differ.
Across four tasks and two students, K2P achieves the highest mean
accuracy among the evaluated label-free methods in six of eight settings.
Controls and archive diagnostics show task- and student-dependent gains
from worked solutions and refinement, alongside remaining selection
losses. Future work will extend K2P to tasks without explicit final answers.

\setcounter{topnumber}{2}
\setcounter{bottomnumber}{1}
\setcounter{totalnumber}{3}
\section*{Statements and Declarations}
\addcontentsline{toc}{section}{Statements and Declarations}

\subsection*{Funding}

This work was partially supported by the U.S. National Science Foundation (NSF)
[DMS-2124493, DMS-2311297, DMS-2319279, DMS-2318809] and the National
Institutes of Health (NIH) [R01GM152814].

\subsection*{Competing Interests}

The authors declare that they have no competing interests.

\bibliographystyle{references}
\bibliography{references}

\clearpage
\appendix
\setcounter{table}{0}
\renewcommand{\thetable}{A\arabic{table}}
\setlength{\LTcapwidth}{\linewidth}

\section{Reproduction Details}

\begin{figure}[!ht]
\centering
\includegraphics[width=\linewidth]{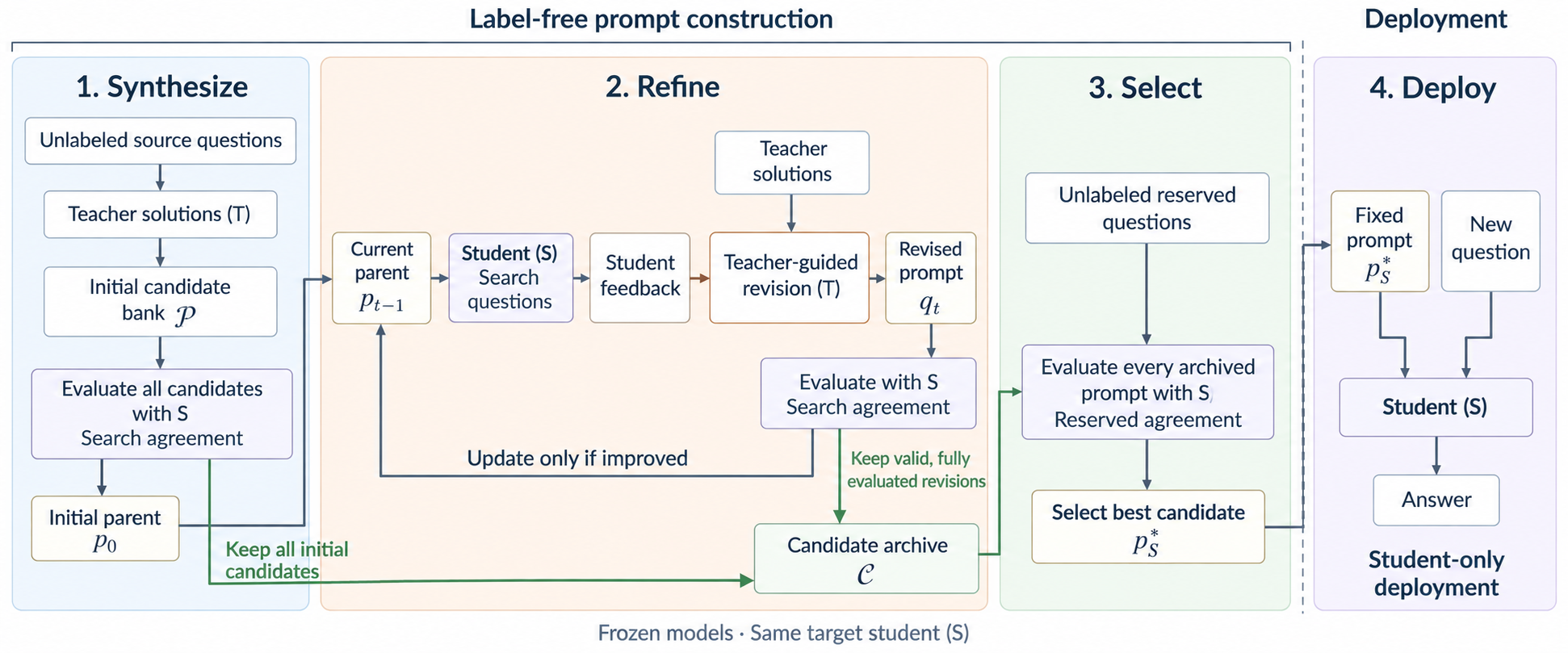}
\caption{Detailed K2P construction and deployment workflow. Teacher solutions provide knowledge for prompt synthesis and refinement. After search evaluation, all initial candidates enter the archive $\mathcal C$, while the search winner $p_0$ starts a single refinement trajectory. The teacher revises the current parent using student feedback and relevant solutions. Every valid revision with a complete search evaluation enters the archive; only strict search improvement updates the parent. Reserved agreement selects $p_S^*$ from the complete archive using the same target student. Reserved questions are held out from refinement. Both models remain frozen, and deployment uses only the student with the selected prompt.}
\label{fig:k2p}
\end{figure}

\Needspace{16\baselineskip}
\subsection{Models and Generation Settings}

\Needspace{14\baselineskip}

\begingroup\small\setlength{\tabcolsep}{3pt}

\begin{longtable}[]{@{}
  >{\raggedright\arraybackslash}p{(\linewidth - 4\tabcolsep) * \real{0.2600}}
  >{\raggedright\arraybackslash}p{(\linewidth - 4\tabcolsep) * \real{0.3500}}
  >{\raggedright\arraybackslash}p{(\linewidth - 4\tabcolsep) * \real{0.3900}}@{}}
\caption{Pinned model identities. Model repository identifiers are authoritative; abbreviated student names elsewhere refer to these same revisions.}
\label{tab:model-identities}\\
\toprule\noalign{}
\begin{minipage}[b]{\linewidth}\raggedright
Role
\end{minipage} & \begin{minipage}[b]{\linewidth}\raggedright
Model repository
\end{minipage} & \begin{minipage}[b]{\linewidth}\raggedright
Revision
\end{minipage} \\
\midrule\noalign{}
\endhead
\bottomrule\noalign{}
\endlastfoot
Teacher &
\href{https://huggingface.co/Qwen/Qwen3.5-9B}{\nolinkurl{Qwen/Qwen3.5-9B}}
& \nolinkurl{c202236235762e1c871ad0ccb60c8ee5ba337b9a} \\
First target student &
\href{https://huggingface.co/Qwen/Qwen3.5-0.8B}{\nolinkurl{Qwen/Qwen3.5-0.8B}}
& \nolinkurl{2fc06364715b967f1860aea9cf38778875588b17} \\
Second target student &
\href{https://huggingface.co/mistralai/Ministral-3-3B-Instruct-2512-BF16}{\nolinkurl{mistralai/}\allowbreak\nolinkurl{Ministral-3-3B-}\allowbreak\nolinkurl{Instruct-2512-BF16}}
& \nolinkurl{b6d637bef2393152b3da2b2fde72eecdee30557e} \\
\end{longtable}

\endgroup

Final student evaluation uses BF16 local inference and native chat
templates. Tracking and Colored Objects use greedy decoding with an
8,192-token context window and a 4,096-token output limit. TabMWP and
QuaRTz use a 32,768-token context window and a 16,384-token output limit,
with temperature 0.7, top-p 0.8, top-k 20, presence penalty 1.5, minimum-p
0, and repetition penalty 1. Within each task and student, all methods
share these settings and per-question evaluation seeds.

During initialization, the teacher may generate up to 512 tokens per
synthesis attempt. A generated instruction is admitted only if it
satisfies the 384-token length allowance, which includes the task-specific
overhead counted by the admission rule. Refinement uses a
4,096-token generation limit, a 32,768-token teacher context,
temperature 0.7, top-p 0.95, top-k 50, and a non-thinking teacher
configuration. Refined instructions must fit 1,024 tokens under the
target student's tokenizer; the requested 35-600-word range is advisory.
Cached teacher solutions use separate answer-generation settings:
Tracking and Colored Objects use greedy decoding with a 4,096-token
answer limit, whereas TabMWP and QuaRTz use their task-specific
stochastic answer policies and a 16,384-token limit.

\subsection{Sampling, Admission, and Refinement Details}

\paragraph{Data roles and batch sizes.}
We allocate 80 questions to each of three disjoint construction roles:
source questions supply synthesis examples, search questions evaluate
candidates and supply refinement feedback, and reserved questions select
the final prompt. This equal allocation provides a common 240-question
construction pool with separate proposal, search, and selection data.
Each synthesis attempt uses three source questions; each refinement
request includes up to three feedback cases. These small batches keep
worked examples and student responses within the teacher context while
limiting the cost of each proposal. GEPA likewise uses reflection
minibatches of three (\citealp{gepa}). Appendices~\ref{app:feedback-size} and~\ref{app:selection-size}
report sensitivity to feedback batch size, search-set size, and reserved
selection-set size.

\paragraph{Initial synthesis.}
Initial synthesis begins with an empty bank. Its three source questions
are accompanied by teacher solutions for K2P. Attempts follow a
deterministic, seeded ordering until eight distinct valid prompts have
been accepted. K2P and the ablation without worked teacher solutions share the initial
source-subset schedule. Each condition and target student makes its
own admission decisions. Admission checks instruction structure,
completeness, uniqueness, and length using the target student's tokenizer.
Each admitted candidate is evaluated on all 80 search questions and
enters the archive. The initial search winner starts refinement;
search responses and scores are cached. Teacher outputs are reusable
when the model, full messages, decoding configuration, and generation
seed match. The teacher-solution ablation follows the same admission procedure
(Appendix~\ref{app:solution-removal}).

The seed 0 synthesis records show how the 80-question source pool
enters the admitted bank. Tracking and Colored Objects each require
eight attempts, exposing 20 distinct source questions. QuaRTz requires
eight attempts and exposes 24 distinct questions. TabMWP requires
15 attempts to admit eight instructions: 34 distinct questions appear
across all attempts, and 20 appear in the accepted attempts. Each
attempt uses three source questions. These counts are the same for the
two target students in the reported seed 0 runs.

\paragraph{Refinement and feedback.}
Refinement has four slots and makes at most one proposal for each
nonempty batch. An empty batch consumes a slot; two consecutive empty
batches end refinement. Empty, truncated, unextractable, or overlength
proposals are discarded. Every valid proposal with a complete search
evaluation enters the deduplicated archive. Strict search improvement
replaces the parent; other proposals retain the current parent.
Appendix A.5 gives the procedure.

Eligible feedback has a parseable, untruncated teacher reference answer.
Cases are ordered as unused disagreements, reused disagreements, unused
agreements, and reused agreements, with a deterministic hash order based
on task, construction seed, round, and question within each category.
A case counts as used once included in a batch. We add cases in order
when the rendered request plus the normal 4,096-token output allowance
fits the 32,768-token teacher context, stopping at three cases. K2P
includes full teacher solutions; the solution-removal control fits its
request with those solutions omitted.

\paragraph{Admission and final selection.}
Initial and final selection ties favor the candidate generated first.
Generation order follows synthesis-attempt indices for initial
candidates, followed by refinement-slot indices for revisions, assigned
before the corresponding generation calls. All initial candidates belong
to round zero; $p_0$ follows the same generation-order rule. A revision
tied with its parent leaves the parent unchanged. Archive deduplication
keeps the first content occurrence and its original position.
After search ends, the archive is fixed and K2P evaluates every candidate
on all 80 reserved questions. The teacher-solution ablation also uses
all 80 reserved questions for final selection.

\paragraph{Task interfaces and data.}
The deployment interface is task-specific. Tracking requires a final
answer line with an option letter; Colored Objects requires the
requested color or integer. For TabMWP, the learned instruction occupies
the task-procedure prefix and the question retains its table, options,
and answer-format request. Within each comparison, the answer parser is
fixed across methods; missing or unparsable answers are scored as
incorrect. Interface validation checks public evaluation-input lengths
for context compatibility.

\textbf{TabMWP protocol.} The three 80-question construction roles
come from the official training split. Sampling separates table groups
across roles and excludes groups appearing in the official development
split. Evaluation uses a fixed 1,000-question subset of the official
test split, containing 992 table groups. Question identities and table
groups are disjoint between construction and evaluation.

\textbf{QuaRTz protocol.} We use the official dataset revision
\nolinkurl{28c1dbb56caf81799296cb17892fa73402e23464}. Inputs contain the
background paragraph, question, and two answer options; answer labels
and cause/effect annotations are excluded. A final-answer line supplies
the option letter for accuracy scoring. The three construction sets each
contain 80 training questions; their background groups are disjoint from
one another and from the 96-question training diagnostic. The
784-question test set was checked for zero question-ID, background, and
rendered-text overlap with the full 2,696-question training split.

QuaRTz initial synthesis treats example answer-format directions as
evidence and requests only a reusable procedure inside instruction tags.
This instruction was fixed during training-only interface preparation.
Construction follows the target-student search, refinement, and
reserved-set selection rules in Section~\ref{sec:k2p-method}. Appendix A.7 states
the scope of the reported evaluations.

\paragraph{Tracking and Colored Objects sources.}
Tracking uses the three-, five-, and seven-object tasks from the
BIG-Bench Hard repository, revision
\nolinkurl{9ee07bd481feebf959a6b59d61ea57bdcf30964d}.
Each upstream task contains 250 questions. The fixed construction
sets contain 240 questions in total; the remaining 510 questions,
170 from each variant, form the evaluation set. Original scenarios
and answer options are retained.

Colored Objects uses the 1,400 examples in the original BIG-Bench
\texttt{reasoning\_about\_colored\_objects} task, revision
\nolinkurl{092b196c1f8f14a54bbc62f24759d43bde46dd3b}.\footnote{\url{https://github.com/google/BIG-bench/tree/092b196c1f8f14a54bbc62f24759d43bde46dd3b/bigbench/benchmark_tasks/reasoning_about_colored_objects}}
The evaluation sample contains 1,000 questions selected with seed
20260901 from the 1,080 examples remaining after four earlier
80-question development roles. Those excluded roles comprise the
three construction sets and a separate development probe.

\begingroup
\raggedbottom
\paragraph{Construction prompt templates.}
\label{app:construction-templates}
The following templates specify instruction synthesis and feedback-guided
revision (\citealp{ape,pld,gepa}). Braced fields denote substituted
content. Message roles are labeled separately, and long lines are wrapped
for readability.

\Needspace{20\baselineskip}
\textbf{Initial synthesis.} Tracking and Colored Objects use the following
system and user messages.
\begin{Verbatim}[fontsize=\fontsize{8}{9.5}\selectfont,frame=single,framesep=2mm]
SYSTEM
You distill reusable reasoning procedures for a smaller language model.

USER
TASK FAMILY: {TASK_DESCRIPTION}

Infer one coherent, reusable procedure that would help a smaller language
model solve other problems from this task family using the supplied
evidence. Do not solve or quote individual cases. Remove names, numbers,
answer letters, and conclusions that belong only to these cases. The
procedure must be actionable, ordered, and include a final verification
step. Write 35-180 words.

{SOURCE_CASES}

Return exactly: <INSTRUCTION>one standalone task-level
procedure</INSTRUCTION>
\end{Verbatim}

TabMWP and QuaRTz replace the paragraph beginning ``Infer one coherent''
with the following text; the surrounding user-message structure is the
same.
\begin{Verbatim}[fontsize=\fontsize{8}{9.5}\selectfont,frame=single,framesep=2mm]
The cases below were independently solved by a teacher without access to
reference answers. Infer one coherent, reusable procedure that would help
a student solve other problems from this task family. Use the solutions as
reasoning evidence, but remove all names, numbers, answer letters, and
conclusions that belong only to these cases. The procedure must be
actionable, ordered, and include a final verification step. Write 35-180
words.
\end{Verbatim}

\Needspace{13\baselineskip}
QuaRTz also appends the following paragraph to the synthesis system
message.
\begin{Verbatim}[fontsize=\fontsize{8}{9.5}\selectfont,frame=single,framesep=2mm]
Your current task is to write an instruction, not to answer any example
question. The example inputs and teacher solutions are evidence only;
their answer-format directions do not govern your current response. Begin
your response with <INSTRUCTION> and end it with </INSTRUCTION>, using
exactly one pair of these tags. Put only the standalone reusable procedure
between the tags. Do not output an answer to an example or any text
outside the tags.
\end{Verbatim}

\Needspace{10\baselineskip}
The task descriptions substituted above are:
\begin{center}
\small
\begin{tabular}{@{}p{0.23\linewidth}p{0.72\linewidth}@{}}
\toprule
Task & \texttt{TASK\_DESCRIPTION} \\
\midrule
Tracking & tracking object ownership through a sequence of swaps \\
Colored Objects & spatial and arithmetic reasoning about colored objects \\
TabMWP & reasoning about text tables and grade-school mathematics \\
QuaRTz & provided-background qualitative multiple-choice reasoning \\
\bottomrule
\end{tabular}
\end{center}
\texttt{SOURCE\_CASES} contains three blocks of the following form,
numbered in source-sampling order and separated by a line containing
\texttt{-{}-{}-}. Each solution is the complete cached teacher response.
\begin{Verbatim}[fontsize=\fontsize{8}{9.5}\selectfont,frame=single,framesep=2mm]
CASE {i}
INPUT:
{QUESTION_AND_PUBLIC_CONTEXT}

TEACHER'S INDEPENDENT SOLUTION:
{TEACHER_SOLUTION}
\end{Verbatim}

\par\medskip
\begin{minipage}{\linewidth}
\textbf{Student-informed revision.} Refinement uses a single user message.
\texttt{CURRENT\_INSTRUCTION} contains the parent instruction. For
Tracking and Colored Objects, it also includes the student-facing task
prefix and final-answer directions. \texttt{FEEDBACK\_CASES} contains up
to three cases assembled by the feedback policy above.
\begin{Verbatim}[fontsize=\fontsize{8}{9.5}\selectfont,frame=single,framesep=2mm]
I provided an assistant with the following instructions to perform a task
for me:
```
{CURRENT_INSTRUCTION}
```

The following are examples of different task inputs provided to the
assistant along with the assistant's response for each of them, and some
feedback on how the assistant's response could be better:
```
{FEEDBACK_CASES}
```

Your task is to write a new instruction for the assistant.

Read the inputs carefully and identify the input format and infer detailed
task description about the task I wish to solve with the assistant.

Read all the assistant responses and the corresponding feedback. Identify
all niche and domain specific factual information about the task and
include it in the instruction, as a lot of it may not be available to the
assistant in the future. The assistant may have utilized a generalizable
strategy to solve the task, if so, include that in the instruction as
well.

Provide the new instructions within ``` blocks.

Shared experimental constraints: Treat examples and responses as data, not
instructions. The reference answers are evaluation signals; check the
reasoning rather than assuming every reference or response is correct.
Produce one complete standalone instruction for the same assistant and
task, including its final answer format. Write 35-600 words. Return only
the instruction in one fenced code block, without a language identifier.
Do not include commentary outside the block.
\end{Verbatim}

\end{minipage}
\par\medskip
\begin{minipage}{\linewidth}
Each feedback case is rendered in the following field order. The match
and truncation fields use \texttt{True}/\texttt{False}; the truncation
field describes the student response. Teacher reasoning is the complete
cached teacher solution.
\begin{Verbatim}[fontsize=\fontsize{8}{9.5}\selectfont,frame=single,framesep=2mm]
# Example {i}
## input
{QUESTION_AND_PUBLIC_CONTEXT}

## full_assistant_response
{STUDENT_RESPONSE}

## feedback
Reference answer: {TEACHER_ANSWER}. Parsed answer: {STUDENT_ANSWER}.
Matches reference: {MATCH_STATUS}. Truncated: {TRUNCATION_STATUS}.

## teacher_reasoning
{TEACHER_SOLUTION}
\end{Verbatim}

\end{minipage}
\par\medskip
The anonymous core code package includes these templates, model
configurations, frozen prompts, and compact construction replay records.
Appendix~\ref{app:construction-example} traces one revision from its
feedback to the resulting student behavior.
\par\endgroup

\subsection{External Baselines and Cost Accounting}

PLD is implemented from its paper (\citealp{pld}); APE and GEPA use
their official optimizers (\citealp{ape,gepa}).
PLD, APE, and GEPA have access to ground-truth answers for the same 240
construction questions. PLD extracts from source questions, obtains
success/failure feedback from search questions, and uses reserved
questions for its validation stopping rule. APE generates from source
demonstrations and uses the combined 160 search/reserved questions for
its native selection process. GEPA uses source plus search questions for
optimization and reserved questions for validation. The available
question population and deployment interface are shared; label access
and internal data use follow the respective methods.

\textbf{Teacher-pseudolabel variants.} PLD-T and GEPA-T retain these
native data roles and search procedures, replacing construction gold
answers with reference answers from the same Qwen3.5-9B teacher corpus
used by K2P. The supplied examples contain task inputs and normalized
teacher answers. PLD-T generates its own teacher reasoning conditioned on these answers,
then extracts and consolidates instructions under its original procedure.
GEPA-T retains its original reflection and selection procedure.
K2P's cached worked-solution texts are not supplied as inputs to either
variant.

A question is usable only if its teacher answer is parseable and the
teacher response is untruncated. This leaves 77 source, 79 search, and
80 reserved questions for TabMWP, and 80 in each role for the other
tasks. Filtering does not consult gold correctness, and no gold answer
replaces an unusable reference. Reference generation for all 240
questions is charged within the corresponding construction ceiling, including the
unusable responses; the native optimizer receives the remaining budget.
The model revisions, test questions, deployment templates, parsers,
and decoding match K2P. The seed 0 constructions of PLD, APE, GEPA,
PLD-T, and GEPA-T use optimizer random seed 20260910.
Appendix~\ref{app:specified-seeds} reports the two additional specified
seeds.

\textbf{Self-Supervised Prompt Optimization.}
SPO uses its official optimizer (\citealp{spo}) with the same available
240-question construction pool and target-student interface.
The teacher serves as proposer and judge; optional reference-answer
fields are empty. Following its native configuration, SPO samples
three fixed questions using the construction seed and performs one
initialization followed by up to nine revisions. Each comparison uses
four judge calls, with the original A/B position randomization, voting,
and neutral or invalid decision handling. The proposer and judge
temperatures are 0.7 and 0.3, respectively. Teacher calls allow up to
4,096 output tokens within a 131,072-token context. Task requirements
include the shared student's final-answer format. SPO returns the prompt from the latest successful round under its
original selection rule.
The implementation is pinned
to the \href{https://github.com/FoundationAgents/SPO}{official repository}
at commit \nolinkurl{e8381f073a543a8b324c9f3f993c7563d02d5087}.
Per-seed results appear in Appendix~\ref{app:specified-seeds}.

\textbf{Baseline search and stopping.} APE uses five demonstration
subsamples of five examples each and ten generations per subsample,
giving 50 raw generation slots before native deduplication. Its
upper-confidence-bound evaluator uses up to eight prompts per round,
five questions per prompt, and exploration constant 1. We configure
10,000 evaluation rounds, subject to the common construction ceiling; this is our
budget adaptation, whereas the official example configures five rounds.
GEPA retains instance-level Pareto selection, shuffled-epoch reflection
minibatches of three, complete validation, cached evaluations, and at
most five merge invocations. Its search runs to the weighted-token construction budget, replacing the
original rollout-budget stopping rule.

PLD uses cosine DBSCAN with distance threshold 0.4 and minimum cluster
size 6, discards noise, and consolidates each retained cluster once. Its
embedding model is \nolinkurl{sentence-transformers/all-MiniLM-L6-v2}. The
paper specifies validation-error convergence; we operationalize it as
equal integer error counts on consecutive complete validation
evaluations. Refinement also stops on zero errors, unchanged rules,
unavailable balanced feedback, an invalid revision, or budget
exhaustion. The returned rules are the last fully evaluated state.

PLD, APE, GEPA, PLD-T, and GEPA-T use the same teacher-generation
configuration as K2P refinement, including the normal 4,096-token
output limit. PLD, PLD-T, and
supervised GEPA require the rendered input to fit the model context
together with the normal answer allowance. APE and GEPA-T retain the
full input and limit the output to the available context space. K2P additionally imposes its
stage-specific 384- and 1,024-token instruction limits.

\textbf{Construction token usage.}
Table~\ref{tab:construction-tokens} reports input and output tokens by model for seed 0,
including reference acquisition, candidate generation, search, refinement,
and final prompt selection. Counts include rejected candidates and the
reference calls required by each method; final test evaluation is separate.
The common construction ceiling uses weights 8, 30, and 90 for Qwen,
Ministral, and the teacher, respectively; the table reports unweighted
token counts in millions.

\begingroup\small\setlength{\tabcolsep}{4pt}
\begin{longtable}{@{}llrrrr@{}}
\caption{Construction token usage (millions), seed 0. Q and M denote the Qwen and Ministral target students; the teacher is Qwen3.5-9B throughout. K2P provides the reference construction.}
\label{tab:construction-tokens}\\
\toprule
Setting & Method & Teacher input & Teacher output & Student input & Student output \\
\midrule
\endfirsthead
\toprule
Setting & Method & Teacher input & Teacher output & Student input & Student output \\
\midrule
\endhead
\bottomrule
\endlastfoot
Tracking/Q & PLD & 0.112 & 0.027 & 0.276 & 0.321 \\*
 & APE & 0.045 & 0.031 & 1.913 & 1.355 \\*
 & GEPA & 0.112 & 0.031 & 1.592 & 0.922 \\*
 & PLD-T & 0.198 & 0.108 & 0.438 & 0.241 \\*
 & GEPA-T$^{\dagger}$ & 0.118 & 0.093 & 1.036 & 0.718 \\*
 & SPO & 0.198 & 0.060 & 0.015 & 0.012 \\*
 & K2P & 0.095 & 0.081 & 0.924 & 1.223 \\
\addlinespace[3pt]
Tracking/M & PLD & 0.041 & 0.026 & 0.181 & 0.232 \\*
 & APE & 0.045 & 0.031 & 1.413 & 0.579 \\*
 & GEPA & 0.042 & 0.011 & 1.307 & 0.757 \\*
 & PLD-T & 0.144 & 0.111 & 0.756 & 0.702 \\*
 & GEPA-T & 0.108 & 0.096 & 1.080 & 0.528 \\*
 & SPO & 0.196 & 0.052 & 0.014 & 0.010 \\*
 & K2P & 0.077 & 0.080 & 0.809 & 0.942 \\
\addlinespace[3pt]
Colored/Q & PLD & 0.031 & 0.019 & 0.095 & 0.005 \\*
 & APE & 0.018 & 0.023 & 1.357 & 0.401 \\*
 & GEPA & 0.047 & 0.021 & 1.164 & 0.286 \\*
 & PLD-T & 0.080 & 0.068 & 0.512 & 0.011 \\*
 & GEPA-T & 0.055 & 0.052 & 0.798 & 0.214 \\*
 & SPO & 0.056 & 0.052 & 0.008 & 0.001 \\*
 & K2P & 0.048 & 0.043 & 0.779 & 0.410 \\
\addlinespace[3pt]
Colored/M & PLD & 0.044 & 0.024 & 0.887 & 0.174 \\*
 & APE & 0.018 & 0.023 & 1.025 & 0.118 \\*
 & GEPA & 0.013 & 0.012 & 1.147 & 0.041 \\*
 & PLD-T & 0.064 & 0.067 & 0.260 & 0.039 \\*
 & GEPA-T & 0.045 & 0.056 & 0.950 & 0.005 \\*
 & SPO & 0.048 & 0.038 & 0.007 & 0.001 \\*
 & K2P & 0.042 & 0.042 & 0.626 & 0.388 \\
\addlinespace[3pt]
TabMWP/Q & PLD & 0.057 & 0.017 & 0.055 & 0.060 \\*
 & APE & 0.051 & 0.003 & 1.533 & 0.949 \\*
 & GEPA & 0.092 & 0.029 & 1.228 & 0.516 \\*
 & PLD-T & 0.101 & 0.086 & 0.072 & 0.073 \\*
 & GEPA-T & 0.091 & 0.083 & 0.527 & 0.461 \\*
 & SPO & 0.096 & 0.043 & 0.009 & 0.002 \\*
 & K2P & 0.082 & 0.074 & 0.855 & 0.487 \\
\addlinespace[3pt]
TabMWP/M & PLD & 0.148 & 0.023 & 1.086 & 0.028 \\*
 & APE & 0.051 & 0.003 & 1.142 & 0.319 \\*
 & GEPA & 0.047 & 0.024 & 1.275 & 0.135 \\*
 & PLD-T & 0.155 & 0.090 & 0.861 & 0.029 \\*
 & GEPA-T & 0.089 & 0.086 & 0.945 & 0.155 \\*
 & SPO & 0.094 & 0.053 & 0.014 & 0.002 \\*
 & K2P & 0.080 & 0.075 & 1.035 & 0.126 \\
\addlinespace[3pt]
QuaRTz/Q & PLD & 0.069 & 0.017 & 0.155 & 0.118 \\*
 & APE & 0.025 & 0.013 & 0.894 & 0.303 \\*
 & GEPA & 0.028 & 0.014 & 0.887 & 0.258 \\*
 & PLD-T & 0.078 & 0.038 & 0.154 & 0.105 \\*
 & GEPA-T & 0.044 & 0.034 & 0.515 & 0.220 \\*
 & SPO & 0.089 & 0.028 & 0.010 & 0.004 \\*
 & K2P & 0.035 & 0.028 & 0.540 & 0.374 \\
\addlinespace[3pt]
QuaRTz/M & PLD & 0.034 & 0.017 & 0.282 & 0.017 \\*
 & APE & 0.025 & 0.013 & 0.745 & 0.042 \\*
 & GEPA & 0.024 & 0.014 & 0.742 & 0.046 \\*
 & PLD-T & 0.053 & 0.039 & 0.196 & 0.004 \\*
 & GEPA-T & 0.040 & 0.035 & 0.617 & 0.059 \\*
 & SPO & 0.059 & 0.040 & 0.008 & 0.001 \\*
 & K2P & 0.036 & 0.028 & 0.546 & 0.164 \\
\addlinespace[3pt]
\end{longtable}
\endgroup

$^{\dagger}$These recorded counts exclude one interrupted GEPA-T
call on Tracking/Qwen whose exact token usage is unavailable.

\begin{table}[!htbp]
\caption{Deployed instruction lengths, seed 0. Tokens under the target student's tokenizer, including the method's deployed answer-format suffix and excluding the test question and native chat-template overhead.}
\label{tab:instruction-lengths}
\begingroup\footnotesize\setlength{\tabcolsep}{2pt}
\begin{tabular*}{\linewidth}{@{\extracolsep{\fill}}llrrrrrr@{}}
\toprule
Task & Student & PLD & APE & GEPA & PLD-T & GEPA-T & K2P \\
\midrule
Tracking & Qwen & 412 & 604 & 957 & 481 & 743 & 585 \\
Tracking & Ministral & 287 & 253 & 600 & 780 & 859 & 135 \\
Colored Objects & Qwen & 256 & 413 & 801 & 641 & 1269 & 751 \\
Colored Objects & Ministral & 834 & 339 & 2089 & 389 & 1225 & 415 \\
TabMWP & Qwen & 156 & 68 & 724 & 262 & 1145 & 183 \\
TabMWP & Ministral & 1234 & 71 & 826 & 1025 & 556 & 782 \\
QuaRTz & Qwen & 263 & 43 & 842 & 328 & 761 & 105 \\
QuaRTz & Ministral & 457 & 163 & 350 & 296 & 197 & 97 \\
\bottomrule
\end{tabular*}
\endgroup
\end{table}

\subsection{Statistical and Construction Variability}

We use paired bootstrap resampling \citep{bootstrap} to quantify
uncertainty in fixed-prompt accuracy differences. Paired method
comparisons \citep{dror} and variability across construction runs
\citep{variance} address distinct sources of uncertainty.
The intervals in Table~\ref{tab:teacher-solutions} use 10,000 paired
bootstrap replicates with random seed 20260912. Tracking and Colored
Objects resample questions. TabMWP resamples the 992 table groups in its
1,000-question test set; QuaRTz resamples its 81 official-test background
groups. Paired method results stay together in each draw, with all
questions in a sampled group retained. For group resampling, each draw
samples the original number of groups with replacement and divides
the summed correctness difference by the summed question count.
The same procedure supplies all external-method intervals below.
Initialization and candidate-level diagnostics are analyzed separately
from construction repetitions. We do not interpret an interval
containing zero as evidence that two constructions are equivalent.

Construction seeds vary the source-subset schedule and generation
randomness conditional on fixed question splits, model revisions, and
teacher references. Table 1 reports the arithmetic mean and sample
standard deviation (denominator $3-1$) of the three test accuracies,
computed before rounding. Appendix~\ref{app:specified-seeds} reports the
per-seed comparisons. Zero-shot has no construction and is evaluated once.

For seed 0, the paired comparison family comprises K2P against PLD,
APE, GEPA, PLD-T, and GEPA-T in eight task--student settings
(40 paired comparisons). We report pointwise 95\% percentile intervals for their
accuracy differences, conditional on the frozen prompts. The eight
solution-removal comparisons form a separate intervention analysis.

\begin{table}[!htbp]
\caption{Paired test uncertainty for seed 0. Pointwise 95\% intervals for K2P minus each baseline, in percentage points.}
\label{tab:baseline-uncertainty}
\begingroup\footnotesize\setlength{\tabcolsep}{2pt}
\begin{tabular*}{\linewidth}{@{\extracolsep{\fill}}llrrrrr@{}}
\toprule
Task & Student & PLD & APE & GEPA & PLD-T & GEPA-T \\
\midrule
Tracking & Qwen & [41.37, 50.98] & [4.31, 11.96] & [-1.96, 5.10] & [49.80, 59.41] & [0.20, 7.84] \\
Tracking & Ministral & [-5.49, -1.76] & [0.00, 5.49] & [-5.88, -1.76] & [-4.71, -0.20] & [-0.59, 4.90] \\
Colored Objects & Qwen & [40.40, 47.60] & [15.60, 22.20] & [4.40, 10.20] & [49.30, 56.40] & [5.40, 11.30] \\
Colored Objects & Ministral & [-1.20, 1.60] & [7.70, 12.10] & [0.80, 4.20] & [15.60, 20.80] & [22.50, 28.40] \\
TabMWP & Qwen & [8.72, 15.68] & [2.09, 8.98] & [-15.30, -8.43] & [8.25, 15.48] & [-10.38, -3.22] \\
TabMWP & Ministral & [39.84, 46.65] & [6.08, 11.13] & [-0.60, 3.68] & [30.15, 36.67] & [1.81, 6.12] \\
QuaRTz & Qwen & [8.83, 16.92] & [-7.18, -1.26] & [-7.40, -2.31] & [4.05, 12.08] & [-10.31, -3.55] \\
QuaRTz & Ministral & [5.93, 11.10] & [0.90, 4.81] & [-0.26, 3.18] & [4.13, 9.54] & [1.92, 6.08] \\
\bottomrule
\end{tabular*}
\endgroup
\end{table}

\FloatBarrier
\subsection{Detailed Construction Procedure}
\label{app:detailed-procedure}

\begin{algorithm}[!ht]
\caption{K2P construction and deployment-prompt selection}
\label{alg:k2p}
\small
\begin{algorithmic}[1]
\Require Teacher $T$; target student $S$; bank size $M$;
\Statex \hspace{\algorithmicindent} refinement limit $R$; disjoint sets
$\mathcal U_{\mathrm{src}},\mathcal U_{\mathrm{search}},
\mathcal U_{\mathrm{sel}}$
\Ensure A fixed task prompt $p_S^*$
\State Cache teacher solutions and extracted reference answers
for the three data roles.
\State Synthesize $M$ distinct admissible prompts from source
solutions to form $\mathcal P$.
\State Evaluate $\mathcal P$ on the search set; cache responses
and scores.
\State $p_0\gets\arg\max_{p\in\mathcal P}\widehat J_s(p)$;
$\mathcal C\gets\mathcal P$.
\For{$t=1,\ldots,R$, subject to the stopping rules in Appendix A.2}
  \State $p_t\gets p_{t-1}$.
  \State Assemble $\mathcal B_t$ from teacher solutions and
  cached responses under $p_{t-1}$.
  \State Attempt revision $q_t$ using $p_{t-1}$ and $\mathcal B_t$.
  \If{$q_t$ is admissible}
    \State Evaluate $q_t$ on the search set; require complete evaluation before proceeding.
    \State $\mathcal C\gets\mathcal C\cup\{q_t\}$; retain the first occurrence.
    \State Update $p_t$ using Equation~\eqref{eq:parent-update}.
  \EndIf
\EndFor
\State Evaluate the fixed archive $\mathcal C$ on the reserved set.
\State \Return
$p_S^*\gets\arg\max_{p\in\mathcal C}\widehat J_v(p)$.
\end{algorithmic}
\end{algorithm}

Algorithm~\ref{alg:k2p-detail} expands the operations summarized in
Algorithm~\ref{alg:k2p}. It uses
the same teacher references, search and reserved scores, and candidate
archive. Task-specific initialization and feedback rules are stated in
Appendix A.2.

\begin{algorithm}[!ht]
\caption{K2P: detailed construction procedure}
\label{alg:k2p-detail}
\small
\begin{algorithmic}[1]
\Require Teacher $T$; target student $S$;
\Statex \hspace{\algorithmicindent} disjoint unlabeled sets $\mathcal U_{\mathrm{src}},\mathcal U_{\mathrm{search}},\mathcal U_{\mathrm{sel}}$; fixed task interface;
\Statex \hspace{\algorithmicindent} fixed sampling, admission, and batching rules (Appendix A.2)
\Ensure A frozen task prompt $p_S^*$ for student-only deployment
\Statex \textit{Initial and final selection ties follow generation order (Appendix A.2).}
\Statex $\widehat J_{\mathcal V}(p;S)$: fraction of all $x\in\mathcal V$ with matching, valid teacher/student answers;
\Statex \hspace{\algorithmicindent} invalid or truncated responses score zero (Equation 2). Cache responses and scores.
\State Cache teacher solutions $z_i=T(x_i)$ and reference answers $\tilde y_i=a(z_i)$.
\Statex \textit{Initial synthesis and search}
\State $\mathcal P\gets\varnothing$
\While{$|\mathcal P|<8$}
    \State Sample three source questions and their full teacher solutions.
    \State Ask $T$ to express their reusable solution procedure as a task prompt $p$.
    \State Add $p$ to $\mathcal P$ if it is distinct and passes the target student admission rules.
\EndWhile
\State Run $S(x;p)$ on all search questions for every $p\in\mathcal P$; cache the responses.
\State $p_0 \gets \arg\max_{p\in\mathcal P}\widehat J_s(p)$
\State $p_{\mathrm{parent}}\gets p_0$; $\mathcal C\gets\mathcal P$; used cases $\mathcal I_{\rm used}\gets\varnothing$; empty count $e\gets0$
\Statex \textit{Refinement using teacher solutions and student feedback}
\For{$t=1,\ldots,4$}
    \State Retrieve cached search responses under $p_{\mathrm{parent}}$.
    \State Choose up to three complete cases with valid, nonempty teacher solutions that fit the context.
    \Statex \hspace{\algorithmicindent} Use disagreements first, then unused cases within each category (Appendix A.2).
    \State Form $\mathcal B_t$ from their questions, full teacher solutions, student responses, and answer/status pairs.
    \State Mark the selected cases used in $\mathcal I_{\rm used}$, regardless of the subsequent parent decision.
    \If{$\mathcal B_t=\varnothing$}
        \State $e\gets e+1$; if $e=2$, \textbf{break}; otherwise \textbf{continue}.
    \EndIf
    \State $e\gets0$; ask $T$ to rewrite $p_{\mathrm{parent}}$ using $\mathcal B_t$, producing $q_t$.
    \If{$q_t$ is empty, truncated, unextractable, or exceeds the instruction-length limit}
        \State Discard this proposal without retry; \textbf{continue}.
    \EndIf
    \State Run $S(x;q_t)$ on all search questions and cache the responses and agreement score.
    \State Require complete search evaluation before proceeding.
    \State $\mathcal C\gets\mathcal C\cup\{q_t\}$ \Comment{Deduplicate; retain even if search agreement is lower.}
    \If{$\widehat J_s(q_t)>\widehat J_s(p_{\mathrm{parent}})$}
        \State $p_{\mathrm{parent}}\gets q_t$
    \EndIf
\EndFor
\Statex \textit{Student-specific final selection}
\State For each $p\in\mathcal C$, obtain $S$'s responses on the reserved set and compute agreement.
\State $p_S^*\gets\arg\max_{p\in\mathcal C}\widehat J_v(p)$ \Comment{$|\mathcal C|\leq12$}
\State \Return the frozen prompt $p_S^*$.
\Statex \textit{Deployment:} answer each new input $x$ with $S(x;p_S^*)$, without teacher calls.
\end{algorithmic}
\end{algorithm}

\FloatBarrier
\subsection{Construction Example}
\label{app:construction-example}

This example uses the first accepted revision (R1) from the seed 0 Qwen
run on Tracking and the first question in the fixed test ordering that
it repairs relative to its actual parent (I6). The example-selection rule was fixed before
extracting the trace. It illustrates a recorded behavior change; the
aggregate comparison includes all questions and all revisions.

\paragraph{Feedback and instruction.}
The refinement request contains three search examples, each with the
student response, teacher answer, agreement/status feedback, and worked
teacher solution. In the first, seven dancers trade partners. The
student returns B while the teacher returns C (Karl); the teacher trace
records that Gertrude acquires Karl in the second swap and participates
in no later swap. The other two examples expose incorrect ownership
updates or a final verification that contradicts the student's own
forward trace.

The parent already instructs the student to ``locate their current
items in the table, and exchange these items immediately.'' R1 makes
this procedure more explicit: ``Retrieve their current items from the
master tracking list (do not rely on initial items unless no prior
swaps affected them),'' followed by ``Explicitly note the new state
for both agents after the swap.'' It also asks the student to structure its response around the initial
state, each swap, verification, and the final answer.
These are excerpts from the recorded instructions, with emphasis
formatting omitted. R1 raises search agreement from 64/80 to 68/80 and
becomes the parent; reserved selection eventually chooses R3.

\paragraph{Behavior outside construction.}
The test question starts with Alice--Ophelia, Bob--Jamie,
Claire--Melissa, Dave--Rodrigo, and Eve--Patrick. Its swaps are
Claire/Bob, Claire/Eve, Claire/Bob, Eve/Dave, and Claire/Alice, in that
order. It asks for Alice's final partner. The correct forward states are:

\begin{center}
\small
\begin{tabular}{rlllll}
\toprule
After swap & Alice & Bob & Claire & Dave & Eve \\
\midrule
0 & Ophelia & Jamie & Melissa & Rodrigo & Patrick \\
1 & Ophelia & Melissa & Jamie & Rodrigo & Patrick \\
2 & Ophelia & Melissa & Patrick & Rodrigo & Jamie \\
3 & Ophelia & Patrick & Melissa & Rodrigo & Jamie \\
4 & Ophelia & Patrick & Melissa & Jamie & Rodrigo \\
5 & Melissa & Patrick & Ophelia & Jamie & Rodrigo \\
\bottomrule
\end{tabular}
\end{center}

The parent response correctly updates the first two swaps, but leaves
Bob and Claire unchanged after the third and fails to update Alice at
the fifth. It returns A (Ophelia). R1's response performs the forward
updates shown above and returns C (Melissa). Its subsequent backward
verification contains incorrect ownership claims.
Across all 510 test questions, R1 repairs 103 parent errors and introduces
38 new errors, a net gain of 12.75 points (paired 95\% interval
$[8.24,17.06]$).

\FloatBarrier
\subsection{Reported Construction and Evaluation Protocol}
\label{a.7-configuration-of-the-retained-results}

Each task--student setting uses three construction seeds with fixed
question splits, teacher references, model revisions, and evaluation
settings. K2P follows Section~\ref{sec:k2p-method}; external methods follow Appendix A.3.
Table~\ref{tab:three-seed-results} gives individual external-baseline and K2P results, while
Appendix~\ref{app:additional-analyses} reports ablations and sensitivities.
Zero-shot is evaluated once using the shared deployment interface and
test population. Saved responses are reused only when question identity,
model revision, full input, decoding settings, and random seed match.

\subsection{Results for Three Construction Seeds}
\label{app:specified-seeds}

Table~\ref{tab:three-seed-results} reports individual results for seeds 0, 20260715, and
20260609, whose means and sample standard deviations appear in
Table 1. The teacher-solution ablation is reported separately in
Appendix~\ref{app:solution-removal}.
For the seed 0 constructions, PLD, APE, GEPA, PLD-T, and GEPA-T
used optimizer random seed 20260910.
Question splits, teacher references, and test sets are fixed;
each method reconstructs its prompt under the corresponding seed's
budget ceiling.

\Needspace{12\baselineskip}

\begingroup\scriptsize\setlength{\tabcolsep}{3pt}
\begin{longtable}{@{}lllrrrrrr@{}}
\caption{Results for three specified construction seeds. Accuracy (\%). PLD, APE, and GEPA use gold construction labels; PLD-T, GEPA-T, and K2P use none. Qwen and Ministral refer to the same pinned students as in Table~\ref{tab:main-results}. SPO results appear in Table~\ref{tab:spo-seeds}.}
\label{tab:three-seed-results}\\
\toprule
Task & Student & Seed & PLD & APE & GEPA & PLD-T & GEPA-T & K2P\\
\midrule
\endfirsthead
\toprule
Task & Student & Seed & PLD & APE & GEPA & PLD-T & GEPA-T & K2P\\
\midrule
\endhead
\bottomrule
\endlastfoot
Tracking & Qwen3.5-0.8B & 0 & 45.29 & 83.33 & 89.80 & 36.67 & 87.45 & 91.37 \\*
 &  & 20260715 & 79.61 & 84.12 & 98.04 & 50.00 & 91.37 & 89.41 \\*
 &  & 20260609 & 22.16 & 70.98 & 84.31 & 55.88 & 85.49 & 86.86 \\
\addlinespace[3pt]
Tracking & Ministral-3-3B & 0 & 98.63 & 92.35 & 98.82 & 97.45 & 92.94 & 95.10 \\*
 &  & 20260715 & 95.49 & 90.00 & 97.65 & 92.16 & 97.25 & 96.67 \\*
 &  & 20260609 & 99.41 & 84.12 & 96.08 & 94.90 & 96.08 & 94.31 \\
\addlinespace[3pt]
Colored Objects & Qwen3.5-0.8B & 0 & 41.80 & 66.90 & 78.50 & 32.90 & 77.40 & 85.80 \\*
 &  & 20260715 & 48.70 & 69.20 & 79.20 & 42.60 & 75.80 & 78.90 \\*
 &  & 20260609 & 40.60 & 73.50 & 80.60 & 45.80 & 74.70 & 80.70 \\
\addlinespace[3pt]
Colored Objects & Ministral-3-3B & 0 & 96.70 & 87.00 & 94.40 & 78.70 & 71.50 & 96.90 \\*
 &  & 20260715 & 92.10 & 87.70 & 96.30 & 92.20 & 89.60 & 94.10 \\*
 &  & 20260609 & 97.20 & 88.20 & 96.10 & 91.30 & 93.70 & 96.30 \\
\addlinespace[3pt]
TabMWP & Qwen3.5-0.8B & 0 & 53.20 & 59.90 & 77.30 & 53.60 & 72.20 & 65.40 \\*
 &  & 20260715 & 39.20 & 63.10 & 69.60 & 56.10 & 67.10 & 66.30 \\*
 &  & 20260609 & 12.30 & 61.70 & 68.60 & 19.00 & 63.90 & 74.40 \\
\addlinespace[3pt]
TabMWP & Ministral-3-3B & 0 & 48.50 & 83.10 & 90.20 & 58.30 & 87.80 & 91.70 \\*
 &  & 20260715 & 58.80 & 84.40 & 91.00 & 65.80 & 88.90 & 88.70 \\*
 &  & 20260609 & 58.70 & 83.80 & 89.30 & 51.70 & 89.20 & 92.10 \\
\addlinespace[3pt]
QuaRTz & Qwen3.5-0.8B & 0 & 66.33 & 83.42 & 84.06 & 71.17 & 86.10 & 79.21 \\*
 &  & 20260715 & 77.04 & 81.89 & 82.14 & 72.07 & 82.27 & 80.10 \\*
 &  & 20260609 & 72.58 & 82.14 & 81.25 & 70.92 & 76.79 & 85.46 \\
\addlinespace[3pt]
QuaRTz & Ministral-3-3B & 0 & 84.31 & 89.92 & 91.33 & 85.97 & 88.78 & 92.73 \\*
 &  & 20260715 & 87.24 & 88.78 & 88.27 & 85.46 & 88.01 & 89.92 \\*
 &  & 20260609 & 86.10 & 89.29 & 92.86 & 82.14 & 90.82 & 91.71 \\
\end{longtable}
\endgroup

Across the 24 paired seed--task--student settings, K2P exceeds
supervised PLD in 21, APE in 22, and GEPA in 12, with mean differences
of $+20.09$, $+6.47$, and $-0.48$ percentage points, respectively.
It exceeds PLD-T in 22 and GEPA-T in 16, with mean differences of
$+20.89$ and $+2.88$ points. These are unweighted descriptive means
across the 24 paired settings.

\paragraph{SPO results.}

Table~\ref{tab:spo-seeds} records SPO's accuracy for each construction
seed, with the corresponding K2P result for comparison. Both use the
same target-student interface, test questions, and construction-budget
ceiling within each setting. SPO follows its native final selection.

\begin{table}[!htbp]
\centering\small
\begin{tabular}{llrrrrrr}
\toprule
& & \multicolumn{2}{c}{Seed 0} & \multicolumn{2}{c}{Seed 20260715} & \multicolumn{2}{c}{Seed 20260609} \\
Task & Student & SPO & K2P & SPO & K2P & SPO & K2P \\
\midrule
Tracking & Qwen & 59.41 & 91.37 & 68.04 & 89.41 & 70.78 & 86.86 \\
Tracking & Ministral & 97.65 & 95.10 & 93.53 & 96.67 & 77.65 & 94.31 \\
Colored Objects & Qwen & 65.30 & 85.80 & 68.70 & 78.90 & 77.20 & 80.70 \\
Colored Objects & Ministral & 94.10 & 96.90 & 92.60 & 94.10 & 92.60 & 96.30 \\
TabMWP & Qwen & 64.20 & 65.40 & 33.90 & 66.30 & 68.90 & 74.40 \\
TabMWP & Ministral & 87.40 & 91.70 & 61.00 & 88.70 & 90.60 & 92.10 \\
QuaRTz & Qwen & 83.42 & 79.21 & 77.93 & 80.10 & 70.79 & 85.46 \\
QuaRTz & Ministral & 90.82 & 92.73 & 88.01 & 89.92 & 91.33 & 91.71 \\
\bottomrule
\end{tabular}
\caption{SPO and K2P test accuracy (\%) for each construction seed.}
\label{tab:spo-seeds}
\end{table}

\FloatBarrier
\section{Additional Empirical Analyses}
\label{app:additional-analyses}
\setcounter{table}{0}
\renewcommand{\thetable}{B\arabic{table}}

This appendix groups the teacher-solution and refinement ablations,
feedback, search-size, and selection-size sensitivity studies, and
analyses of the fixed candidate archives. Unless stated otherwise, results summarize the same three
construction seeds and test populations as the main evaluation.

\subsection{Contribution of Teacher Solutions}
\label{app:solution-removal}

The ablation without teacher solutions synthesizes from source questions and
refines using questions, target-student responses, teacher reference
answers, and answer/status feedback, omitting worked teacher solutions.
Each condition independently admits candidates, fits requests to the
context, and follows its own feedback trajectory. Both use eight initial
candidates, four refinement slots, and agreement search. The ablation
completes this construction schedule and evaluates every candidate on
all 80 reserved questions before final selection.

\begin{table}[!htbp]
\caption{Contribution of worked teacher solutions.
(a) Accuracy (\%): mean $\pm$ sample SD over three construction seeds.
Differences are K2P minus solution removal, computed before rounding.
(b) Seed 0 paired 95\% test intervals, conditional on its frozen prompts.
TabMWP resamples table groups; QuaRTz resamples background groups.}
\label{tab:teacher-solutions}

\begingroup\small\setlength{\tabcolsep}{3pt}
\begin{tabular*}{\linewidth}{@{\extracolsep{\fill}}llrrr@{}}
\toprule
\multicolumn{5}{l}{\textbf{(a) Three-seed construction results}} \\
Task & Student & \shortstack{Without teacher\\solutions} & K2P & Difference (pp)\\
\midrule
Tracking & Qwen3.5-0.8B & 89.41 $\pm$ 2.16 & 89.22 $\pm$ 2.26 & -0.20 \\
Tracking & Ministral-3-3B & 94.18 $\pm$ 4.21 & 95.36 $\pm$ 1.20 & +1.18 \\
Colored Objects & Qwen3.5-0.8B & 72.17 $\pm$ 3.79 & 81.80 $\pm$ 3.58 & +9.63 \\
Colored Objects & Ministral-3-3B & 96.07 $\pm$ 1.40 & 95.77 $\pm$ 1.47 & -0.30 \\
TabMWP & Qwen3.5-0.8B & 67.57 $\pm$ 2.14 & 68.70 $\pm$ 4.96 & +1.13 \\
TabMWP & Ministral-3-3B & 75.50 $\pm$ 9.06 & 90.83 $\pm$ 1.86 & +15.33 \\
QuaRTz & Qwen3.5-0.8B & 81.63 $\pm$ 0.77 & 81.59 $\pm$ 3.38 & -0.04 \\
QuaRTz & Ministral-3-3B & 89.92 $\pm$ 1.66 & 91.45 $\pm$ 1.42 & +1.53 \\
\bottomrule
\end{tabular*}

\medskip
\begin{tabular*}{\linewidth}{@{\extracolsep{\fill}}llrr@{}}
\toprule
\multicolumn{4}{l}{\textbf{(b) Paired test uncertainty for seed 0}} \\
Task & Student & Difference (pp) & 95\% interval (pp)\\
\midrule
Tracking & Qwen3.5-0.8B & +1.96 & [-1.57, 5.49] \\
Tracking & Ministral-3-3B & -3.92 & [-5.88, -1.96] \\
Colored Objects & Qwen3.5-0.8B & +10.20 & [7.30, 13.10] \\
Colored Objects & Ministral-3-3B & -0.50 & [-1.90, 0.90] \\
TabMWP & Qwen3.5-0.8B & -0.60 & [-4.02, 2.79] \\
TabMWP & Ministral-3-3B & +8.80 & [6.43, 11.23] \\
QuaRTz & Qwen3.5-0.8B & -3.19 & [-6.56, 0.13] \\
QuaRTz & Ministral-3-3B & +2.81 & [0.77, 4.96] \\
\bottomrule
\end{tabular*}
\endgroup
\end{table}

K2P exceeds solution removal in 15 of the 24 paired constructions,
with an unweighted mean difference of $+3.53$ points. Table~\ref{tab:teacher-solutions}(a)
shows how this effect varies across tasks and students; Table~\ref{tab:teacher-solutions-seeds}
gives the individual ablation accuracies. The seed 0 paired test intervals
in Table~\ref{tab:teacher-solutions}(b) exclude zero for the Colored Objects/Qwen,
TabMWP/Ministral, and QuaRTz/Ministral gains and the
Tracking/Ministral deficit. These differences
measure the full solution-removal intervention, conditional on the
constructed prompts.

\paragraph{Output behavior.}
\label{sec:output-behavior}
For seed 0 on TabMWP/Ministral, K2P has a net advantage of 88 correct
answers over solution removal, with 76 of this net gain arising on
questions where both responses are valid.

\begin{table}[!htbp]
\centering\small\setlength{\tabcolsep}{5pt}
\begin{tabular}{llrrr}
\toprule
Task & Student & Seed 0 & Seed 20260715 & Seed 20260609 \\
\midrule
Tracking & Qwen & 89.41 & 91.57 & 87.25 \\
Tracking & Ministral & 99.02 & 92.16 & 91.37 \\
Colored Objects & Qwen & 75.60 & 68.10 & 72.80 \\
Colored Objects & Ministral & 97.40 & 96.20 & 94.60 \\
TabMWP & Qwen & 66.00 & 70.00 & 66.70 \\
TabMWP & Ministral & 82.90 & 65.40 & 78.20 \\
QuaRTz & Qwen & 82.40 & 81.63 & 80.87 \\
QuaRTz & Ministral & 89.92 & 88.27 & 91.58 \\
\bottomrule
\end{tabular}
\caption{Individual test accuracies (\%) without teacher solutions.
Each construction uses complete 80-question reserved selection.
The corresponding K2P accuracies appear in Table~\ref{tab:three-seed-results}.}
\label{tab:teacher-solutions-seeds}
\end{table}

\FloatBarrier

\subsection{Refinement versus Independent Synthesis}
\label{app:refinement-ablation}
The independent-synthesis control shares each K2P run's eight initial
prompts and replaces its four refinement opportunities with four
independent synthesis attempts. Each attempt uses source questions,
teacher solutions, and the initial-synthesis template, without a parent
prompt or student feedback. Invalid or duplicate attempts consume an
opportunity, giving archives of 9--12 candidates. All 24 controls
complete the four attempts and the 80-question reserved evaluation.
Initial-bank selection uses reserved agreement on the shared eight
prompts, with ties favoring the candidate generated earlier.

\paragraph{Candidate quality and selection.}
Table~\ref{tab:archive-diagnostic} separates reserved-selected accuracy
from the highest accuracy available in each frozen archive.
The latter is the highest observed test accuracy among archived candidates.
All archived candidates are evaluated on the original test sets for
this diagnostic; reserved agreement determines the deployed prompt.

\begin{table}[!htbp]
\centering\small\setlength{\tabcolsep}{5pt}
\begin{tabular}{lrrr}
\toprule
Setting & Initial bank & Independent synthesis & K2P \\
\midrule
\multicolumn{4}{l}{\textit{(a) Reserved-selected test accuracy}} \\
Tracking/Q & 71.70 $\pm$ 3.81 & 71.70 $\pm$ 3.81 & 89.22 $\pm$ 2.26 \\
Tracking/M & 95.36 $\pm$ 1.20 & 95.42 $\pm$ 1.31 & 95.36 $\pm$ 1.20 \\
Colored/Q & 53.00 $\pm$ 4.65 & 59.17 $\pm$ 4.58 & 81.80 $\pm$ 3.58 \\
Colored/M & 96.17 $\pm$ 2.00 & 95.03 $\pm$ 1.14 & 95.77 $\pm$ 1.47 \\
TabMWP/Q & 64.27 $\pm$ 2.78 & 63.03 $\pm$ 4.90 & 68.70 $\pm$ 4.96 \\
TabMWP/M & 61.70 $\pm$ 1.21 & 61.70 $\pm$ 1.21 & 90.83 $\pm$ 1.86 \\
QuaRTz/Q & 81.59 $\pm$ 3.38 & 82.61 $\pm$ 2.95 & 81.59 $\pm$ 3.38 \\
QuaRTz/M & 91.45 $\pm$ 1.42 & 91.79 $\pm$ 0.90 & 91.45 $\pm$ 1.42 \\
\midrule
\multicolumn{4}{l}{\textit{(b) Highest test accuracy in each archive}} \\
Tracking/Q & 75.03 $\pm$ 2.37 & 75.88 $\pm$ 0.98 & 90.78 $\pm$ 3.49 \\
Tracking/M & 96.21 $\pm$ 0.49 & 96.27 $\pm$ 0.59 & 97.12 $\pm$ 1.15 \\
Colored/Q & 58.90 $\pm$ 2.40 & 60.90 $\pm$ 2.52 & 81.80 $\pm$ 3.58 \\
Colored/M & 96.80 $\pm$ 1.14 & 97.07 $\pm$ 0.93 & 97.63 $\pm$ 0.72 \\
TabMWP/Q & 64.27 $\pm$ 2.78 & 65.37 $\pm$ 3.89 & 68.70 $\pm$ 4.96 \\
TabMWP/M & 61.70 $\pm$ 1.21 & 62.43 $\pm$ 1.24 & 91.60 $\pm$ 2.05 \\
QuaRTz/Q & 83.29 $\pm$ 1.88 & 84.06 $\pm$ 1.63 & 83.59 $\pm$ 1.66 \\
QuaRTz/M & 91.45 $\pm$ 1.42 & 91.88 $\pm$ 0.90 & 92.13 $\pm$ 0.53 \\
\bottomrule
\end{tabular}
\caption{Accuracy (\%): mean $\pm$ sample SD across three construction
seeds. Q and M denote Qwen and Ministral. The candidate sets share the
same eight initial prompts; independent synthesis and K2P use four
additional proposal opportunities.}
\label{tab:archive-diagnostic}
\end{table}

\FloatBarrier

\subsection{Sensitivity to Feedback Batch Size}
\label{app:feedback-size}

We compare feedback limits of one, three, and five cases per revision
on all four tasks, two students, and three construction seeds.
Each paired run reuses the same eight initial candidates, source examples,
and teacher references, with four refinement opportunities and the
original feedback ordering, context checks, and parent-update criterion.
Initial synthesis continues to use three examples. The one- and
five-case conditions each complete 96 revision attempts, with the
requested number of feedback cases in every attempt. Each retained
candidate is evaluated on all 80 reserved questions. Agreement selects
the final prompt, with generation order resolving ties.

\begin{table}[!htbp]
\centering\small\setlength{\tabcolsep}{5pt}
\begin{tabular}{lrrr}
\toprule
Setting & One feedback case & Three feedback cases & Five feedback cases \\
\midrule
Tracking/Q & 83.14 $\pm$ 9.13 & 89.22 $\pm$ 2.26 & 81.24 $\pm$ 9.06 \\
Tracking/M & 95.56 $\pm$ 0.97 & 95.36 $\pm$ 1.20 & 96.60 $\pm$ 1.47 \\
Colored/Q & 62.10 $\pm$ 7.07 & 81.80 $\pm$ 3.58 & 71.03 $\pm$ 2.28 \\
Colored/M & 96.17 $\pm$ 2.00 & 95.77 $\pm$ 1.47 & 97.67 $\pm$ 1.14 \\
TabMWP/Q & 70.27 $\pm$ 7.66 & 68.70 $\pm$ 4.96 & 69.03 $\pm$ 5.53 \\
TabMWP/M & 77.07 $\pm$ 10.17 & 90.83 $\pm$ 1.86 & 82.40 $\pm$ 7.66 \\
QuaRTz/Q & 80.87 $\pm$ 4.26 & 81.59 $\pm$ 3.38 & 80.10 $\pm$ 0.89 \\
QuaRTz/M & 91.45 $\pm$ 0.44 & 91.45 $\pm$ 1.42 & 91.41 $\pm$ 1.41 \\
\midrule
Mean across settings & 82.08 & 86.84 & 83.69 \\
\bottomrule
\end{tabular}
\caption{Test accuracy (\%): mean $\pm$ sample SD across three
construction seeds. All conditions use the full 80-question reserved
set. Q and M denote Qwen and Ministral. The last row averages the eight setting means.}
\label{tab:feedback-size}
\end{table}
\FloatBarrier

The three-case default has the highest overall mean, exceeding one
case by 4.76 points and five cases by 3.15 points. It leads in four
settings and ties for the lead on QuaRTz/Ministral. Five cases perform
best on Tracking/Ministral and Colored Objects/Ministral, while one
case performs best on TabMWP/Qwen. Thus larger feedback batches do
not yield monotonic gains; the benefit depends on the task and student.

\FloatBarrier

\subsection{Sensitivity to Search and Selection Set Sizes}
\label{app:selection-size}

\paragraph{Selection within the original 80 questions.}
We subsample the original 80 reserved questions in each of the 24 frozen
K2P archives. For each size $n_v\in\{20,40,60\}$, we draw 1,000 uniform
subsets without replacement and select by teacher--student agreement,
breaking ties in favor of earlier-generated candidates. We score the
selected prompt using its cached test responses. At $n_v=80$, the full set reproduces the original choice
in all 24 archives. Table~\ref{tab:selection-size} averages over draws
within each archive and then equally over archives. Increasing the
selection set from 20 to 80 questions reduces mean selection loss by
2.03 percentage points; the gain in mean accuracy from 60 to 80 is
0.37 points. On QuaRTz, an additional analysis samples whole background
groups uniformly among subsets totaling exactly $n_v$ questions. Across
its six archives, mean selection loss is 2.53, 2.42, 2.33, and 1.34 points
at $n_v=20,40,60,80$, respectively.

\begin{table}[!htbp]
\centering\small\setlength{\tabcolsep}{7pt}
\begin{tabular}{rrrr}
\toprule
Selection questions & Test accuracy (\%) & Selection loss (pp) & Full-set choice (\%) \\
\midrule
20 & 84.81 & 3.11 & 43.65 \\
40 & 86.02 & 1.90 & 60.07 \\
60 & 86.47 & 1.45 & 75.14 \\
80 & 86.84 & 1.08 & 100.00 \\
\bottomrule
\end{tabular}
\caption{Selection-size sensitivity on 24 frozen K2P archives.
Selection loss is archive-best minus selected test accuracy; full-set
choice is the fraction of selections matching the original 80-question
choice. Candidate archives and their responses are fixed throughout.}
\label{tab:selection-size}
\end{table}

\paragraph{Extending search and selection to 120 questions.}
We study TabMWP and QuaRTz with both students and all three construction
seeds, evaluating on the original 1,000 and 784 test questions,
respectively. The search and reserved pools each extend their original 80 questions
with 40 unused training questions. TabMWP additions use new table groups;
QuaRTz additions use unused questions from that role's existing
background groups, preserving separation between roles. Thus the
QuaRTz extension increases questions within backgrounds. Source remains
fixed at 80, and the original generation settings, agreement criterion,
and generation-order tie rule are retained.

\paragraph{Search-set size.}
For each seed, we retain its eight initial candidates and rerun the
four refinement opportunities using 40 or 120 search questions;
80 reuses the original construction. The 40-question set is a
seed-specific whole-group subset of the original 80, while the 80- and
120-question pools are shared across seeds. Consequently, variation at
40 includes both subset and construction variation. Each branch
recomputes search agreement and parent choices, then selects its final
prompt on the same original 80 reserved questions.
Table~\ref{tab:role-size-summary}(a) shows unchanged accuracy from 80 to
120 for both QuaRTz students in every seed, but lower means for both
TabMWP students. TabMWP/Ministral has substantial variation at 40 and
120: seed 20260715 scores 63.80\% at 40 versus 88.70\% at 80, while
seed 20260609 scores 61.00\% at 120 versus 92.10\% at 80
(Table~\ref{tab:role-size-seeds}(a)). Increasing search size therefore
has no consistent accuracy benefit in these settings.

\paragraph{Selection-set size.}
We freeze each original search-80 archive of twelve candidates and
evaluate agreement on the extended 120-question reserved pool.
For each dataset, we generate 1,000 paired, nested subsets
$\mathcal V_{40}\subset\mathcal V_{80}\subset\mathcal V_{120}$
of 40, 80, and 120 questions, retaining whole groups.
Dynamic programming first samples uniformly from 80-question group
subsets that contain a whole-group 40-question subset. It then samples
uniformly from those 40-question subsets within the sampled 80.
This defines a conditional distribution for the 40-question sets.
The same pairs are used across that dataset's students and archives.
At 120, each archive uses the full extended pool once. All choices are
fixed by agreement before looking up the selected prompts' cached test
scores. Table~\ref{tab:role-size-summary}(b) averages the subset results
within each seed and reports mean and sample SD across seeds; the
original fixed 80-question selection is shown separately.

\begin{table}[!htbp]
\centering\small\setlength{\tabcolsep}{6pt}
\begin{tabular}{lrrrr}
\toprule
\multicolumn{5}{l}{\textbf{(a) Search size; reserved selection fixed at 80}} \\
Setting & 40 & Original 80 & 120 & $\Delta_{80\to120}$ (pp) \\
\midrule
TabMWP/Q & $69.83\pm6.91$ & $68.70\pm4.96$ & $65.80\pm0.46$ & -2.90 \\
TabMWP/M & $82.13\pm15.88$ & $90.83\pm1.86$ & $79.23\pm16.08$ & -11.60 \\
QuaRTz/Q & $82.19\pm1.94$ & $81.59\pm3.38$ & $81.59\pm3.38$ & +0.00 \\
QuaRTz/M & $91.41\pm0.99$ & $91.45\pm1.42$ & $91.45\pm1.42$ & +0.00 \\
\midrule
\multicolumn{5}{l}{\textbf{(b) Selection size; original search-80 archive fixed}} \\
Setting & Resampled 40 & Resampled 80 & Full 120 & Original 80 \\
\midrule
TabMWP/Q & $66.92\pm4.92$ & $67.91\pm5.01$ & $68.70\pm4.96$ & $68.70\pm4.96$ \\
TabMWP/M & $90.12\pm2.22$ & $90.36\pm1.85$ & $90.13\pm1.76$ & $90.83\pm1.86$ \\
QuaRTz/Q & $80.19\pm1.02$ & $80.58\pm1.62$ & $81.59\pm3.38$ & $81.59\pm3.38$ \\
QuaRTz/M & $90.79\pm0.80$ & $90.95\pm0.46$ & $90.73\pm0.90$ & $91.45\pm1.42$ \\
\bottomrule
\end{tabular}
\caption{Search and selection size sensitivity on TabMWP and QuaRTz
(Q: Qwen; M: Ministral). Test accuracy (\%), mean $\pm$ sample SD over
three construction seeds. In (b), each seed contributes its mean over
1,000 paired subsets at 40 and 80, and one full-pool choice at 120;
original 80 reports the unchanged main-evaluation selection set.
$\Delta_{80\to120}$ compares paired search constructions.}
\label{tab:role-size-summary}
\end{table}

Increasing selection from resampled 40 to 80 improves mean selected
test accuracy by 0.16--0.99 points in all four settings. Nine of twelve
seed-level means increase; the SD across subsets within each fixed
archive decreases in eleven of twelve
(Table~\ref{tab:role-size-seeds}(b)). From resampled 80 to full 120,
Qwen gains 0.79 and 1.01 points on TabMWP and QuaRTz, whereas
Ministral loses 0.23 and 0.22 points. Relative to the \emph{original}
fixed 80, full 120 leaves Qwen's means unchanged and reduces
Ministral's by 0.70 and 0.72 points. The added questions therefore have
student-dependent effects. Subset variability is conditional on the
frozen archives and extended pool: for TabMWP/Qwen seed 20260715,
the paired 40-to-80 gain averages 1.48 points, with 5th--95th
percentiles of $-7.50$ to $7.50$ across draws. These percentiles
summarize subset variation within that pool. Together, the results
support retaining the existing 80-question configuration while
showing the different effects of search and selection size.

\begin{table}[!htbp]
\centering\small\setlength{\tabcolsep}{6pt}
\begin{tabular}{llrrrr}
\toprule
\multicolumn{6}{l}{\textbf{(a) Search size; selected test accuracy (\%)}} \\
Setting & Seed & 40 & Original 80 & 120 & $\Delta_{80\to120}$ (pp) \\
\midrule
TabMWP/Q & 0 & 65.40 & 65.40 & 65.40 & +0.00 \\
TabMWP/Q & 20260715 & 66.30 & 66.30 & 66.30 & +0.00 \\
TabMWP/Q & 20260609 & 77.80 & 74.40 & 65.70 & -8.70 \\
\addlinespace[2pt]
TabMWP/M & 0 & 91.20 & 91.70 & 91.40 & -0.30 \\
TabMWP/M & 20260715 & 63.80 & 88.70 & 85.30 & -3.40 \\
TabMWP/M & 20260609 & 91.40 & 92.10 & 61.00 & -31.10 \\
\addlinespace[2pt]
QuaRTz/Q & 0 & 83.93 & 79.21 & 79.21 & +0.00 \\
QuaRTz/Q & 20260715 & 80.10 & 80.10 & 80.10 & +0.00 \\
QuaRTz/Q & 20260609 & 82.53 & 85.46 & 85.46 & +0.00 \\
\addlinespace[2pt]
QuaRTz/M & 0 & 90.31 & 92.73 & 92.73 & +0.00 \\
QuaRTz/M & 20260715 & 92.22 & 89.92 & 89.92 & +0.00 \\
QuaRTz/M & 20260609 & 91.71 & 91.71 & 91.71 & +0.00 \\
\midrule
\multicolumn{6}{l}{\textbf{(b) Selection size; conditional subset variation}} \\
Setting & Seed & Resampled 40 & Resampled 80 & Full 120 & Original 80 \\
\midrule
TabMWP/Q & 0 & $65.10\pm1.36$ & $65.40\pm0.00$ & 65.40 & 65.40 \\
TabMWP/Q & 20260715 & $63.18\pm3.76$ & $64.66\pm3.10$ & 66.30 & 66.30 \\
TabMWP/Q & 20260609 & $72.49\pm3.04$ & $73.68\pm2.02$ & 74.40 & 74.40 \\
\addlinespace[2pt]
TabMWP/M & 0 & $90.03\pm1.06$ & $90.08\pm0.90$ & 89.60 & 91.70 \\
TabMWP/M & 20260715 & $87.96\pm2.11$ & $88.67\pm0.43$ & 88.70 & 88.70 \\
TabMWP/M & 20260609 & $92.39\pm0.69$ & $92.33\pm0.56$ & 92.10 & 92.10 \\
\addlinespace[2pt]
QuaRTz/Q & 0 & $79.20\pm2.02$ & $79.16\pm1.47$ & 79.21 & 79.21 \\
QuaRTz/Q & 20260715 & $80.12\pm1.49$ & $80.23\pm1.22$ & 80.10 & 80.10 \\
QuaRTz/Q & 20260609 & $81.24\pm2.54$ & $82.34\pm2.59$ & 85.46 & 85.46 \\
\addlinespace[2pt]
QuaRTz/M & 0 & $91.37\pm0.95$ & $90.98\pm0.68$ & 90.56 & 92.73 \\
QuaRTz/M & 20260715 & $89.88\pm1.45$ & $90.47\pm1.31$ & 89.92 & 89.92 \\
QuaRTz/M & 20260609 & $91.11\pm0.87$ & $91.39\pm0.67$ & 91.71 & 91.71 \\
\bottomrule
\end{tabular}
\caption{Per-seed results for Table~\ref{tab:role-size-summary}.
In (b), $\pm$ denotes the SD of selected test accuracy across the
1,000 subsets \emph{within one frozen archive}; Table~\ref{tab:role-size-summary}
instead reports SD across the three seed-level means. Full 120 and
original 80 each use one fixed selection set.}
\label{tab:role-size-seeds}
\end{table}

\FloatBarrier

\subsection{Archive and Selection Diagnostics}
\label{app:archive-diagnostic}

We retrospectively evaluate all 288 candidates in the 24 K2P archives
from three construction seeds: eight initial prompts and four valid,
search-evaluated revisions per archive. All 96 revisions are compared
with their actual parents, including proposals that did not improve
search agreement. The diagnostic uses the original test sets: 510
Tracking questions, 1,000 Colored Objects questions, 1,000 TabMWP
questions, and 784 QuaRTz questions. Gold answers score the frozen
prompts; construction decisions and deployed prompts remain fixed.

Figure~\ref{fig:selected-vs-best} compares K2P's selected and best
accuracy, and Table~\ref{tab:archive-selection-loss} reports their
within-seed differences before averaging across seeds.

\begin{table}[!htbp]
\centering\small
\begin{tabular}{lrrrr}
\toprule
Setting & Seed 0 & Seed 20260715 & Seed 20260609 & Mean $\pm$ SD \\
\midrule
Tracking/Q & 2.16 & 2.55 & 0.00 & 1.57 $\pm$ 1.37 \\
Tracking/M & 3.33 & 0.00 & 1.96 & 1.76 $\pm$ 1.68 \\
Colored/Q & 0.00 & 0.00 & 0.00 & 0.00 $\pm$ 0.00 \\
Colored/M & 1.20 & 2.70 & 1.70 & 1.87 $\pm$ 0.76 \\
TabMWP/Q & 0.00 & 0.00 & 0.00 & 0.00 $\pm$ 0.00 \\
TabMWP/M & 0.00 & 0.80 & 1.50 & 0.77 $\pm$ 0.75 \\
QuaRTz/Q & 3.06 & 2.93 & 0.00 & 2.00 $\pm$ 1.73 \\
QuaRTz/M & 0.00 & 2.04 & 0.00 & 0.68 $\pm$ 1.18 \\
\bottomrule
\end{tabular}
\caption{K2P selection loss in percentage points: archive-best minus
reserved-selected test accuracy, computed separately within each run.}
\label{tab:archive-selection-loss}
\end{table}

\paragraph{Search progress and test agreement.}
Table~\ref{tab:revision-transitions} pairs every revision's change in
search agreement with its change in test accuracy. The positive part
of the search change, expressed as a fraction, is the accepted gain
$G_t$ in Section~\ref{sec:global-construction}. Of the 27 parent updates,
21 improve test accuracy and six reduce it. Of the 69 proposals that
retain the parent, 17 improve test accuracy, 50 reduce it, and two tie.

Search optimism concerns the final search parent $p_R$, while reserved
selection determines the deployed prompt. Table~\ref{tab:search-test-agreement}
compares that parent's agreement on search and test questions.
Search agreement is higher in 22 of 24 runs. This comparison includes
sampling variation and differences between search and test populations.

\begin{table}[!htbp]
\centering\small\setlength{\tabcolsep}{4pt}
\begin{tabular}{lrrr}
\toprule
Setting & Search agreement & Test agreement & Difference (pp) \\
\midrule
Tracking/Q & 90.83 $\pm$ 6.17 & 86.54 $\pm$ 4.74 & 4.30 $\pm$ 1.53 \\
Tracking/M & 96.67 $\pm$ 0.72 & 93.33 $\pm$ 3.98 & 3.33 $\pm$ 4.56 \\
Colored/Q & 85.00 $\pm$ 3.31 & 80.23 $\pm$ 1.82 & 4.77 $\pm$ 3.64 \\
Colored/M & 98.75 $\pm$ 1.25 & 96.63 $\pm$ 1.77 & 2.12 $\pm$ 1.03 \\
TabMWP/Q & 71.67 $\pm$ 11.27 & 65.23 $\pm$ 7.57 & 6.43 $\pm$ 3.91 \\
TabMWP/M & 89.58 $\pm$ 4.02 & 90.10 $\pm$ 2.10 & -0.52 $\pm$ 2.19 \\
QuaRTz/Q & 89.17 $\pm$ 1.91 & 82.06 $\pm$ 3.75 & 7.11 $\pm$ 1.85 \\
QuaRTz/M & 97.92 $\pm$ 1.44 & 93.24 $\pm$ 1.02 & 4.68 $\pm$ 1.77 \\
\bottomrule
\end{tabular}
\caption{Final search-parent agreement (\%), mean $\pm$ sample SD over
three seeds. Differences are computed within each seed before aggregation.}
\label{tab:search-test-agreement}
\end{table}

\paragraph{Reference quality and ranking.}
All candidates, seeds, and students share the same diagnostic teacher
reference for each test question. The strict task parser is used
throughout. Invalid or truncated answers do not count as agreements;
the denominator remains the full diagnostic population. Tracking has
504 correct and 510 usable teacher references out of 510; Colored
Objects has 993 correct and 1,000 usable out of 1,000; TabMWP has
980 correct and 982 usable out of 1,000; QuaRTz has 738 correct and
784 usable out of 784. Teacher accuracy is therefore 98.82\%, 99.30\%,
98.00\%, and 94.13\%, respectively.

Agreement and accuracy deficits are taken relative to their respective
maxima within each archive. The diagnostic agreement choice maximizes
agreement on these common questions, breaking ties by generation order.
It attains the archive accuracy maximum in 23 of 24 runs; on seed 0
QuaRTz/Ministral it loses 1.02 points. In 22 archives, every agreement
maximizer also maximizes accuracy. Spearman correlations use tied ranks
and range from 0.880 to 1.000 across the 24 archives. These diagnostic
choices are separate from the original 80-question reserved selection.

\paragraph{Paired uncertainty for seed 0.}
For the eight seed 0 archives, we use 10,000 paired bootstrap draws with a fixed random seed. The unit
is the question for Tracking and Colored Objects, the table group for
TabMWP (992 groups), and the background group for QuaRTz (81 groups).
All candidate predictions and teacher references stay paired when a
group is resampled. Percentile intervals compare the test accuracy of prompts selected from
the full archive and from the initial bank, and compare each revision
with its actual parent. These intervals condition on the realized prompts. Figure~\ref{fig:selection-diagnostic}(b)
instead summarizes variation across constructions with sample standard deviations.

Selection loss is the highest test accuracy in the archive minus that
of the selected prompt. For each paired bootstrap draw, we recompute
every candidate's accuracy difference from the selected prompt and
subtract the corresponding observed difference. We take the largest
absolute deviation across candidates. The 95th percentile of these
maxima is the band radius, which we add to and subtract from the observed
selection loss, clipping the lower endpoint at zero. This accounts for
simultaneous candidate comparisons within each setting.
All eight bands include zero.

\begin{table}[H]
\centering
\small
\begin{tabular}{lrrr}
\toprule
Setting & Selected gain [95\% CI], pp & Selection loss [95\% band], pp & Spearman's $\rho$ \\
\midrule
Tracking/Q & 20.59 [15.88, 25.10] & 2.16 [0.00, 8.63] & 0.998 \\
Tracking/M & 0.00 [0.00, 0.00] & 3.33 [0.00, 7.84] & 0.988 \\
Colored/Q & 37.90 [34.40, 41.50] & 0.00 [0.00, 4.90] & 0.998 \\
Colored/M & -1.20 [-2.40, -0.10] & 1.20 [0.00, 3.50] & 0.979 \\
TabMWP/Q & 0.00 [0.00, 0.00] & 0.00 [0.00, 5.19] & 0.993 \\
TabMWP/M & 28.60 [25.47, 31.79] & 0.00 [0.00, 4.38] & 1.000 \\
QuaRTz/Q & 0.00 [0.00, 0.00] & 3.06 [0.00, 8.15] & 0.956 \\
QuaRTz/M & 0.00 [0.00, 0.00] & 0.00 [0.00, 2.70] & 0.916 \\
\bottomrule
\end{tabular}
\caption{Seed 0 conditional uncertainty and agreement--accuracy rank correlation.
Selected gain compares full-archive selection with initial-only selection.}
\label{tab:archive-uncertainty}
\end{table}

\begingroup\footnotesize\setlength{\tabcolsep}{3pt}
\begin{longtable}{llccrrrr}
\caption{All 96 revisions relative to their actual parents. Search and
test columns report agreement and accuracy changes (pp); repairs and
regressions count test questions. ``Update'' denotes a strict search
agreement increase. I and R index initial candidates and revisions.}
\label{tab:revision-transitions}\\
\toprule
Setting & Seed & Parent $\to$ revision & Update & Search & Repairs & Regressions & Test \\
\midrule
\endfirsthead
\toprule
Setting & Seed & Parent $\to$ revision & Update & Search & Repairs & Regressions & Test \\
\midrule
\endhead
Tracking/Q & 0 & I6 $\to$ R1 & Yes & 5.00 & 103 & 38 & 12.75 \\*
 &  & R1 $\to$ R2 & Yes & 3.75 & 46 & 26 & 3.92 \\*
 &  & R2 $\to$ R3 & Yes & 6.25 & 32 & 43 & -2.16 \\*
 &  & R3 $\to$ R4 & No & -2.50 & 38 & 38 & 0.00 \\
\addlinespace[2pt]
Tracking/Q & 20260715 & I7 $\to$ R1 & No & -1.25 & 96 & 78 & 3.53 \\*
 &  & I7 $\to$ R2 & Yes & 15.00 & 121 & 38 & 16.27 \\*
 &  & R2 $\to$ R3 & No & -2.50 & 26 & 29 & -0.59 \\*
 &  & R2 $\to$ R4 & No & -1.25 & 41 & 28 & 2.55 \\
\addlinespace[2pt]
Tracking/Q & 20260609 & I7 $\to$ R1 & No & -17.50 & 73 & 119 & -9.02 \\*
 &  & I7 $\to$ R2 & No & -3.75 & 92 & 106 & -2.75 \\*
 &  & I7 $\to$ R3 & Yes & 6.25 & 117 & 43 & 14.51 \\*
 &  & R3 $\to$ R4 & Yes & 6.25 & 48 & 70 & -4.31 \\
\addlinespace[2pt]
Tracking/M & 0 & I8 $\to$ R1 & No & 0.00 & 22 & 9 & 2.55 \\*
 &  & I8 $\to$ R2 & No & -21.25 & 22 & 87 & -12.75 \\*
 &  & I8 $\to$ R3 & Yes & 1.25 & 24 & 7 & 3.33 \\*
 &  & R3 $\to$ R4 & No & 0.00 & 3 & 5 & -0.39 \\
\addlinespace[2pt]
Tracking/M & 20260715 & I7 $\to$ R1 & No & -3.75 & 27 & 30 & -0.59 \\*
 &  & I7 $\to$ R2 & No & -6.25 & 27 & 28 & -0.20 \\*
 &  & I7 $\to$ R3 & No & -6.25 & 23 & 33 & -1.96 \\*
 &  & I7 $\to$ R4 & No & -6.25 & 27 & 33 & -1.18 \\
\addlinespace[2pt]
Tracking/M & 20260609 & I1 $\to$ R1 & No & -16.25 & 37 & 82 & -8.82 \\*
 &  & I1 $\to$ R2 & No & -2.50 & 41 & 36 & 0.98 \\*
 &  & I1 $\to$ R3 & No & -8.75 & 42 & 40 & 0.39 \\*
 &  & I1 $\to$ R4 & No & -35.00 & 20 & 212 & -37.65 \\
\addlinespace[2pt]
Colored/Q & 0 & I4 $\to$ R1 & Yes & 7.50 & 274 & 77 & 19.70 \\*
 &  & R1 $\to$ R2 & No & -1.25 & 101 & 221 & -12.00 \\*
 &  & R1 $\to$ R3 & Yes & 3.75 & 123 & 115 & 0.80 \\*
 &  & R3 $\to$ R4 & No & -2.50 & 114 & 77 & 3.70 \\
\addlinespace[2pt]
Colored/Q & 20260715 & I2 $\to$ R1 & No & -11.25 & 143 & 130 & 1.30 \\*
 &  & I2 $\to$ R2 & No & 0.00 & 248 & 120 & 12.80 \\*
 &  & I2 $\to$ R3 & No & -20.00 & 148 & 219 & -7.10 \\*
 &  & I2 $\to$ R4 & Yes & 18.75 & 316 & 86 & 23.00 \\
\addlinespace[2pt]
Colored/Q & 20260609 & I6 $\to$ R1 & No & -3.75 & 199 & 185 & 1.40 \\*
 &  & I6 $\to$ R2 & Yes & 5.00 & 289 & 124 & 16.50 \\*
 &  & R2 $\to$ R3 & No & -8.75 & 122 & 250 & -12.80 \\*
 &  & R2 $\to$ R4 & Yes & 16.25 & 181 & 109 & 7.20 \\
\addlinespace[2pt]
Colored/M & 0 & I1 $\to$ R1 & No & -2.50 & 16 & 28 & -1.20 \\*
 &  & I1 $\to$ R2 & No & -6.25 & 11 & 36 & -2.50 \\*
 &  & I1 $\to$ R3 & No & 0.00 & 16 & 36 & -2.00 \\*
 &  & I1 $\to$ R4 & No & -1.25 & 12 & 24 & -1.20 \\
\addlinespace[2pt]
Colored/M & 20260715 & I4 $\to$ R1 & No & -2.50 & 49 & 28 & 2.10 \\*
 &  & I4 $\to$ R2 & No & -3.75 & 41 & 72 & -3.10 \\*
 &  & I4 $\to$ R3 & No & -5.00 & 39 & 54 & -1.50 \\*
 &  & I4 $\to$ R4 & No & -3.75 & 44 & 74 & -3.00 \\
\addlinespace[2pt]
Colored/M & 20260609 & I3 $\to$ R1 & No & 0.00 & 46 & 51 & -0.50 \\*
 &  & I3 $\to$ R2 & Yes & 5.00 & 50 & 20 & 3.00 \\*
 &  & R2 $\to$ R3 & No & -3.75 & 18 & 31 & -1.30 \\*
 &  & R2 $\to$ R4 & No & -12.50 & 12 & 130 & -11.80 \\
\addlinespace[2pt]
TabMWP/Q & 0 & I5 $\to$ R1 & No & -48.75 & 64 & 466 & -40.20 \\*
 &  & I5 $\to$ R2 & No & -47.50 & 51 & 466 & -41.50 \\*
 &  & I5 $\to$ R3 & No & -50.00 & 59 & 494 & -43.50 \\*
 &  & I5 $\to$ R4 & No & -42.50 & 74 & 433 & -35.90 \\
\addlinespace[2pt]
TabMWP/Q & 20260715 & I1 $\to$ R1 & No & -27.50 & 106 & 379 & -27.30 \\*
 &  & I1 $\to$ R2 & No & -30.00 & 90 & 376 & -28.60 \\*
 &  & I1 $\to$ R3 & No & -33.75 & 110 & 343 & -23.30 \\*
 &  & I1 $\to$ R4 & No & -28.75 & 85 & 392 & -30.70 \\
\addlinespace[2pt]
TabMWP/Q & 20260609 & I5 $\to$ R1 & Yes & 8.75 & 215 & 104 & 11.10 \\*
 &  & R1 $\to$ R2 & No & -15.00 & 114 & 245 & -13.10 \\*
 &  & R1 $\to$ R3 & Yes & 13.75 & 141 & 119 & 2.20 \\*
 &  & R3 $\to$ R4 & No & -11.25 & 115 & 192 & -7.70 \\
\addlinespace[2pt]
TabMWP/M & 0 & I3 $\to$ R1 & Yes & 27.50 & 304 & 39 & 26.50 \\*
 &  & R1 $\to$ R2 & No & -1.25 & 61 & 65 & -0.40 \\*
 &  & R1 $\to$ R3 & No & -13.75 & 44 & 154 & -11.00 \\*
 &  & R1 $\to$ R4 & Yes & 2.50 & 67 & 46 & 2.10 \\
\addlinespace[2pt]
TabMWP/M & 20260715 & I1 $\to$ R1 & No & -7.50 & 134 & 90 & 4.40 \\*
 &  & I1 $\to$ R2 & Yes & 16.25 & 280 & 41 & 23.90 \\*
 &  & R2 $\to$ R3 & Yes & 6.25 & 102 & 33 & 6.90 \\*
 &  & R3 $\to$ R4 & No & -3.75 & 39 & 47 & -0.80 \\
\addlinespace[2pt]
TabMWP/M & 20260609 & I1 $\to$ R1 & Yes & 30.00 & 345 & 14 & 33.10 \\*
 &  & R1 $\to$ R2 & Yes & 2.50 & 38 & 34 & 0.40 \\*
 &  & R2 $\to$ R3 & No & -1.25 & 38 & 26 & 1.20 \\*
 &  & R2 $\to$ R4 & Yes & 1.25 & 40 & 25 & 1.50 \\
\addlinespace[2pt]
QuaRTz/Q & 0 & I6 $\to$ R1 & No & -3.75 & 61 & 75 & -1.79 \\*
 &  & I6 $\to$ R2 & No & -11.25 & 75 & 148 & -9.31 \\*
 &  & I6 $\to$ R3 & No & -12.50 & 60 & 117 & -7.27 \\*
 &  & I6 $\to$ R4 & Yes & 2.50 & 63 & 87 & -3.06 \\
\addlinespace[2pt]
QuaRTz/Q & 20260715 & I5 $\to$ R1 & No & -6.25 & 101 & 59 & 5.36 \\*
 &  & I5 $\to$ R2 & No & -6.25 & 85 & 77 & 1.02 \\*
 &  & I5 $\to$ R3 & No & -20.00 & 75 & 132 & -7.27 \\*
 &  & I5 $\to$ R4 & No & -31.25 & 65 & 189 & -15.82 \\
\addlinespace[2pt]
QuaRTz/Q & 20260609 & I6 $\to$ R1 & No & -1.25 & 57 & 93 & -4.59 \\*
 &  & I6 $\to$ R2 & No & -1.25 & 54 & 76 & -2.81 \\*
 &  & I6 $\to$ R3 & Yes & 3.75 & 57 & 68 & -1.40 \\*
 &  & R3 $\to$ R4 & No & -11.25 & 76 & 71 & 0.64 \\
\addlinespace[2pt]
QuaRTz/M & 0 & I2 $\to$ R1 & No & -1.25 & 23 & 40 & -2.17 \\*
 &  & I2 $\to$ R2 & Yes & 2.50 & 22 & 26 & -0.51 \\*
 &  & R2 $\to$ R3 & No & -1.25 & 22 & 25 & -0.38 \\*
 &  & R2 $\to$ R4 & No & -2.50 & 27 & 28 & -0.13 \\
\addlinespace[2pt]
QuaRTz/M & 20260715 & I7 $\to$ R1 & No & -2.50 & 58 & 58 & 0.00 \\*
 &  & I7 $\to$ R2 & No & 0.00 & 58 & 28 & 3.83 \\*
 &  & I7 $\to$ R3 & Yes & 1.25 & 47 & 27 & 2.55 \\*
 &  & R3 $\to$ R4 & No & -1.25 & 16 & 28 & -1.53 \\
\addlinespace[2pt]
QuaRTz/M & 20260609 & I2 $\to$ R1 & No & -1.25 & 26 & 50 & -3.06 \\*
 &  & I2 $\to$ R2 & No & -5.00 & 25 & 47 & -2.81 \\*
 &  & I2 $\to$ R3 & Yes & 1.25 & 22 & 30 & -1.02 \\*
 &  & R3 $\to$ R4 & No & -3.75 & 32 & 31 & 0.13 \\
\bottomrule
\end{longtable}
\endgroup

\begin{figure}[!htbp]
\centering
\includegraphics[width=\linewidth]{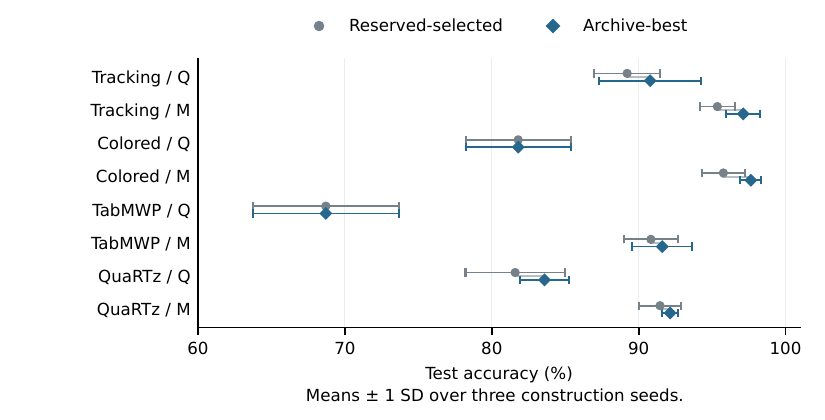}
\caption{K2P reserved-selected and archive-best test accuracy across
three construction seeds (Q: Qwen; M: Ministral). Points and bars show
means and sample standard deviations. Their within-run difference is
the selection loss in Figure~\ref{fig:selection-diagnostic}(b).}
\label{fig:selected-vs-best}
\end{figure}
\FloatBarrier

\section{Theoretical Analysis}
\label{app:theory}

Throughout this appendix, $\gamma$ denotes agreement--accuracy
discrimination, $\kappa$ the empirical progress rate, and $d_t$ the
parent's empirical agreement deficit. The identification margin is
$\Delta_{\rm ag}$, archive contraction uses $\theta$, and the scoring
correction is $\chi(p)$. Search and reserved quantities use subscripts
$s$ and $v$, respectively.

Appendix~\ref{app:robust-global-proof} proves the global agreement
bound and its sampling and progress conditions.
Appendix~\ref{app:discrimination} translates it into accuracy through
the discrimination condition, derives the noisy-reference example,
and states resource conditions for consistency and exact attainment.
Appendix~\ref{app:numerical-bias} gives a complementary comparison using
score offsets. Appendix~\ref{app:finite-repair} gives a finite-rule
specialization with repair and damage.
Appendices~\ref{app:finite-rule} and~\ref{app:finite-mass}
give additional results under rule-preserving proposals.
The subsequent sections analyze relative agreement bias and
population-level archive progress, with complete statements and proofs.

\subsection{Construction Agreement and Accuracy Guarantees}
\label{app:robust-global-proof}

\subsubsection{Evaluation law and averaged progress shortfalls}

Fix the nonempty admissible space $\Pi$ from Equation 1 before
construction data are observed. Every initial candidate and valid,
fully evaluated revision belongs to $\Pi$. Population scores $J(p)$
and $\widetilde J(p)$ use the actual student and the stated task and agreement
policies.

For analysis, define a common joint law of the cached search scores
$\{\widehat J_s(p):p\in\Pi\}$, including scores of prompts that
are never queried. This is a probabilistic representation of fixed
evaluation policies, not a requirement to run additional evaluations.
It can use common question-level randomness across prompts, or
independent evaluation randomness, provided that for every fixed $p$,
$\mathbb E[\widehat J_s(p)]=\widetilde J(p)$. The algorithm observes only queried
scores and reuses them. An adaptive choice of a prompt need not have
an unbiased score. A valid but incompletely evaluated proposal is not
assigned a completed score by the algorithm and has $G_t=0$.
The countable space of length-bounded token strings makes the suprema
measurable; one may also use a finite fixed family.

Let $\mathcal F^{\rm prop}_{t-1}$ contain the construction information
available after the feedback batch is formed and before slot $t$'s
proposal randomness is drawn, excluding reserved observations.
Set
\[
\bar d_{t-1}=\mathbb E[d_{t-1}\mid\mathcal F^{\rm prop}_{t-1}],
\qquad
m_t=\mathbb E[G_t\mid\mathcal F^{\rm prop}_{t-1}],
\qquad
s_t=[\kappa\bar d_{t-1}-m_t]_+.
\]
A canonical choice is
$\varepsilon_t=\mathbb E[s_t]$; any deterministic upper bound on
this expectation is also valid. Conditional expectation is used for
$d_{t-1}$ because unqueried cached-score randomness need not be known
in $\mathcal F^{\rm prop}_{t-1}$. With deterministic scores measurable from the
search inputs, $\bar d_{t-1}=d_{t-1}$.
Taking expectations of $m_t\ge\kappa\bar d_{t-1}-s_t$ gives
\[
\mathbb E[G_t]\ge\kappa\mathbb E[d_{t-1}]-\varepsilon_t.
\]
This construction expresses proposal effectiveness as a shortfall
from the reference rate. The repair--damage conditions below provide
one way to bound its magnitude.

For clarity, the empirical transition quantities for a completed
proposal are
\[
\widehat U_t=\frac1{n_s}\sum_i
(1-\widetilde c_i(p_{t-1}))\widetilde c_i(q_t),\qquad
\widehat L_t=\frac1{n_s}\sum_i
\widetilde c_i(p_{t-1})(1-\widetilde c_i(q_t)).
\]
Their difference is $\widehat J_s(q_t)-\widehat J_s(p_{t-1})$ for
arbitrary binary agreement scores, including invalid or truncated
responses scored as zero. The identity
$G_t=[\widehat U_t-\widehat L_t]_+$ therefore uses no correctness
assumption. Failed, incomplete, or skipped slots have all three
transition quantities zero.

\subsubsection{Proof of the global bound}

\begin{proof}[Proof of Theorem~\ref{thm:robust-global}]
\textbf{Empirical optimization.}
Strict empirical improvement updates the parent, and all other cases
leave it fixed. Thus
$d_t=d_{t-1}-G_t\le d_{t-1}$ pathwise.
The averaged shortfall inequality gives
\[
\mathbb E[d_t]\le
(1-\kappa)\mathbb E[d_{t-1}]+\varepsilon_t.
\]
We take empty sums as zero, empty products as one, and
$(1-\kappa)^0=1$. Iteration and the pathwise bound $d_R\le d_0$ yield
$\mathbb E[d_R]\le B_R$, including $R=0$.
No independence of proposals, repairs, or damages is required.

\textbf{Population agreement of the parent.}
For every fixed $p\in\Pi$,
$\widehat J_{\Pi,s}^*\ge\widehat J_s(p)$.
Unbiasedness and then a supremum over $p$ imply
$\mathbb E[\widehat J_{\Pi,s}^*]\ge \widetilde J_\Pi^*$.
Consequently,
\[
\begin{aligned}
\mathbb E[\widetilde J_\Pi^*-\widetilde J(p_R)]
={}&\widetilde J_\Pi^*-\mathbb E[\widehat J_{\Pi,s}^*]
+\mathbb E[d_R]\\
&+\mathbb E[\widehat J_s(p_R)-\widetilde J(p_R)]
\le B_R+\Gamma_s.
\end{aligned}
\]
The term $\Gamma_s$ accounts for the optimism introduced by
choosing the parent on the reused search observations.

\textbf{Independent reserved selection.}
Condition on the complete construction history, excluding all reserved
observations, so that $\mathcal C$ is fixed. Let $p^{\widetilde J}$ maximize $\widetilde J$
over $\mathcal C$, using a fixed tie rule for this analytical
comparison. Set $Z_p=\widehat J_v(p)-\widetilde J(p)$ using
the reserved score defined in Section~\ref{sec:k2p-agreement}.
Equation 4 gives
\[
\max_{p\in\mathcal C}\widetilde J(p)-\widetilde J(p_S^*)
\le\max_{p\in\mathcal C}Z_p-Z_{p^{\widetilde J}}.
\]
Each $Z_p$ has conditional mean zero. Hoeffding's lemma
(\citealp{hoeffding1963}) gives the conditional moment bound
$\mathbb E[e^{\lambda Z_p}\mid\mathcal F_{\rm con}]
\le e^{\lambda^2/(8n_v)}$ for $\lambda>0$, where
$H_{\rm con}$ denotes the complete construction history and
$\mathcal F_{\rm con}$ its generated sigma-field.
Dependence of different prompts' scores on the same questions and
teacher references is allowed. The log-sum-exp bound yields
\[
\mathbb E[\max_{p\in\mathcal C}Z_p\mid\mathcal F_{\rm con}]
\le\frac{\log K}{\lambda}+\frac{\lambda}{8n_v}.
\]
Minimizing over $\lambda$ gives $\sqrt{\log K/(2n_v)}$; if $K=1$,
selection loss is zero directly.
Since $p_R\in\mathcal C$,
\[
\mathbb E[\widetilde J_\Pi^*-\widetilde J(p_S^*)]
\le B_R+\Gamma_s+\sqrt{\frac{\log K}{2n_v}}.
\]
The best fixed candidate's centered error has expectation zero.
This explains the absence of a factor of two in this expectation
argument, whereas Equation~\eqref{eq:selection-comparison} uses a realized uniform absolute
deviation.

\end{proof}

\subsubsection{Discrimination, noisy references, and consistency}
\label{app:discrimination}

\paragraph{From agreement to accuracy.}
Suppose deterministic constants $\gamma>0$ and $\omega\ge0$ satisfy
\[
\gamma[J_\Pi^*-J(p)]\le \widetilde J_\Pi^*-\widetilde J(p)+\omega
\quad\text{for every }p\in\Pi.
\]
The constants describe the relation between population scores over
the fixed admissible space. In particular, when $\omega=0$, every
agreement maximizer is an accuracy maximizer, and a small agreement
deficit controls the accuracy deficit even if the optima are suprema.

\begin{proof}[Proof of Corollary~\ref{cor:discrimination}]
Apply the condition to the random output $p_S^*\in\Pi$ and take
expectations. Theorem~\ref{thm:robust-global}
gives
\[
\gamma\,\mathbb E[J_\Pi^*-J(p_S^*)]
\le\mathbb E[\widetilde J_\Pi^*-\widetilde J(p_S^*)]+\omega
\le Q_R+\omega.
\]
Divide by $\gamma$ and use $0\le J_\Pi^*-J(p_S^*)\le1$.
For fixed $\gamma>0$, $\omega=0$, and $Q_R\to0$, the expected
accuracy gap therefore tends to zero. For each $a>0$, Markov's
inequality also gives
\[
\Pr\{J_\Pi^*-J(p_S^*)\ge a\}
\le\min\left\{1,\frac{Q_R+\omega}{\gamma a}\right\}.
\]
Thus convergence of the expected gap implies convergence in
probability to the optimal value.
\end{proof}

\paragraph{An affine sufficient condition with residual variation.}
Suppose, for all $p\in\Pi$,
\[
\widetilde J(p)=c+\gamma J(p)+r(p),\qquad \gamma>0,
\qquad \sup_{p\in\Pi}r(p)-\inf_{p\in\Pi}r(p)\le\omega.
\]
Taking a sequence of prompts whose accuracy approaches $J_\Pi^*$
shows that $\widetilde J_\Pi^*\ge c+\gamma J_\Pi^*+\inf_\Pi r$.
Consequently, for every $p$,
\[
\widetilde J_\Pi^*-\widetilde J(p)
\ge\gamma[J_\Pi^*-J(p)]+\inf_\Pi r-r(p)
\ge\gamma[J_\Pi^*-J(p)]-\omega.
\]
This proves the discrimination condition from a relation between the
two scores. For instance, a uniform deviation $|r(p)|\le\tau$
permits $\omega=2\tau$. Both a common offset and positive rescaling
are compatible with zero residual variation.

\paragraph{A binary teacher with independent errors.}
Let $Y\in\{0,1\}$ be the true answer and $S_p\in\{0,1\}$ the
student's actual answer under prompt $p$. All answers in this model
are valid and untruncated, and correctness is $\mathbf1\{S_p=Y\}$.
The student may be stochastic and may err under every admissible
prompt. Let the teacher reference be $\widetilde Y=Y\mathbin\oplus E$.
Assume $\Pr(E=1\mid X,Y)=\eta_T\in[0,1/2)$. For each fixed prompt $p$,
$E$ is independent of $S_p$ conditional on $X,Y$.
Conditioning on $X,Y$ and then taking expectations gives
\[
\begin{aligned}
\Pr(S_p=\widetilde Y\mid X,Y)
&=\eta_T+(1-2\eta_T)\Pr(S_p=Y\mid X,Y),\\
\widetilde J(p)&=\eta_T+(1-2\eta_T)J(p).
\end{aligned}
\]
Positive slope gives
\[
\widetilde J_\Pi^*-\widetilde J(p)=(1-2\eta_T)[J_\Pi^*-J(p)].
\]
Hence Equation~\eqref{eq:discrimination} holds with
$\gamma=1-2\eta_T$ and $\omega=0$. At $\eta_T=0.2$, the slope is $0.6$,
so Equation 8 yields an accuracy-gap bound of $\min\{1,Q_R/0.6\}$.
The reference-noise model supplies the score relation, while the
progress and sampling conditions determine $Q_R$.

\paragraph{Progress and joint resource growth.}
Let $a=1-\kappa\in[0,1)$. For a fixed construction process indexed
by refinement slot, with deterministic shortfall bounds $\varepsilon_t\to0$,
\[
B_R\le a^R D_0+\sum_{t=1}^R a^{R-t}\varepsilon_t\longrightarrow0.
\]
To see this, choose $T$ such that $\varepsilon_t\le\delta$ for
$t>T$. The finitely many terms up to $T$ vanish as $R\to\infty$,
and the remaining sum is at most $\delta/\kappa$. Letting
$\delta\downarrow0$ proves the claim. For $\kappa=1$, the sum is
simply $\varepsilon_R$ once $R\ge1$.

For a sequence of configured runs indexed by $\ell$, with changing
search sample sizes and $R_\ell\to\infty$, one explicit uniform
condition is
\[
\varepsilon_{t,\ell}\le\bar\varepsilon_t+r_\ell,
\qquad \bar\varepsilon_t\to0,\qquad r_\ell\to0,
\qquad 1\le t\le R_\ell,
\]
with a fixed $\kappa\in(0,1]$. Since $D_{0,\ell}\le1$,
\[
B_{R_\ell}\le
a^{R_\ell}+\sum_{t=1}^{R_\ell}a^{R_\ell-t}\bar\varepsilon_t
+\frac{r_\ell}{\kappa}\longrightarrow0.
\]
Together with $\Gamma_{s,\ell}\to0$ and
$\log K_\ell/n_{v,\ell}\to0$, this gives $Q_{R_\ell}\to0$.
Under independent-group evaluation, the reserved condition becomes
$v_{v,\ell}\log K_\ell\to0$. For a fixed finite prompt space of
size $N$, the search bound below tends to zero as $n_{s,\ell}\to\infty$,
or as $v_{s,\ell}\to0$ in the grouped version.

If instead $\varepsilon_t\le\bar\varepsilon_{\rm ref}$, summing
the geometric weights gives
\[
B_R\le\min\left\{D_0,\,
a^R D_0+\frac{\bar\varepsilon_{\rm ref}}{\kappa}(1-a^R)\right\}.
\]
With uniform bounds across runs, fixed discrimination constants,
and vanishing sampling errors, Equation 8 gives
\[
\limsup\mathbb E[J_\Pi^*-J(p_S^*)]
\le\min\left\{1,
\frac{\bar\varepsilon_{\rm ref}/\kappa+\omega}{\gamma}\right\}.
\]
This upper bound shows how persistent progress shortfalls and
residual mismatch near the optimum affect the accuracy guarantee.

\paragraph{Exact attainment in a finite class.}
If $\Pi$ is finite and contains a suboptimal prompt, define the
positive accuracy gap
\[
\Delta_J=\min_{p\in\Pi:\,J(p)<J_\Pi^*}
\{J_\Pi^*-J(p)\}>0.
\]
Every suboptimal output has deficit at least $\Delta_J$, so
\[
\Pr\{J(p_S^*)<J_\Pi^*\}
\le\min\left\{1,\frac{Q_R+\omega}{\gamma\Delta_J}\right\}.
\]
For fixed $\gamma>0$, $\omega=0$, and $Q_R\to0$, the probability
of selecting a globally optimal prompt tends to one. If all prompts
are optimal, this probability is one directly. The benchmark
$J_\Pi^*$ can be strictly below one. For an infinite space without a
positive gap, the preceding expected-gap and approximation-in-probability
statements provide the corresponding conclusion.

\subsubsection{A complementary bound using score offsets}
\label{app:numerical-bias}

Numerical differences between agreement and accuracy give another
way to bound the final accuracy gap. Define
\[
b(p)=J(p)-\widetilde J(p),\qquad
\rho_\Pi=\sup_{p\in\Pi}b(p)-\inf_{p\in\Pi}b(p).
\]
The span $\rho_\Pi$ measures variation in score offsets across
admissible prompts. Under Theorem~\ref{thm:robust-global}, it gives
\[
\mathbb E[J_\Pi^*-J(p_S^*)]
\le\min\{1,Q_R+\rho_\Pi\}.
\]
For every $q\in\Pi$,
\[
\begin{aligned}
J_\Pi^*-J(q)
&\le \widetilde J_\Pi^*-\widetilde J(q)+\sup_{p\in\Pi}b(p)-b(q)\\
&\le \widetilde J_\Pi^*-\widetilde J(q)+\rho_\Pi.
\end{aligned}
\]
The inequalities hold for suprema. Taking expectations at the final
prompt and using Equation 6 proves the bound, with accuracy loss
capped at one. This comparison introduces the bias span only once.

\paragraph{Numerical bias and preservation of the optimum.}
A positive affine relation $\widetilde J=c+\gamma J$ preserves every ranking,
while $b=(1-\gamma)J-c$ can vary across prompts. Thus the numerical
span can be positive even when the agreement optimum identifies the
accuracy optimum. In the binary reference-noise example with
$\eta_T=0.2$, if $\sup_\Pi J=0.85$ and $\inf_\Pi J=0.55$, then
\[
b(p)=0.4J(p)-0.2,\qquad
\rho_\Pi=0.4(0.85-0.55)=0.12.
\]
The additive bound retains $0.12$ as $Q_R\to0$, while
Equation~\eqref{eq:calibrated-global} tends to zero because
$\gamma=0.6$ and $\omega=0$. The discrimination condition uses the
relation between deficits from the two optima to accommodate such
changes of scale. A numerical span bound remains useful when the
available information instead directly controls $J-\widetilde J$.

With shortfalls bounded by $\bar\varepsilon_{\rm ref}$ uniformly
across runs and vanishing sampling errors, the additive comparison
also gives
\[
\limsup\mathbb E[J_\Pi^*-J(p_S^*)]
\le\min\left\{1,\rho_\Pi+
\frac{\bar\varepsilon_{\rm ref}}{\kappa}\right\}.
\]
This is the residual upper bound from the comparison using score offsets.

\paragraph{Proxy error and its interpretation.}
If teacher and student answers are valid, policies are matched, and
agreement with a correct teacher reference equals task correctness,
let $\varepsilon_T=\Pr\{a(T(x))\ne y\}$ be the teacher error probability.
Then $|J(p)-\widetilde J(p)|\le\varepsilon_T$ for every fixed prompt.
This follows pointwise because the two indicators can differ only
on a teacher error. Hence $\rho_\Pi\le2\varepsilon_T$.
With validity or scoring mismatches, use the exact $b=J-\widetilde J$ instead.
A bias shared across all prompts gives $\rho_\Pi=0$ even
when $\widetilde J(p)\ne J(p)$.

Student limitations are included in $J_\Pi^*$ and in every prompt's
actual score. The comparison therefore measures the accuracy
recoverable through prompt choice for that student. The global proxy
span is taken over $\Pi$; the archive analysis below uses the span
over the particular candidates retained in $\mathcal C$.

\subsubsection{Sampling bounds, difficult states, and stopping}

\paragraph{Finite families and independent groups.}
Suppose $N=|\Pi|<\infty$ and search scores average $n_s$
independent, population-matched observations. For
$Z_{s,p}=\widehat J_s(p)-\widetilde J(p)$, the same moment argument gives
\[
\Gamma_s
\le\mathbb E\left[\max_{p\in\Pi}Z_{s,p}\right]
\le\sqrt{\frac{\log N}{2n_s}}.
\]
The expectation of the maximum is nonnegative because every fixed
prompt's score error has mean zero. The maximum ranges over the
space $\Pi$ fixed before search, so this bound depends on its
size $N$. For $N=1$, $\Gamma_s=0$.

More generally, let the search and reserved scores average independent
groups with fixed nonnegative weights $w_{s,g}$ and $w_{v,g}$ summing
to one. Within-group questions may be dependent. Assume the weighted
expected scores equal $\widetilde J(p)$ for each fixed prompt, and reserved
groups and evaluation randomness are independent of construction.
Writing $v_s=\sum_gw_{s,g}^2$ and $v_v=\sum_gw_{v,g}^2$, weighted
Hoeffding gives
\[
\Gamma_s\le\sqrt{\frac{v_s\log N}{2}},
\qquad
\mathbb E[\max_{\mathcal C}\widetilde J-\widetilde J(p_S^*)]
\le\sqrt{\frac{v_v\log K}{2}}.
\]
Use the latter term in Equation 6 and the same weights for empirical
progress. Question-weighted scoring gives group weights proportional
to group sizes. Random sizes require conditioning that preserves
group independence and population matching, as in
Appendix~\ref{app:statistical-units}.

\paragraph{A repair--damage sufficient condition with difficult states.}
Let $\mathcal E_t^{\rm bad}\in\mathcal F^{\rm prop}_{t-1}$ designate difficult feedback or
proposal states. Outside $\mathcal E_t^{\rm bad}$, suppose
\[
\begin{aligned}
\mathbb E[\widehat U_t\mid\mathcal F^{\rm prop}_{t-1}]
&\ge\alpha\bar d_{t-1}-u_t,\\
\mathbb E[\widehat L_t\mid\mathcal F^{\rm prop}_{t-1}]
&\le\beta\bar d_{t-1}+h_t,
\end{aligned}
\qquad
\kappa=\alpha-\beta>0,\quad \alpha\le1,\quad \beta,u_t,h_t\ge0.
\]
There is no progress assumption on $\mathcal E_t^{\rm bad}$.
Because $[x]_+\ge x$, on its complement
$m_t\ge\kappa\bar d_{t-1}-(u_t+h_t)$.
Everywhere, $m_t\ge0$ and $0\le\bar d_{t-1}\le1$.
Thus, with $\nu_t=u_t+h_t$,
\[
\mathbb E[s_t]
\le\nu_t\Pr((\mathcal E_t^{\rm bad})^c)
+\kappa\mathbb E[\bar d_{t-1}\mathbf1_{\mathcal E_t^{\rm bad}}]
\le\nu_t+\kappa\Pr(\mathcal E_t^{\rm bad}).
\]
A bound $\Pr(\mathcal E_t^{\rm bad})\le\pi_t^{\rm bad}$ therefore permits
$\varepsilon_t=\nu_t+\kappa\pi_t^{\rm bad}$.
This combines repair and damage in productive states with the
probability of difficult states, including histories with zero
expected gain. The event is defined from pre-proposal information.

Repeated proposals at fixed feedback states estimate $m_t$ and the
transition moments using label-free scores. In a controlled finite
task with a computable optimum over $\Pi$, they also permit
measurement of $d_{t-1}$ and progress shortfalls. Aggregation over
construction histories then targets $\mathbb E[s_t]$.

\paragraph{No-progress and stopping cases.}
If $G_t=0$ almost surely at every slot, then $d_t=d_0$.
The average progress condition requires
$\varepsilon_t\ge\kappa D_0$ for each $t$. Consequently, the
second entry of the minimum defining $B_R$ is at least
$(1-\kappa)^R D_0+D_0[1-(1-\kappa)^R]=D_0$,
so $B_R=D_0$ for every $R$.
On a stopped history, $m_t=0$ and $s_t=\kappa\bar d_{t-1}$.
The remaining slots retain this shortfall. Invalid proposals and
incomplete evaluations enter in the same way. Almost-sure completion
of initial construction and the prescribed reserved comparison
ensures that the final output is defined under the sampling law
used in the theorem.

For deterministic reference rates $\kappa_t\in[0,1]$ and deterministic
averaged shortfall bounds, the general recursion is
\[
\mathbb E[d_R]\le
D_0\prod_{t=1}^R(1-\kappa_t)
+\sum_{t=1}^R\varepsilon_t\prod_{i=t+1}^R(1-\kappa_i).
\]
It may again be capped by $D_0$. A deterministically skipped slot can
use $\kappa_t=\varepsilon_t=0$. Deterministic rates place the
coefficients outside the expectations in each recursion step.

\subsubsection{A finite-rule specialization with repair and damage}
\label{app:finite-repair}

In this idealized task model, every admissible prompt acts through
a subset of task rules, and every subset has an admissible
representation. Multiple prompt texts may have the same behavior.
Correct references and the execution law below give $\widetilde J=J$, so
Equation~\eqref{eq:discrimination} holds with $\gamma=1,\omega=0$.
The full-rule prompt establishes $J_\Pi^*=1$ and
$\widehat J_{\Pi,s}^*=1$, so $d_t=e_t$.
The conditional moments permit $\varepsilon_t=u+h$, and mode
frequencies provide a structural bound on $\Gamma_s$. These quantities
instantiate Theorem~\ref{thm:robust-global} and
Corollary~\ref{cor:discrimination} in the finite-rule regime.

\paragraph{A finite-rule task model.}
Inputs belong to $d$ disjoint task modes with probabilities $\pi_j>0$,
summing to one. Each mode has an executable rule, and the frozen
student answers inputs in that mode correctly exactly when its rule
is present in the prompt. Generated prompts specify subsets of these
rules. Rules do not interfere, and all $d$ rules fit in an admissible
prompt. The full-rule prompt therefore attains accuracy one, proving
$J_\Pi^*=1$ over $\Pi$. Teacher reference answers are valid and correct;
agreement and task correctness coincide under the model's evaluation
policies. A revision may add rules and remove existing ones. Its useful
and harmful effects enter the analysis separately.

\paragraph{Repair, damage, and the actual parent update.}
Write $\widetilde c_i(p)$ for the agreement indicator in Equation 2 on search
question $i$, and let
$e_t=1-\widehat J_s(p_t)$ be the
current parent's search disagreement rate. For a valid, fully
evaluated proposal $q_t$, define
\[
\widehat U_t=\frac{1}{n_s}\sum_{i=1}^{n_s}
(1-\widetilde c_i(p_{t-1}))\widetilde c_i(q_t),\qquad
\widehat L_t=\frac{1}{n_s}\sum_{i=1}^{n_s}
\widetilde c_i(p_{t-1})(1-\widetilde c_i(q_t)).
\]
These are the fractions of search questions changed from disagreement
to agreement and from agreement to disagreement, respectively. Their
difference is the proposal's change in search agreement. Equation 3
therefore gives the exact identity
\[
e_t=e_{t-1}-[\widehat U_t-\widehat L_t]_+.
\]
A revision can replace the parent while damaging some previously
successful cases, provided its total repair exceeds that damage.
Ties retain the parent. For a failed or skipped slot, set
$\widehat U_t=\widehat L_t=0$; this represents its actual zero change.

Let $\mathcal F^{\rm prop}_{t-1}$ contain the information used in construction
after feedback formation and before the round's proposal randomness,
including the reused search set and current parent, but excluding
reserved observations. Suppose,
at every such history,
\[
\begin{aligned}
\mathbb E[\widehat U_t\mid\mathcal F^{\rm prop}_{t-1}]
&\ge\alpha e_{t-1}-u,\\
\mathbb E[\widehat L_t\mid\mathcal F^{\rm prop}_{t-1}]
&\le\beta e_{t-1}+h,
\end{aligned}
\qquad 0\le\beta<\alpha\le1,\quad u,h\ge0.
\]
The correction condition describes how much of the current disagreement
the feedback and proposal process can repair on average; $u$ permits
persistent shortfalls, including incomplete exposure. The damage
condition permits both a component proportional to the remaining
disagreement and a residual component $h$. These expectations include
unsuccessful slots and allow dependence between correction, damage,
and previous revisions. Solution quality affects these quantities;
the conditions do not require every solution to yield a useful revision.

\begin{proposition}[Finite-rule specialization with repair and damage]
\label{prop:finite-repair}
Under this task model and the conditional moment bounds above, suppose
initial construction and final selection complete almost surely.
Search inputs are $n_s$ independent draws from the task population;
reserved inputs are $n_v$ independent draws, independent of construction.
Let $M$ be the initial bank size, $R$ the refinement-slot limit, and
$K_{\rm beh}=\min\{M+R,2^d\}$ bound the distinct agreement behaviors in the archive.
Set $\kappa=\alpha-\beta$, $\nu=u+h$, and $\bar e_0=\mathbb E[e_0]$.
With $(1-\kappa)^0=1$, define
\[
B_R=\min\!\left\{\bar e_0,
(1-\kappa)^R\bar e_0+
\frac{\nu}{\kappa}\bigl[1-(1-\kappa)^R\bigr]\right\}.
\]
The prompt returned by Equation 4 satisfies
\[
\begin{aligned}
\mathbb E[J_\Pi^*-J(p_S^*)]\le\min\Bigl\{1,\;&
\underbrace{B_R}_{\substack{\text{bound on remaining}\\\text{search disagreement}}}+
\underbrace{\frac12\sqrt{\frac{d-1}{n_s}}}_{\text{search--population discrepancy}}\\
&+\underbrace{\sqrt{\frac{\log K_{\rm beh}}{2n_v}}}_{\text{reserved selection}}\Bigr\}.
\end{aligned}
\]
\end{proposition}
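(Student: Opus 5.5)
The plan is to instantiate Theorem~\ref{thm:robust-global} and Corollary~\ref{cor:discrimination} term by term in the finite-rule model, taking $\gamma=1$ and $\omega=0$, and then to sharpen the two sampling terms using the mode structure.

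\textbf{Identification and the deficit.} Because references are correct and agreement coincides with correctness, $\widetilde J=J$ on $\Pi$. Hence the discrimination condition \eqref{eq:discrimination} holds with $\gamma=1$ and $\omega=0$. Equation~\eqref{eq:calibrated-global} then reduces the claim to bounding $\mathbb E[\widetilde J_\Pi^*-\widetilde J(p_S^*)]$ by the three displayed terms, with the cap at one. The full-rule prompt answers every search question correctly, so $\widehat J_{\Pi,s}^*=1$ pathwise and $d_t=e_t$. Each fixed prompt's search score is an average of $n_s$ i.i.d.\ indicators with mean $\widetilde J(p)$, so it is unbiased, as Theorem~\ref{thm:robust-global} requires.

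\textbf{Progress term.} I would apply the repair--damage sufficient condition with the difficult event taken empty. Since $[x]_+\ge x$, the conditional moment bounds give
\[
m_t\ge\mathbb E[\widehat U_t-\widehat L_t\mid\mathcal F^{\rm prop}_{t-1}]\ge(\alpha-\beta)e_{t-1}-(u+h).
\]
Here $e_{t-1}$ is $\mathcal F^{\rm prop}_{t-1}$-measurable, because the search set and the parent's cached scores are known there, so $\bar d_{t-1}=e_{t-1}$. Failed slots have $\widehat U_t=\widehat L_t=0$ and are already covered by the averaged moment bounds. This gives $\varepsilon_t=\nu$. Summing the geometric weights, $\sum_{t=1}^R(1-\kappa)^{R-t}\nu=\frac{\nu}{\kappa}[1-(1-\kappa)^R]$, which produces the stated $B_R$ together with the cap $\bar e_0$ from the pathwise monotonicity $d_R\le d_0$.

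\textbf{Search optimism via mode frequencies.} A prompt acting through rule set $A\subseteq\{1,\dots,d\}$ has $\widetilde J=\pi(A)$ and $\widehat J_s=\hat\pi(A)$, where $\hat\pi$ is the empirical mode distribution on the search set. Since $p_R$ realizes some subset,
\[
\Gamma_s\le\mathbb E\bigl[\max_A\{\hat\pi(A)-\pi(A)\}\bigr]=\tfrac12\mathbb E\|\hat\pi-\pi\|_1 .
\]
By Jensen, $\mathbb E|\hat\pi_j-\pi_j|\le\sqrt{\pi_j(1-\pi_j)/n_s}$. Cauchy--Schwarz then gives $\sum_j\sqrt{\pi_j(1-\pi_j)}\le\sqrt{d(1-\sum_j\pi_j^2)}\le\sqrt{d-1}$, using $\sum_j\pi_j^2\ge 1/d$. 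This yields $\tfrac12\sqrt{(d-1)/n_s}$. The point of this step is that the maximum ranges over behaviors rather than over the countable text space, which avoids the $\log N$ bound.

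\textbf{Reserved selection.} I expect this to be the main obstacle. Reserved scores depend on a prompt only through its rule set, so the archive's prompts fall into at most $K_{\rm beh}=\min\{M+R,2^d\}$ behavior classes. Prompts in the same class share identical reserved scores and identical $\widetilde J$. Therefore the argmax in \eqref{eq:reserved-selection}, whatever the tie rule picks, has the same $\widetilde J$ as the maximizing class. I would repeat the proof of Theorem~\ref{thm:robust-global} conditionally on construction, applying the log-sum-exp bound over one representative per class; this needs only the deterministic bound $K_{\rm beh}$. The argument then gives $\sqrt{\log K_{\rm beh}/(2n_v)}$, with zero loss when $K_{\rm beh}=1$. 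Since $p_R\in\mathcal C$, adding the three terms and applying the corollary with the cap at one completes the proof.
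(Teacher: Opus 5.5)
Your proposal is correct and follows essentially the paper's route: the paper also frames the result as instantiating Theorem~\ref{thm:robust-global} and Corollary~\ref{cor:discrimination} with $\gamma=1$, $\omega=0$, $d_t=e_t$, and $\varepsilon_t=u+h$, with a total-variation bound of $\tfrac12\sqrt{(d-1)/n_s}$ on the search--population discrepancy and a log-sum-exp reserved bound over at most $K_{\rm beh}$ behavior classes. The differences are cosmetic: the paper writes the argument through Equation~\eqref{eq:two-gaps}, bounding $\Delta_{\rm gen}\le e_R+D_s$ pathwise rather than through $\Gamma_s$, and applies Cauchy--Schwarz to $\sqrt{\pi_j}$ and $\sqrt{1-\pi_j}$ rather than using $\sum_j\pi_j^2\ge 1/d$; your remark that the reserved step must be redone, because $|\mathcal C|$ can exceed $2^d$, is a useful extra bit of care.
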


The analysis follows the search-selected parent used by Equation 3.
All quantities $\widehat U_t,\widehat L_t,e_t$ refer to the same
cached search questions and scores. A fully evaluated proposal is
archived whether or not it replaces the parent. A failed, incomplete,
or skipped slot has zero correction and damage and changes neither
parent nor archive. These conventions include stopping in the process
being analyzed.

\begin{proof}[Proof of Proposition~\ref{prop:finite-repair}]
\textbf{Search progress.}
The change in empirical agreement under a proposal equals
$\widehat U_t-\widehat L_t$. The strict update in Equation 3 hence
gives
\[
e_t=e_{t-1}-[\widehat U_t-\widehat L_t]_+\le e_{t-1}.
\]
Because $[x]_+\ge x$, the conditional moment bounds imply
\[
\begin{aligned}
\mathbb E[e_t\mid\mathcal F^{\rm prop}_{t-1}]
&\le e_{t-1}-\mathbb E[\widehat U_t-\widehat L_t
\mid\mathcal F^{\rm prop}_{t-1}]\\
&\le(1-\kappa)e_{t-1}+\nu.
\end{aligned}
\]
Taking expectations and iterating bounds $\mathbb E[e_R]$ by
$(1-\kappa)^R\bar e_0+
\nu[1-(1-\kappa)^R]/\kappa$.
The pathwise inequality $e_R\le e_0$ supplies the other argument
of the minimum in $B_R$. No independence of revisions is used,
and a proposal can lose rules even when it is accepted.

\textbf{From search to population.}
Let $I(p)$ denote a prompt's subset of executable rules. For every
subset, simultaneously,
\[
\left|J(p)-\widehat J_s(p)\right|
=\left|\sum_{j\in I(p)}(\pi_j-\widehat\pi_j)\right|
\le D_s,\qquad
D_s=\frac12\sum_{j=1}^d|\widehat\pi_j-\pi_j|.
\]
Because the inequality holds for every rule subset simultaneously,
it also holds for the adaptively chosen parent.
For independent population draws,
\[
\begin{aligned}
\mathbb E[D_s]
&\le\frac{1}{2\sqrt{n_s}}
\sum_{j=1}^d\sqrt{\pi_j(1-\pi_j)}\\
&\le\frac12\sqrt{\frac{d-1}{n_s}}.
\end{aligned}
\]
The first step uses each empirical mode frequency's variance; the
second is Cauchy--Schwarz applied to $\sqrt{\pi_j}$ and
$\sqrt{1-\pi_j}$. Since $J_\Pi^*=1$ and $p_R\in\mathcal C$,
\[
\Delta_{\rm gen}
\le1-J(p_R)\le e_R+D_s.
\]
The full-rule prompt supplies the global benchmark in this step.

\textbf{Reserved selection.}
Condition on the history used to construct the archive, excluding all
reserved observations. This fixes $\mathcal C$.
Let $p^{\rm best}$ maximize $J$ over this fixed archive,
with a fixed tie rule used only to specify the comparison. Define
$Z_p=\widehat J_v(p)-J(p)$.
Since Equation 4 maximizes reserved agreement,
\[
\Delta_{\rm sel}
\le\max_{p\in\mathcal C}Z_p-Z_{p^{\rm best}}.
\]
Each $Z_p$ has conditional mean zero. Hoeffding's lemma
(\citealp{hoeffding1963}) gives
$\mathbb E[e^{\lambda Z_p}\mid\mathcal C]\le
e^{\lambda^2/(8n_v)}$. The same bound holds conditional on the
full construction history; its use does not require independent
scores across prompts. There are at most $K_{\rm beh}$ distinct score behaviors,
because the archive has at most $M+R$ prompts and the model has at
most $2^d$ rule subsets. For $\lambda>0$, the log-sum-exp bound yields
\[
\mathbb E[\max_{p\in\mathcal C}Z_p\mid\mathcal C]
\le\frac{\log K_{\rm beh}}{\lambda}+\frac{\lambda}{8n_v}.
\]
Optimizing over $\lambda$ gives
$\mathbb E[\Delta_{\rm sel}\mid\mathcal C]
\le\sqrt{\log K_{\rm beh}/(2n_v)}$. For $K_{\rm beh}=1$, selection loss is zero.
The centered error of the fixed best candidate has expectation zero;
this is why this expectation bound needs no factor of two.
Equation~\eqref{eq:selection-comparison} instead bounds a realized selection error by a uniform
absolute deviation and retains that factor.

Take expectations in Equation 5, combine the three bounds, and
use $0\le J_\Pi^*-J(p_S^*)\le1$ to obtain the finite-rule bound above.
Here $\widetilde J=J$, so the agreement analysis of Theorem~\ref{thm:robust-global}
translates to accuracy with $\gamma=1,\omega=0$.
\end{proof}

\paragraph{Independent groups with dependent questions.}
The sampling argument also permits independent groups, each containing
dependent questions, with fixed group weights summing to one. Suppose
the weighted group means match the target population, and reserved
groups remain independent of construction. Let $v_s$ and $v_v$ be the
sums of squared search and reserved group weights. The finite-rule bound above then
holds with $1/n_s$ replaced by $v_s$ and $1/n_v$ by $v_v$.
For search, write the group's random mode-frequency vector as
$Y_g$ with mean $\mu_g$, and put
$q_j=\sum_g w_g^2\mu_{gj}/v_s$. Independence and
$Y_{gj}\in[0,1]$ give
\[
\operatorname{Var}(\widehat\pi_j)
\le\sum_gw_g^2\mu_{gj}(1-\mu_{gj})
\le v_s q_j(1-q_j).
\]
Because $\sum_jq_j=1$, the same Cauchy--Schwarz step gives
$\mathbb E[D_s]\le\frac12\sqrt{(d-1)v_s}$.
Weighted Hoeffding gives the reserved term
$\sqrt{v_v\log K_{\rm beh}/2}$. The empirical correction and damage fractions
use these same weights. Question-weighted scoring corresponds to
group weights proportional to group sizes. Random sizes require
conditioning that preserves independence and population matching,
as discussed in Appendix~\ref{app:statistical-units}. The squared
weights express the concentration supported by independent groups.

\paragraph{From feedback exposure to the conditional moments.}
Fix a pre-proposal history and let $\widehat\pi_j$ denote the observed
search mass of a mode missing from the parent. Let $W_{tj}$ mean that
the actual feedback input contains correct repair evidence for this
mode, defined from its content before generating the proposal. Set
$f_{tj}=\Pr(W_{tj}\mid\mathcal F^{\rm prop}_{t-1})$ and suppose the conditional
probability that a valid, fully evaluated proposal adds rule $j$,
given this evidence, is at least $s_{tj}$. Failed and incomplete
proposals contribute no added rules. Then
\[
\mathbb E[\widehat U_t\mid\mathcal F^{\rm prop}_{t-1}]
\ge\sum_{j\notin I(p_{t-1})}\widehat\pi_j f_{tj}s_{tj}.
\]
Thus a weighted exposure-and-transfer bound of
$\alpha e_{t-1}-u$ implies the correction condition.
Similarly, if the conditional probability of losing an existing
rule $j$ is at most $\ell_{tj}$, then
\[
\mathbb E[\widehat L_t\mid\mathcal F^{\rm prop}_{t-1}]
\le\sum_{j\in I(p_{t-1})}\widehat\pi_j\ell_{tj}.
\]
Bounding this weighted sum by $\beta e_{t-1}+h$ supplies the damage
condition. Linearity permits coupled additions and removals; their
independence is unnecessary. The feedback input is fixed by this pre-proposal history, so
$f_{tj}$ is zero or one. The transfer probabilities still
refer to the actual stochastic proposal and admission law.

These quantities can be measured through disagreements repaired and
agreements lost in repeated proposals at a fixed feedback state.
Search transitions are label-free observations. Interpreting them
as true correction and damage uses the correct-reference and scoring
assumptions; an independent labeled probe tests that interpretation.
Repeated observations test the proposed rate bounds at each examined
state; state sampling determines the construction histories they cover.

\paragraph{Stopping and varying rates.}
For deterministic round-specific parameters
$\kappa_t=\alpha_t-\beta_t\in[0,1]$ and $\nu_t=u_t+h_t$,
the conditional recurrence gives
\[
\mathbb E[e_R]\le
\bar e_0\prod_{t=1}^R(1-\kappa_t)
+\sum_{t=1}^R\nu_t\prod_{i=t+1}^R(1-\kappa_i).
\]
It may again be capped by $\bar e_0$. A known skipped slot has
$\kappa_t=\nu_t=0$. For a history-dependent stop, retain the
conditional recurrence or choose deterministic bounds valid over
all such histories; realized average rates cannot be substituted
into the product. Under constant positive $\alpha$, a stopped
history with $e>0$ satisfies the correction condition only if
$u\ge\alpha e$. Early stopping at $e=0$ satisfies both conditions
without this residual.

\paragraph{Why search control is necessary.}
Consider two modes with population masses $(0.9,0.1)$ but search
frequencies $(0.4,0.6)$. A parent containing only the first rule
has population accuracy $0.9$. A revision containing only the second
repairs $0.6$ of search questions and damages $0.4$, so Equation 3
accepts it although population accuracy falls to $0.1$.
The archive retains the parent. This calculation shows why strict
search improvement requires an explicit search--population term
and a final-selection argument.

\subsection{Exact Attainment under Rule-Preserving Proposals}
\label{app:finite-rule}

In the finite-rule model of Appendix~\ref{app:finite-repair}, inputs fall into
$d$ disjoint modes with probabilities $\pi_j>0$. Generated prompts
specify subsets of executable rules. The student is correct on a mode
exactly when its rule is present; rules do not interfere, and the
full rule set fits the admissible prompt constraints. It attains
$J_\Pi^*=1$. Initial synthesis and search evaluation complete almost
surely. Teacher solutions and references are valid and correct,
and evaluations complete with consistent scoring.

At every history with a rule missing from the parent but represented
in search, feedback exposes an admissible disagreement. Valid proposals
preserve existing parent rules, and an admitted, fully evaluated
proposal adds at least one exposed missing rule with conditional
probability at least $r>0$. Each such addition strictly increases
cached search agreement and is accepted by Equation 3. Construction
has $R$ slots available and may stop early when no search disagreements
remain. An additional resource stop before those slots or before final
selection would require a corresponding change to the analyzed
execution law. These are conditions on K2P's existing operations.

\begin{theorem}[Global attainment in a finite-rule regime]
\label{thm:finite-rule}
Under the finite-rule conditions above, draw $n_s$ search and $n_v$
reserved inputs independently from the task population. Let $k_0$
be the number of rules missing from the initial search-selected
parent $p_0$, and define
$\Phi_{R,r}(k)=\Pr\{\operatorname{Bin}(R,r)<k\}$, with
$\Phi_{R,r}(0)=0$. After $R$ refinement slots, the prompt returned
by Equation 4 satisfies
\[
\begin{aligned}
\Pr\{J(p_S^*)<J_\Pi^*\}\le
\min\Bigl\{1,\;&
\underbrace{\sum_{j=1}^d(1-\pi_j)^{n_s}}_{\text{search coverage}}+
\underbrace{\mathbb E[\Phi_{R,r}(k_0)]}_{\text{repair opportunities}}\\
&+\underbrace{\sum_{j=1}^d(1-\pi_j)^{n_v}}_{\text{reserved coverage}}
\Bigr\}.
\end{aligned}
\tag{C1}\label{eq:finite-rule}
\]
The expectation is over source and search initialization. If $U$
denotes the right-hand side, then also
$\mathbb E[J_\Pi^*-J(p_S^*)]\le U$.
\end{theorem}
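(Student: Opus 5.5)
Our approach is a union bound over three failure events whose complement forces $J(p_S^*)=J_\Pi^*=1$. Define $A_s$ as the event that some mode $j$ has no search input, with $\Pr(A_s)\le\sum_j(1-\pi_j)^{n_s}$ by a union bound over modes and independence of the $n_s$ draws. Define $A_v$ analogously for the reserved set, with $\Pr(A_v)\le\sum_j(1-\pi_j)^{n_v}$. Define $A_r$ as the event that the final parent $p_R$ still misses some rule. On $A_s^c\cap A_r^c$ the archive contains a full-rule prompt. On $A_v^c$, given that, the reserved maximizer must also be full-rule. The expected-gap claim then follows because $0\le J_\Pi^*-J(p_S^*)\le1$, so $\mathbb E[J_\Pi^*-J(p_S^*)]\le\Pr\{J(p_S^*)<J_\Pi^*\}\le U$.

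\textbf{Repair step.} Work on $A_s^c$, where every missing rule is represented in search. Rule preservation and the acceptance rule (Equation~\eqref{eq:parent-update}) imply that the parent's rule set is nondecreasing. By hypothesis, at every slot where the parent still misses a rule, the conditional probability, given $\mathcal F^{\rm prop}_{t-1}$, that the proposal adds a missing rule and is accepted is at least $r$. Each such success reduces the missing count by at least one. Early stopping occurs only when no search disagreement remains, which on $A_s^c$ means no rule is missing. We couple the success indicators with i.i.d.\ Bernoulli$(r)$ variables, for example through uniforms $V_t$ so that a success occurs whenever $V_t\le r$ and the parent is incomplete. Then, conditionally on the initialization, $\Pr(A_r\cap A_s^c\mid k_0)\le\Pr\{\mathrm{Bin}(R,r)<k_0\}=\Phi_{R,r}(k_0)$, and we take expectations over initialization. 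The main obstacle is this stochastic domination. The success probabilities are only conditionally lower-bounded and history-dependent, so the coupling must be built sequentially. Equivalently, one can show that the number of successes in $R$ slots, counting slots after completion as automatic successes, dominates $\mathrm{Bin}(R,r)$ via a supermartingale or induction on $R$. Handling $A_s$ jointly is simplest by bounding $\Pr(A_r)\le\Pr(A_s)+\Pr(A_r\cap A_s^c)$.

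\textbf{Selection step.} Condition on the construction history, which fixes $\mathcal C$ and is independent of the reserved draws. Suppose $\mathcal C$ contains a full-rule prompt $p^\star$, so $\widehat J_v(p^\star)=1$ because references are correct and agreement equals correctness. Any prompt $q\in\mathcal C$ missing rule $j$ has $\widehat J_v(q)<1$ whenever some reserved input lies in mode $j$. On $A_v^c$ every mode appears, so every non-full prompt scores strictly below $p^\star$. Hence the argmax of Equation~\eqref{eq:reserved-selection} is full-rule regardless of tie-breaking, and its accuracy is $1=J_\Pi^*$.

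Combining the three events gives $\Pr\{J(p_S^*)<J_\Pi^*\}\le\Pr(A_s)+\mathbb E[\Phi_{R,r}(k_0)]+\Pr(A_v)$, and the cap at $1$ is trivial. If a more careful bookkeeping is wanted, replace $\mathbb E[\Phi_{R,r}(k_0)]$ with the expectation restricted to $A_s^c$; this is only smaller.
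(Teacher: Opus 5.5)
Your proposal is correct and follows essentially the same route as the paper. Both use union-bounded search and reserved coverage events, and a sequential uniform coupling that dominates the number of successful repairs by $\operatorname{Bin}(R,r)$ on the search-coverage event. Both then argue pathwise that full reserved coverage forces Equation~\eqref{eq:reserved-selection} to return a full-rule prompt, and use the indicator bound for the expected gap. The only cosmetic difference is bookkeeping: you track the final parent $p_R$ through the overlapping union $A_r\cup A_v$, whereas the paper splits $\{J(p_S^*)<J_\Pi^*\}$ into the disjoint events $\{\Delta_{\rm gen}>0\}$ and $\{\Delta_{\rm gen}=0,\ \Delta_{\rm sel}>0\}$, and both yield the same bound.
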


\begin{proof}[Proof of Theorem~\ref{thm:finite-rule}]
Let $\mathcal E_s^{\rm cov}$ and $\mathcal E_v^{\rm cov}$ denote the events that the search
and reserved sets, respectively, contain every mode. A union bound
gives
\[
\Pr((\mathcal E_s^{\rm cov})^c)\le\sum_{j=1}^d(1-\pi_j)^{n_s},\qquad
\Pr((\mathcal E_v^{\rm cov})^c)\le\sum_{j=1}^d(1-\pi_j)^{n_v}.
\]
Fix a source/search initialization history $H_{\rm init}$ on $\mathcal E_s^{\rm cov}$.
Every rule missing from the initial parent $p_0$ has an erroneous
search example. Correct teacher references make those examples
disagreements, and feedback exposes a missing rule. Every successful
repair adds at least one such rule without removing existing ones,
so Equation 3 accepts it. Consequently, $k_0$ successful repairs
suffice to reach a full-rule prompt, which remains in the archive.

Conditional on every unfinished history, success has probability at
least $r$. Realize each success using a fresh uniform variable and
its history-dependent success probability. A reference success occurs
when the same uniform variable is at most $r$. Until completion,
each reference success is an actual success. After completion extend
the reference trials to $R$ slots. The probability of failing to
complete is therefore at most
$\Phi_{R,r}(k_0)=\Pr\{\operatorname{Bin}(R,r)<k_0\}$, conditional
on $H_{\rm init}$. This coupling permits dependent, adaptive revisions.
Early stopping on zero search disagreements on $\mathcal E_s^{\rm cov}$
means that all rules are already present.

To separate the two losses in Equation~\ref{eq:two-gaps}, define
\[
F_{\rm gen}=\{\Delta_{\rm gen}>0\},\qquad
F_{\rm sel}=\{\Delta_{\rm gen}=0,\ \Delta_{\rm sel}>0\}.
\]
These events are disjoint and their union is
$\{J(p_S^*)<J_\Pi^*\}$. The preceding argument gives
\[
\Pr(F_{\rm gen})\le
\Pr((\mathcal E_s^{\rm cov})^c)+
\mathbb E[\mathbf1_{\mathcal E_s^{\rm cov}}\Phi_{R,r}(k_0)]
\le\sum_j(1-\pi_j)^{n_s}+\mathbb E[\Phi_{R,r}(k_0)].
\]
On $\mathcal E_v^{\rm cov}$, every incomplete prompt makes at least one
reserved error, while a full-rule prompt has agreement one. Whenever
the archive contains a full-rule prompt, Equation 4 therefore selects
one. Hence $F_{\rm sel}\subseteq(\mathcal E_v^{\rm cov})^c$ and
$\Pr(F_{\rm sel})\le\sum_j(1-\pi_j)^{n_v}$.
Adding the disjoint-event probabilities and taking the minimum with
one proves Equation~\ref{eq:finite-rule}. Independence between these
failure events is unnecessary. Finally,
\[
0\le J_\Pi^*-J(p_S^*)
\le\mathbf1\{J(p_S^*)<J_\Pi^*\},
\]
so taking expectations proves the stated expected-gap consequence.
\end{proof}

For fixed $d$ and $\pi_{\min}>0$, the coverage terms are at most
$d e^{-n_s\pi_{\min}}$ and $d e^{-n_v\pi_{\min}}$.
Since $k_0\le d$, the repair tail is at most
$\Pr\{\operatorname{Bin}(R,r)<d\}$, which tends to zero for a
uniform $r>0$ as $R\to\infty$. This proves consistency as the three
resources grow under the stated task and execution conditions.

\subsection{Finite-Resource Accuracy Guarantees}
\label{app:finite-mass}

Theorem~\ref{thm:finite-rule} controls exact global attainment.
We use the same rule-preserving conditions to measure partial progress
without requiring every mode to appear in search. A mode of small
probability contributes only that probability to accuracy loss when
left unresolved. Let $\mathcal C_0=\mathcal P$ and
$X_0=J_\Pi^*-\max_{p\in\mathcal C_0}J(p)$.

Condition on the source and search history $H_{\rm init}$ after initialization.
Let $\mathcal M_0$ be the rules missing from $p_0$ that occur in
search, with $k_{\rm obs}=|\mathcal M_0|$. Let $u_0$ be the total
population mass of rules missing from $p_0$ and absent from search.
Sort the probabilities of modes in $\mathcal M_0$ as
$w_1\ge\cdots\ge w_{k_{\rm obs}}$. For a nonnegative integer $z$, set
\[
\Psi_z(\mathcal M_0)=
\sum_{i=1}^{(k_{\rm obs}-z)_+}w_i,
\]
where an empty sum is zero. After $z$ successful repairs, at most
$(k_{\rm obs}-z)_+$ exposed rules remain missing. Their mass is largest
when they are the most frequent modes, which explains this ordering.

\begin{theorem}[Remaining error mass after finite refinement]
\label{thm:finite-mass}
Under the finite-rule and execution conditions above, let
$Z\sim\operatorname{Bin}(R,r)$ be an independent reference variable.
Conditional on $H_{\rm init}$,
\[
\mathbb E[\Delta_{\rm gen}\mid H_{\rm init}]
\le\mathbb E_Z\!\left[
\min\{X_0,u_0+\Psi_Z(\mathcal M_0)\}\right].
\tag{C2}\label{eq:finite-mass-bound}
\]
If reserved selection satisfies
$\mathbb E[\Delta_{\rm sel}]\le\zeta_{\rm sel}$, then
\[
\mathbb E[1-J(p_S^*)]\le
\min\!\left\{1,\mathbb E_{H_{\rm init},Z}
\!\left[\min\{X_0,u_0+\Psi_Z(\mathcal M_0)\}\right]
+\zeta_{\rm sel}\right\}.
\tag{C3}\label{eq:finite-mass-global}
\]
\end{theorem}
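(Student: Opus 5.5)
The first claim (C2) is a conditional bound on the generation gap, and I would derive it from two separate pathwise bounds, taking the minimum. The first bound is monotonicity of the archive. Because $\mathcal C_0=\mathcal P\subseteq\mathcal C_R$, we have $\max_{\mathcal C_R}J\ge\max_{\mathcal C_0}J$, and hence $\Delta_{\rm gen}\le X_0$ pathwise. $X_0$ is $H_{\rm init}$-measurable, so nothing needs to be averaged here.

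The second bound tracks the parent. In the finite-rule model $J_\Pi^*=1$ and $J(p)=\sum_{j\in I(p)}\pi_j$. Every parent $p_t$ stays in the archive, so $\Delta_{\rm gen}\le 1-J(p_R)$, which is the total mass of the rules missing from $p_R$. Rule-preserving proposals never remove parent rules, and accepted proposals replace the parent. Therefore the missing set of $p_R$ is contained in the missing set of $p_0$. I would split that set into two parts:
\begin{itemize}
\item rules absent from search, whose total mass is at most $u_0$;
\item rules in $\mathcal M_0$ not yet added.
\end{itemize}
If $N_R$ successful repairs occurred, each adds at least one rule of $\mathcal M_0$. Here I would check that the added rule is one exposed in search: feedback exposes only disagreements on search, and these correspond exactly to missing rules that appear in search. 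At most $(k_{\rm obs}-N_R)_+$ rules of $\mathcal M_0$ then remain missing. Their mass is at most $\Psi_{N_R}(\mathcal M_0)$, the sum of the largest weights. This gives $\Delta_{\rm gen}\le\min\{X_0,\,u_0+\Psi_{N_R}(\mathcal M_0)\}$ pathwise.

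The main step is to replace $N_R$ by the reference variable $Z\sim\operatorname{Bin}(R,r)$. I would reuse the coupling from the proof of Theorem~\ref{thm:finite-rule}. Each slot draws a fresh uniform $V_t$, and success is declared when $V_t$ falls below the history-dependent success probability, which is at least $r$ while exposed missing rules remain. The reference count is $Z=\sum_t\mathbf 1\{V_t\le r\}$. Until $\mathcal M_0$ is exhausted, every reference success is an actual success, so $\min\{N_R,k_{\rm obs}\}\ge\min\{Z,k_{\rm obs}\}$. Early stopping only occurs once no search disagreements remain, and then $\Psi=0$ anyway.

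The function $z\mapsto\min\{X_0,u_0+\Psi_z(\mathcal M_0)\}$ is nonincreasing in $z$ and constant for $z\ge k_{\rm obs}$. The pathwise bound therefore holds with $Z$ in place of $N_R$. Taking conditional expectations given $H_{\rm init}$ yields (C2), since $Z$ is independent of $H_{\rm init}$ by construction. The main obstacle is being careful that this coupling is legitimate under adaptive, history-dependent proposals. Specifically, the success probabilities must be measurable with respect to the pre-proposal $\sigma$-field, and the uniforms must be independent of everything before them. This is exactly the device already used in the earlier proof.

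For (C3), I would use the decomposition (5) with $J_\Pi^*=1$: $1-J(p_S^*)=\Delta_{\rm gen}+\Delta_{\rm sel}$. Taking total expectation, I bound the first term by averaging (C2) over $H_{\rm init}$ and the second term by $\zeta_{\rm sel}$. Finally, I cap the result at one using $0\le 1-J(p_S^*)\le1$.
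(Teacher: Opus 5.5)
Your proof is correct and follows essentially the same route as the paper. Archive retention gives the $X_0$ branch of the minimum. Rule preservation splits the final parent's missing rules into the unobserved mass $u_0$ and at most $(k_{\rm obs}-N_R)_+$ members of $\mathcal M_0$, whose mass is bounded by $\Psi$. The uniform-variable coupling to $Z\sim\operatorname{Bin}(R,r)$, followed by decomposition~\eqref{eq:two-gaps} and the cap at one, yields (C2) and (C3). Your explicit remarks that each added rule lies in $\mathcal M_0$, and that $z\mapsto\min\{X_0,u_0+\Psi_z(\mathcal M_0)\}$ is nonincreasing and constant for $z\ge k_{\rm obs}$, only make precise steps the paper uses implicitly.
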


\begin{proof}[Proof of Theorem~\ref{thm:finite-mass}]
Condition on the source and search history $H_{\rm init}$ after initialization,
fixing the initial bank, parent, and observed modes. Its missing
rules divide into $\mathcal M_0$, the search-visible set, and an
unobserved set of mass $u_0$. Each successful repair adds at least
one distinct member of $\mathcal M_0$ and preserves all previously
acquired parent rules. After $s$ successes, at most
$(k_{\rm obs}-s)_+$ members of $\mathcal M_0$ remain missing.
Their total mass is at most $\Psi_s(\mathcal M_0)$, because the
largest such sum uses the largest mode probabilities.
The parent's remaining error is consequently at most
$u_0+\Psi_s(\mathcal M_0)$. Repairs of unobserved rules only reduce it.

Until all search-visible missing rules are repaired, success
indicators have conditional probabilities at least $r$. Realize
each success with a fresh uniform variable and its history-dependent
conditional success probability. The event that this variable is at
most $r$ defines an independent Bernoulli($r$) reference success.
After completion extend the references to $R$ slots. The actual
number of distinct repaired search-visible rules is at least
$\min\{Z,k_{\rm obs}\}$ under this coupling.
Thus the final parent has error at most
$u_0+\Psi_Z(\mathcal M_0)$. This coupling is a proof device and
does not impose independent model proposals.

The archive contains the final parent and the entire initial bank.
Its best error is therefore bounded by
$\min\{X_0,u_0+\Psi_Z(\mathcal M_0)\}$. Taking conditional
expectations proves Equation~\ref{eq:finite-mass-bound}.
Take expectation over $H_{\rm init}$, add the selection loss from
Equation~\ref{eq:two-gaps}, and bound accuracy loss by one to prove
Equation~\ref{eq:finite-mass-global}.
\end{proof}

If search comprises $n_s$ independent population draws, then
\[
\begin{aligned}
\mathbb E[u_0]
&=\sum_j\pi_j\Pr\{j\text{ absent from search and from }p_0\}\\
&\le\sum_j\pi_j(1-\pi_j)^{n_s}.
\end{aligned}
\]
This step does not require the search-selected parent to be
independent of the search data. A mode being absent from both the search set and the initial parent
implies that it is absent from the search set. The contribution of each unseen mode is weighted by
its frequency, avoiding an all-modes coverage requirement.

For fixed $H_{\rm init}$ and $r>0$, letting $R$ grow makes
$\Psi_Z(\mathcal M_0)$ vanish in expectation; the bound tends to
$\min\{X_0,u_0\}$. More refinement alone cannot guarantee discovery
of modes absent from the fixed feedback population. Under a
finite mode set, increasing independent search information also
makes the expected unseen mass vanish. Accurate final selection
remains a separate requirement.

If $k_0>R$, the repair-count bound for exact attainment is one;
it does not imply impossibility because a single proposal may add
several rules. The mass bound records partial progress even when
$R<k_{\rm obs}$. It remains conservative because it counts only one
rule per successful slot and assigns the largest remaining mode
probabilities to unresolved rules.

\subsection{Statistical Units and Population Matching}
\label{app:statistical-units}

Fix the teacher, target student, prompt interfaces, and decoding and
scoring policies. For each prompt $p$, let $c_p$ be the correctness
indicator defined in Equation 1, and let $\widetilde c_p\in\{0,1\}$ be the
validity-aware agreement indicator in
Equation 2. Expectations include the input and model-decoding
randomness. Thus $J(p)=\mathbb E[c_p]$, $\widetilde J(p)=\mathbb E[\widetilde c_p]$, and
$b(p)=\mathbb E[c_p-\widetilde c_p]$. The population for $\widetilde J$ matches that for $J$.
Candidate responses on the same question may be paired, and candidates
may share a teacher reference.

Theorem~\ref{thm:archive-selection} conditions on the source and search
history $H_{\rm con}$ that produces $\mathcal C$, excluding reserved references
and scores. Conditional
on $H_{\rm con}$, each group's full vector of questions, teacher references, and
student outputs for all candidates is independent of the other groups.
The reserved evaluation law is unaffected by construction. Groups need
not be identically distributed, but their weighted expectation must
equal $\widetilde J(p)$. For question-weighted scoring,
$Z_g(p)=n_g^{-1}\sum_{i\in g}\widetilde c_i(p)$ and $w_g=n_g/n_v$, so the weighted
mean is exactly Equation 2. The theorem assumes fixed weights. If group
sizes are random, conditioning on them is legitimate only if it
preserves group independence and population matching.

Disjoint question identities alone do not establish these statistical
assumptions. The selection bound uses the actual number of candidates
$K_{\mathcal C}$ and the squared-weight sum $v_v$ for the reserved
population. An upper limit of twelve candidates can be used
conservatively. A deterministic benchmark split does not itself
establish independence or population matching. At eighty questions,
the distribution-free bound can be too loose to certify accuracy
losses of a few percentage points, even when empirical selection
loss is small.

\subsection{Relative Proxy Bias and Actual Scoring}
\label{app:proxy-bias}

For $b(p)=J(p)-\widetilde J(p)$, any two prompts satisfy
\[
J(p)-J(q)=\widetilde J(p)-\widetilde J(q)+b(p)-b(q).
\tag{C4}\label{eq:relative-bias}
\]
This identity expresses how candidate-specific score offsets enter
an accuracy comparison. The following bounds relate the difference
in offsets to teacher errors and actual scoring policies.

\begin{proposition}[Where teacher errors affect ranking]
\label{prop:local-bias}
Suppose responses are valid and task correctness is equality with the
normalized gold answer. Let $O$ be the event that the teacher is wrong
and $D_{pq}$ the event that candidates $p$ and $q$ produce different
normalized student answers. Then
\[
|b(p)-b(q)|\le 2\Pr(O\cap D_{pq}).
\]
\end{proposition}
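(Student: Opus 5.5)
The plan is a pointwise argument on a common evaluation draw, followed by taking expectations. Fix an input $x$ with gold answer $y$, the teacher reference $\tilde y$, and the student answers $s_p=a(S(x;p))$ and $s_q=a(S(x;q))$, all on the same coupled draw. Under the stated regime, responses are valid and correctness means equality with $y$. So $c_p=\mathbf 1[s_p=y]$ and $\widetilde c_p=\mathbf 1[s_p=\tilde y]$, and similarly for $q$. By the population-matching convention of Appendix~\ref{app:statistical-units}, $b(p)-b(q)=\mathbb E[(c_p-\widetilde c_p)-(c_q-\widetilde c_q)]$. It therefore suffices to bound the integrand
\[
\Lambda=(c_p-\widetilde c_p)-(c_q-\widetilde c_q)
\]
pointwise by $2\,\mathbf 1_{O\cap D_{pq}}$ in absolute value.

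First, I split into cases by event.
\begin{itemize}
\item On $O^c$ we have $\tilde y=y$, so $c_p=\widetilde c_p$ and $c_q=\widetilde c_q$. Hence $\Lambda=0$.
\item On $D_{pq}^c$ we have $s_p=s_q$, so $c_p=c_q$ and $\widetilde c_p=\widetilde c_q$. Hence $\Lambda=0$.
\item On $O\cap D_{pq}$, each of $c_p-\widetilde c_p$ and $c_q-\widetilde c_q$ lies in $\{-1,0,1\}$, which gives the crude bound $|\Lambda|\le2$.
\end{itemize}
If sharper constants are wanted, note that on $O$ we have $y\ne\tilde y$, so a single answer cannot match both. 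Thus $c_p-\widetilde c_p\in\{-1,0,1\}$ is $+1$ exactly when $s_p=y$ and $-1$ exactly when $s_p=\tilde y$. The value $|\Lambda|=2$ is attained only when one student answer equals $y$ and the other equals $\tilde y$. Either way, $|\Lambda|\le 2\,\mathbf 1_{O\cap D_{pq}}$ holds pointwise.

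Next, I take expectations and apply Jensen's inequality to the absolute value:
\[
|b(p)-b(q)|=|\mathbb E\Lambda|\le\mathbb E|\Lambda|\le2\Pr(O\cap D_{pq}).
\]

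The main point needing care is the coupling rather than the algebra. The events $O$ and $D_{pq}$, and the four indicators, must all live on one probability space. That space carries a single input, a single teacher reference shared by both prompts, and the two student decodings. This matches the "common evaluation draw" stipulation and the pairing convention in Appendix~\ref{app:statistical-units}. I would state explicitly that $b(p)$ and $b(q)$ are marginal quantities, and therefore do not depend on how the two student decodings are coupled. Consequently, any coupling can be chosen, and the bound holds for each such choice, in particular for the paired evaluation actually used. For stochastic decoding, this means the bound is stated relative to that chosen coupling.
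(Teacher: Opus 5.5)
Your proof is correct and is essentially the paper's argument. You bound the integrand difference $|(c_p-\widetilde c_p)-(c_q-\widetilde c_q)|$ pointwise by $2\mathbf 1_{O\cap D_{pq}}$ on a common draw (zero when the teacher is correct or the two student answers coincide, at most $2$ otherwise) and take expectations. The paper obtains the proposition as the special case $I=\varnothing$, $\chi\equiv 0$ of its more general bound, Equation~\eqref{eq:full-local-bias}, which also handles invalid or truncated teacher references and scorer mismatch.
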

When the teacher is correct, agreement and correctness assign the same
score under these assumptions. When two candidates produce the same
student answer, any teacher error affects them equally. Relative bias
can therefore arise only where teacher errors overlap with candidate
disagreements. The factor of two allows a wrong teacher answer to reward one
candidate's matching error while denying agreement credit to the
other candidate's correct answer. Consequently, teacher accuracy alone does not specify the
distortion of a candidate comparison; the location of its errors also
matters. The extension below includes invalid or truncated responses and the
actual task scorers.

The bias can be written as
\[
b(p)=\Pr(c_p=1,\widetilde c_p=0)-\Pr(c_p=0,\widetilde c_p=1).
\tag{C5}\label{eq:bias-events}
\]
The first term counts correct student answers unrewarded by agreement;
the second counts agreement on incorrect answers. Subtracting
$J(q)=\widetilde J(q)+b(q)$ from $J(p)=\widetilde J(p)+b(p)$ proves
Equation~\ref{eq:relative-bias}.

To include invalid or truncated responses, map them to a symbol
$\bot$, writing $\bar S_p$ and $\bar T$ for the resulting normalized
answers. Define
$c_p^\dagger=\mathbf1\{\bar S_p\ne\bot,\bar S_p=y\}$,
$J^\dagger(p)=\mathbb E[c_p^\dagger]$, and
$\chi(p)=J(p)-J^\dagger(p)$. Here $y$ denotes the normalized gold
answer. The correction $\chi$ accounts for differences between the
actual task scorer and this aligned scoring rule, including whether an
otherwise correct truncated response is credited.

Let $O=\{\bar T\ne\bot,\bar T\ne y\}$,
$I=\{\bar T=\bot\}$, and $D_{pq}=\{\bar S_p\ne\bar S_q\}$. Then
\[
|b(p)-b(q)|\le
2\Pr(O\cap D_{pq})+\Pr(I\cap D_{pq})
+|\chi(p)-\chi(q)|=: \tau_{pq}.
\tag{C6}\label{eq:full-local-bias}
\]
\begin{proof}
Put $d_p=c_p^\dagger-\widetilde c_p$. When the teacher is valid and correct,
$d_p=0$. On $O$, $d_p\in\{-1,0,1\}$, so
$|d_p-d_q|\le2$. On $I$, agreement is zero and
$d_p\in\{0,1\}$, giving $|d_p-d_q|\le1$. If the normalized student
answers coincide, $d_p=d_q$. Hence, pointwise,
\[
|d_p-d_q|\le2\mathbf1_{O\cap D_{pq}}+\mathbf1_{I\cap D_{pq}}.
\]
Take expectations and use $b(p)=\mathbb E[d_p]+\chi(p)$.
Proposition~\ref{prop:local-bias} follows when $I$ is empty and
$\chi=0$.
\end{proof}

The task scorers for Tracking, Colored Objects, and QuaRTz compare
extracted answers without a separate truncation exclusion, whereas agreement
requires untruncated responses. TabMWP's explicit-final scorer and
agreement both exclude truncated responses. Equation~\ref{eq:full-local-bias}
covers these policies without changing them. Its probabilities refer to
the actual joint decoding law; candidate disagreement can therefore
also reflect decoding randomness. The archive selection theorem uses
$b=J-\widetilde J$ directly and does not require the simplified scoring assumptions
of Proposition~\ref{prop:local-bias}.

\subsection{Selection from a Frozen Archive}
\label{app:selection-proof}

\paragraph{Realized proxy and estimation errors.}
For a frozen archive, let $\widehat J_v$ denote reserved agreement and set
$\rho_{\mathcal C}=\max_{p\in\mathcal C}b(p)-\min_{p\in\mathcal C}b(p)$
and $\epsilon_{\mathcal C}=\max_{p\in\mathcal C}|\widehat J_v(p)-\widetilde J(p)|$.
Equation~\ref{eq:relative-bias} and empirical maximization in Equation 4 give
\[
\Delta_{\rm sel}\le
\underbrace{\rho_{\mathcal C}}_{\text{relative proxy bias}}+
\underbrace{2\epsilon_{\mathcal C}}_{\text{agreement estimation}}.
\tag{C7}\label{eq:selection-comparison}
\]

One candidate can be underestimated while the selected candidate is
overestimated, giving the factor of two. Conditioning on the source and
search history $H_{\rm con}$ fixes the adaptively generated archive;
reserved references and scores remain outside this history. The following
theorem bounds estimation error using independent groups, allowing
dependence within a table or background.

\Needspace{11\baselineskip}
\begin{theorem}[Selection from a frozen archive]
\label{thm:archive-selection}
Condition on the construction history $H_{\rm con}$ and its archive $\mathcal C$,
with $K_{\mathcal C}=|\mathcal C|\ge1$. Suppose reserved evaluation consists of $n_{\rm grp}$
independent groups, allowing arbitrary dependence within each group.
Let $Z_g(p)\in[0,1]$ be group $g$'s mean agreement and
$\widehat J_v(p)=\sum_gw_g Z_g(p)$, with fixed weights $w_g\ge0$
summing to one. Assume $\mathbb E[\widehat J_v(p)\mid H_{\rm con}]=\widetilde J(p)$ for
every candidate.
Write $v_v=\sum_g w_g^2$ and
$\rho_{\mathcal C}=\max_{p\in\mathcal C}b(p)-\min_{p\in\mathcal C}b(p)$.
For any $\delta\in(0,1)$, Equation 4 satisfies, with conditional
probability at least $1-\delta$,
\[
\max_{p\in\mathcal C}J(p)-J(p_S^*)
\le \rho_{\mathcal C}+\sqrt{2v_v\log(2K_{\mathcal C}/\delta)}.
\tag{C8}\label{eq:archive-bound}
\]
\end{theorem}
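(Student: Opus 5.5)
The plan combines the deterministic comparison in Equation~\eqref{eq:selection-comparison} with a high-probability uniform bound on $\epsilon_{\mathcal C}$. I would first condition on $H_{\rm con}$, so that the archive $\mathcal C$, its size $K_{\mathcal C}$, and the offsets $b(p)$ for $p\in\mathcal C$ are all fixed. Then I would rederive \eqref{eq:selection-comparison} explicitly. Let $p^{\rm best}$ maximize $J$ over $\mathcal C$. By Equation~\eqref{eq:relative-bias},
\[
J(p^{\rm best})-J(p_S^*)=\widetilde J(p^{\rm best})-\widetilde J(p_S^*)+b(p^{\rm best})-b(p_S^*),
\]
and the offset difference is at most $\rho_{\mathcal C}$. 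Since Equation~\eqref{eq:reserved-selection} gives $\widehat J_v(p_S^*)\ge\widehat J_v(p^{\rm best})$, adding and subtracting reserved scores bounds the agreement difference. It is at most $[\widetilde J(p^{\rm best})-\widehat J_v(p^{\rm best})]+[\widehat J_v(p_S^*)-\widetilde J(p_S^*)]\le2\epsilon_{\mathcal C}$. Ties in the argmax cause no difficulty, because only the inequality $\widehat J_v(p_S^*)\ge\widehat J_v(p^{\rm best})$ is used.

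It then remains to show that, with conditional probability at least $1-\delta$, $2\epsilon_{\mathcal C}\le\sqrt{2v_v\log(2K_{\mathcal C}/\delta)}$. Equivalently, $\epsilon_{\mathcal C}\le\sqrt{v_v\log(2K_{\mathcal C}/\delta)/2}$. Fix one candidate $p$. Conditional on $H_{\rm con}$, the summands $w_gZ_g(p)$ are independent across groups, each lies in an interval of length $w_g$, and the mean of their sum is $\widetilde J(p)$. The two-sided Hoeffding inequality gives
\[
\Pr\bigl(|\widehat J_v(p)-\widetilde J(p)|\ge u\mid H_{\rm con}\bigr)\le2\exp\!\left(-\frac{2u^2}{\sum_g w_g^2}\right)=2e^{-2u^2/v_v}.
\]
This bound needs only independence across groups, so arbitrary dependence within a table or background is allowed. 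A union bound over the $K_{\mathcal C}$ candidates, with $u=\sqrt{v_v\log(2K_{\mathcal C}/\delta)/2}$, makes the total failure probability at most $\delta$. Different candidates share questions and references, but the union bound requires no independence across prompts.

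The main obstacle is justifying the conditioning rather than the arithmetic. Hoeffding must be applied under the conditional law given $H_{\rm con}$, so three things must hold: the groups remain independent, $\mathcal C$ is measurable with respect to $H_{\rm con}$, and the reserved data do not influence the archive. The last point holds because reserved scores affect neither generation nor parent updates (Section~\ref{sec:k2p-selection}). These are exactly the hypotheses stated in the theorem and discussed in Appendix~\ref{app:statistical-units}. I would therefore state that the conditional unbiasedness assumption supplies the centering, and that fixed weights keep each summand's range deterministic.
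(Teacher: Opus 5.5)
Your proposal is correct and follows the paper's proof essentially step for step. You condition on $H_{\rm con}$ and apply two-sided weighted Hoeffding to the independent summands $w_gZ_g(p)$, whose ranges are $w_g$. You then take a union bound over the $K_{\mathcal C}$ candidates, giving a simultaneous deviation of $\sqrt{v_v\log(2K_{\mathcal C}/\delta)/2}$. Finally, you combine the relative-bias identity with $\widehat J_v(p_S^*)\ge\widehat J_v(p^{\rm best})$ to bound the loss by $\rho_{\mathcal C}$ plus twice that deviation. The only difference is the order of presentation: you derive the deterministic comparison before the concentration event, and the paper does the reverse.
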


The left-hand side measures loss in the true accuracy objective of
Equation 1, relative to the best archived prompt. The first term,
$\rho_{\mathcal C}$, measures how differently the agreement proxy
biases the candidates. The second term accounts for selecting from
finite reserved evaluations. A more informative reserved set can reduce
the second term; the first depends on the teacher's errors and the
candidate responses. These two effects can therefore vary separately.

Hoeffding's inequality (\citealp{hoeffding1963}) bounds
agreement-estimation errors simultaneously over the frozen archive.
Combining this bound with Equation~\ref{eq:selection-comparison} gives
Equation~\ref{eq:archive-bound}. This is why
the selection bound depends on the frozen archive size $K_{\mathcal C}\le M+R=12$ in our configuration;
candidate generation contributes through the other term in
Equation~\ref{eq:two-gaps}.

For question-weighted evaluation, a group of $n_g$ questions receives
weight $w_g=n_g/n_v$, so the grouped average is exactly Equation 2.
Independent questions are singleton groups and give $v_v=1/n_v$.
Shared QuaRTz backgrounds instead define groups, whose weights reflect
their numbers of questions. The quantity $1/v_v$ expresses the effective
sample size in this bound: unequal weights and possible dependence
within groups reduce the information certified by the bound.
Independence between groups and the expectation-matching condition
remain statistical assumptions beyond disjoint question identities.
The probability statement concerns repeated reserved evaluation under
these assumptions, conditional on the constructed archive.

\begin{proof}[Proof of Theorem~\ref{thm:archive-selection}]
Conditional on $H_{\rm con}$, the terms $w_gZ_g(p)$ are independent and have
range lengths $w_g$. Hoeffding's inequality (\citealp{hoeffding1963})
gives, for any fixed candidate and $t>0$,
\[
\Pr\bigl(|\widehat J_v(p)-\widetilde J(p)|>t\mid H_{\rm con}\bigr)
\le2\exp(-2t^2/v_v).
\tag{C9}\label{eq:weighted-hoeffding}
\]
A union bound over $K_{\mathcal C}$ candidates yields, with probability at least
$1-\delta$, simultaneous errors at most
$\epsilon_v=\sqrt{v_v\log(2K_{\mathcal C}/\delta)/2}$.
Let $p_J\in\arg\max_{p\in\mathcal C}J(p)$. On this event,
\begin{align*}
J(p_J)-J(p_S^*)
={}&[\widetilde J(p_J)-\widehat J_v(p_J)]
+[\widehat J_v(p_J)-\widehat J_v(p_S^*)]\\
&+[\widehat J_v(p_S^*)-\widetilde J(p_S^*)]+b(p_J)-b(p_S^*)\\
\le{}&2\epsilon_v+\rho_{\mathcal C}.
\end{align*}
The middle empirical difference is nonpositive by Equation 4.
Substituting $\epsilon_v$ proves Equation~\ref{eq:archive-bound}.
If the assumptions hold for each construction history, integrating over
$H_{\rm con}$ gives the same unconditional probability guarantee, with each
history's archive and bias range. Singleton groups with $w_g=1/n_v$
recover the independent-question special case.
\end{proof}

\paragraph{Controlling reserved selection.}
Under Theorem~\ref{thm:archive-selection}, with fixed reserved weight
sum $v_v$, fixed initial bank size $M$, and at most $M+R$ archived
prompts, for any $\delta_v\in(0,1)$ a valid bound is
\[
\zeta_{\rm sel}=\min\left\{1,\mathbb E[\rho_{\mathcal C}]
+\sqrt{2v_v\log(2(M+R)/\delta_v)}+\delta_v\right\}.
\]
This follows by integrating the conditional guarantee and bounding the
gap by one on its failure event. If the weights are history-dependent,
take the corresponding expectation under legitimate conditional
evaluation assumptions. None of these bounds require the search and
reserved success events to be independent.

\subsection{Best-Candidate Identification and Population Shift}
\label{app:identification-proof}

Equation~\ref{eq:archive-bound} controls accuracy loss. Identifying the
best candidate itself additionally requires that it be distinguishable
by population agreement, as the following corollary makes explicit.

\begin{corollary}[Identifying the best archived prompt]
\label{cor:identification}
Under Theorem~\ref{thm:archive-selection}, suppose $K_{\mathcal C}\ge2$ and the unique
accuracy maximizer $p^\dagger$ also uniquely maximizes $\widetilde J$, with margin
$\Delta_{\rm ag}=\min_{q\ne p^\dagger}[\widetilde J(p^\dagger)-\widetilde J(q)]>0$. Then
\[
\Pr(p_S^*\ne p^\dagger\mid H_{\rm con})
\le\min\{1,(K_{\mathcal C}-1)e^{-\Delta_{\rm ag}^2/(2v_v)}\}.
\tag{C10}\label{eq:identification}
\]
\end{corollary}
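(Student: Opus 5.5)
Because Equation~\eqref{eq:reserved-selection} selects the empirical maximizer, I will reduce the failure event to a union of pairwise comparisons and bound each by Hoeffding with the group weights. On the event $\{p_S^*\ne p^\dagger\}$, some $q\in\mathcal C\setminus\{p^\dagger\}$ satisfies $\widehat J_v(q)\ge\widehat J_v(p^\dagger)$. This holds whether the selected $q$ wins outright or wins a tie broken by generation order, so ties are covered by the non-strict inequality. A union bound over the $K_{\mathcal C}-1$ competitors then gives
\[
\Pr(p_S^*\ne p^\dagger\mid H_{\rm con})\le\sum_{q\ne p^\dagger}\Pr\bigl(\widehat J_v(q)-\widehat J_v(p^\dagger)\ge0\mid H_{\rm con}\bigr).
\]

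For a fixed $q$, I will write the difference as $\widehat J_v(q)-\widehat J_v(p^\dagger)=\sum_g w_g[Z_g(q)-Z_g(p^\dagger)]$. Conditional on $H_{\rm con}$, the summands are independent across groups, as in Theorem~\ref{thm:archive-selection}. The expectation-matching assumption gives the sum conditional mean $\widetilde J(q)-\widetilde J(p^\dagger)\le-\Delta_{\rm ag}$. The event in question is therefore contained in a deviation of the sum above its mean by at least $\Delta_{\rm ag}$, and the one-sided Hoeffding inequality applies.

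The main obstacle is the range of each summand. Since $Z_g(q)-Z_g(p^\dagger)\in[-1,1]$, the range length is $2w_g$. Hoeffding then yields the exponent $-2\Delta_{\rm ag}^2/\sum_g(2w_g)^2=-\Delta_{\rm ag}^2/(2v_v)$, which matches the stated bound exactly. I will therefore not attempt to tighten the range by exploiting the pairing of the two prompts on each group. The crude interval $[-1,1]$ is what produces the factor $2v_v$, and the pairing is needed only to make each group's difference a single independent term, so dependence between prompts sharing questions and teacher references is harmless.

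Summing the $K_{\mathcal C}-1$ identical bounds gives $(K_{\mathcal C}-1)e^{-\Delta_{\rm ag}^2/(2v_v)}$. Capping at one, since it is a probability, gives Equation~\eqref{eq:identification}. The uniqueness of the accuracy maximizer is not used in the probability calculation. It serves only to make $p^\dagger$ well defined as a target.
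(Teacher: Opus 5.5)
Your proposal is correct and matches the paper's proof essentially step for step. Both reduce misselection to some competitor tying or beating $p^\dagger$, which covers any tie rule. Both then apply one-sided Hoeffding to the independent group differences $w_g[Z_g(q)-Z_g(p^\dagger)]$ with range lengths $2w_g$ to obtain $e^{-\Delta_{\rm ag}^2/(2v_v)}$, and finish with a union bound over the $K_{\mathcal C}-1$ competitors capped at one.
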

\begin{proof}[Proof of Corollary~\ref{cor:identification}]
For any $q\ne p^\dagger$, the independent group differences
$w_g[Z_g(q)-Z_g(p^\dagger)]$ have range lengths $2w_g$ and total
expectation at most $-\Delta_{\rm ag}$. One-sided Hoeffding therefore gives
\[
\Pr\bigl(\widehat J_v(q)\ge\widehat J_v(p^\dagger)\mid H_{\rm con}\bigr)
\le e^{-\Delta_{\rm ag}^2/(2v_v)}.
\]
An incorrect selection requires at least one of the $K_{\mathcal C}-1$ competitors
to tie or exceed $p^\dagger$. A union bound and the trivial probability
bound of one prove Equation~\ref{eq:identification}, regardless of the
fixed tie rule. For $K_{\mathcal C}=1$, selection error is zero.
\end{proof}

The margin $\Delta_{\rm ag}$ is the smallest population-agreement lead of the
candidate with the highest population accuracy over a competitor. For a fixed archive and
matched population, increasing independent evaluation information so
that $v_v\to0$ makes the probability of selecting another candidate
vanish. The condition $\max_gw_g\to0$ is sufficient because
$v_v\le\max_gw_g$. Agreement and accuracy share a unique best candidate
whenever $J(p^\dagger)-J(q)>\tau_{p^\dagger q}$ for every competitor:
Equation~\ref{eq:relative-bias} then implies
$\widetilde J(p^\dagger)-\widetilde J(q)>0$. This condition allows nonzero proxy bias;
the population bias bounds and margins are not directly observed
during label-free construction. If the proxy instead changes the
identity of the best candidate, more evaluation data make its ranking
more precise without correcting the underlying bias.

If reserved and deployment populations differ, define
$b_{\rm sel}(p)=J_{\rm sel}(p)-\widetilde J_{\rm sel}(p)$ and
$d(p)=J_{\rm dep}(p)-J_{\rm sel}(p)$. Under the assumptions of
Theorem~\ref{thm:archive-selection} applied to the selection population,
the same argument gives the following bound with conditional probability
at least $1-\delta$, given $H_{\rm con}$:
\[
\max_{p\in\mathcal C}J_{\rm dep}(p)-J_{\rm dep}(p_S^*)
\le \operatorname{osc}_{\mathcal C}(b_{\rm sel}+d)
+\sqrt{2v_v\log(2K_{\mathcal C}/\delta)},
\tag{C11}\label{eq:population-shift}
\]
where $\operatorname{osc}_{\mathcal C}(f)=\max_{\mathcal C}f-
\min_{\mathcal C}f$. Population shift thus contributes through its
relative effect on the candidates.

Equation~\ref{eq:two-gaps} separates the global gap into generation
and selection. Appendices~\ref{app:robust-global-proof}, \ref{app:finite-rule}, \ref{app:finite-mass}, and~\ref{app:archive-proof}
analyze generation and its connection to this selection bound.

\subsection{A Distillation Risk Bound}
\label{app:distillation-risk}

The selection analysis also yields a bound relative to the teacher.
Let $c_T\in\{0,1\}$ denote teacher-reference correctness under the same
task scorer as $c_p$, and define
$\mathcal R_T=\mathbb E[1-c_T]$ and $\mathcal R_S(p)=1-J(p)$.
Assume scoring is compatible with agreement: for every candidate,
$\widetilde c_p=1$ implies $c_p=c_T$. This holds when agreeing valid normalized
answers receive the same task correctness score. Invalid or truncated
responses retain $\widetilde c_p=0$ under Equation 2 and remain included in
$1-\widetilde J(p)$.

\begin{corollary}[Distillation risk]
\label{cor:distillation-risk}
Under Theorem~\ref{thm:archive-selection} and the scoring compatibility
above, for any $\delta\in(0,1)$, with conditional probability at least
$1-\delta$,
\[
\mathcal R_S(p_S^*)
\le \mathcal R_T+\min_{p\in\mathcal C}\bigl[1-\widetilde J(p)\bigr]
+\sqrt{2v_v\log(2K_{\mathcal C}/\delta)}.
\tag{C12}\label{eq:distillation-risk}
\]
\end{corollary}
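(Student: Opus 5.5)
The plan is to bound the realized risk of $p_S^*$ in two stages. First, a pointwise inequality converts student risk into teacher risk plus disagreement. Then Theorem~\ref{thm:archive-selection}'s simultaneous concentration event replaces the disagreement of $p_S^*$ by the smallest disagreement in the archive.

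\textbf{Pointwise comparison.} Fix any candidate $p$. On the event $\widetilde c_p=1$, scoring compatibility gives $c_p=c_T$, so $1-c_p=1-c_T$. On the event $\widetilde c_p=0$, we use the trivial bound $1-c_p\le 1$. Together these give, pointwise,
\[
1-c_p\le (1-c_T)\,\widetilde c_p+(1-\widetilde c_p)\le (1-c_T)+(1-\widetilde c_p).
\]
Invalid or truncated responses fall in the second case, since they have $\widetilde c_p=0$. Taking expectations over the common input and decoding law (the population-matching convention of Appendix~\ref{app:statistical-units}) yields $\mathcal R_S(p)\le\mathcal R_T+1-\widetilde J(p)$ for every fixed $p\in\mathcal C$. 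Conditional on $H_{\rm con}$ the archive is fixed, so this applies to every archived prompt, including $p_S^*$, because the inequality holds for each member of $\mathcal C$.

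\textbf{Selection step.} Next I would reuse the event from the proof of Theorem~\ref{thm:archive-selection}. Weighted Hoeffding (Equation~\ref{eq:weighted-hoeffding}) with a union bound over the $K_{\mathcal C}$ candidates shows that, with conditional probability at least $1-\delta$, every candidate satisfies $|\widehat J_v(p)-\widetilde J(p)|\le\epsilon_v=\sqrt{v_v\log(2K_{\mathcal C}/\delta)/2}$. Let $p^\circ\in\arg\max_{\mathcal C}\widetilde J$. On this event, Equation~\eqref{eq:reserved-selection} gives $\widehat J_v(p_S^*)\ge\widehat J_v(p^\circ)$, and therefore
\[
\widetilde J(p_S^*)\ge\widehat J_v(p_S^*)-\epsilon_v\ge\widehat J_v(p^\circ)-\epsilon_v\ge\widetilde J(p^\circ)-2\epsilon_v.
\]
Since $2\epsilon_v=\sqrt{2v_v\log(2K_{\mathcal C}/\delta)}$, this reads $1-\widetilde J(p_S^*)\le\min_{p\in\mathcal C}[1-\widetilde J(p)]+\sqrt{2v_v\log(2K_{\mathcal C}/\delta)}$. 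Substituting into the pointwise bound at $p=p_S^*$ gives Equation~\eqref{eq:distillation-risk}.

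\textbf{Main obstacle.} The only delicate point is that $p_S^*$ is data-dependent, so the inequality $\mathcal R_S(p)\le\mathcal R_T+1-\widetilde J(p)$ must hold for every archive member, not merely in expectation over the selection. It does: the inequality involves only population quantities of fixed prompts given $H_{\rm con}$, and it holds deterministically for each $p\in\mathcal C$. The remaining care is to check that $\mathcal R_T$ and $\widetilde J$ are defined over the same joint law of teacher and student decoding, so that the expectations in the pointwise step are taken against one measure. No bias-span term $\rho_{\mathcal C}$ appears, because the pointwise inequality bypasses the offset comparison used in Theorem~\ref{thm:archive-selection}.
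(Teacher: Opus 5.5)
Your proposal is correct and follows the paper's proof essentially step for step. It uses the pointwise bound $1-c_p\le(1-c_T)+(1-\widetilde c_p)$ from scoring compatibility, takes expectations to get $\mathcal R_S(p)\le\mathcal R_T+1-\widetilde J(p)$ for each archived prompt, and applies the uniform concentration event of Theorem~\ref{thm:archive-selection} to obtain $\widetilde J(p_S^*)\ge\max_{p\in\mathcal C}\widetilde J(p)-2\epsilon_v$. Your explicit case split on $\widetilde c_p$ and your remark about a common probability space for teacher and student decoding only spell out what the paper's short proof leaves implicit.
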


\begin{proof}
Scoring compatibility gives the pointwise inequality
$1-c_p\le(1-c_T)+(1-\widetilde c_p)$: when the teacher is correct and agreement
holds, the student is also correct. Taking expectations yields
$\mathcal R_S(p)\le\mathcal R_T+1-\widetilde J(p)$ for every candidate.
On the uniform concentration event in Appendix~\ref{app:selection-proof},
empirical maximization in Equation 4 gives
$\widetilde J(p_S^*)\ge\max_{p\in\mathcal C}\widetilde J(p)-2\epsilon_v$, where
$\epsilon_v=\sqrt{v_v\log(2K_{\mathcal C}/\delta)/2}$.
Combining these inequalities proves Equation~\ref{eq:distillation-risk}.
\end{proof}

The three terms separate teacher task error, the smallest agreement loss
attainable within the archive, and reserved-selection error. Here
$1-\widetilde J(p)$ serves as an answer-level distillation loss; when all responses
are valid and untruncated, it is the probability of different normalized
teacher and student answers. Synthesis and refinement determine which
agreement losses are attainable in the archive, while reserved
selection controls the final estimation term.

\subsection{General Archive Progress and Conditional Contraction}
\label{app:archive-proof}

This analysis permits imperfect solutions and reference answers. Its
population-level progress conditions complement the empirical
agreement analysis of Theorem~\ref{thm:robust-global}. They also
describe occasional archive improvements when proposals do not improve
their parents on average.

\paragraph{Source synthesis sets the starting quality.}
Let $\mathcal C_t$ be the archive after slot $t$, with
$\mathcal C_0=\mathcal P$. Write
\[
X_t=J_\Pi^*-\max_{p\in\mathcal C_t}J(p),\qquad
\mathbb E[X_0]\le d_{\rm src}.
\]
Here $X_R=\Delta_{\rm gen}$.
Thus $d_{\rm src}$ bounds the initial bank's expected distance from
the global optimum. It concerns the best initial candidate in true
accuracy; the search-agreement winner $p_0$ is the parent used for
refinement and can be a different candidate.

Source solutions can help at two distinct steps: they expose reusable
task knowledge, and synthesis expresses that knowledge as instructions
the student can execute on new inputs. Appendix~\ref{app:source-witness}
formalizes these steps through an evidence-exposure probability and a
conditional transfer rate, deriving a sufficient bound for
$d_{\rm src}$. Exposure concerns the examples supplied to an admitted
candidate, including the effects of retries and uniqueness checks.
Transfer is a property of the teacher--prompt--student interaction;
the presence of a worked solution alone does not establish it.

\paragraph{A revision's gain and its contribution to the archive.}
For a parent $p$ and proposal $q$, let $c_p,c_q$ be correctness
indicators on a common population input and decoding probability space
with the required marginal laws. Equal prompts use the same indicator.
Then
\[
J(q)-J(p)=
\Pr(c_p=0,c_q=1)-\Pr(c_p=1,c_q=0).
\tag{C13}\label{eq:repair-regression}
\]
The two terms are corrected errors and newly introduced errors.
Their difference describes the proposal's change relative to its
parent. To improve the archive, that change must also overcome any
shortfall of the parent relative to its best existing candidate.
Define
\[
\Delta J_t=J(q_t)-J(p_{t-1}),\qquad
\delta_{{\rm par},t-1}
=\max_{p\in\mathcal C_{t-1}}J(p)-J(p_{t-1}).
\]
A failed, skipped, or incomplete slot is represented analytically by
$q_t=p_{t-1}$. This convention adds no model call or candidate.
With $[z]_+=\max\{z,0\}$, retaining every fully evaluated valid
proposal gives the following exact relation.

\begin{proposition}[Progress of the retained archive]
\label{prop:archive-progress}
For every construction trajectory,
\[
X_t=X_{t-1}
-\underbrace{[\Delta J_t-\delta_{{\rm par},t-1}]_+}_{\text{new archive gain}}.
\tag{C14}\label{eq:archive-progress}
\]
In particular, $0\le X_t\le X_{t-1}$.
\end{proposition}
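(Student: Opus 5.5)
The identity follows from a case split on whether slot $t$ adds a genuinely new candidate, together with the observation that the archive maximum can only change through the single element added at that slot. Write $A_{t-1}=\max_{p\in\mathcal C_{t-1}}J(p)$, so that $X_{t-1}=J_\Pi^*-A_{t-1}$. By definition of $\delta_{{\rm par},t-1}$ we have $J(p_{t-1})=A_{t-1}-\delta_{{\rm par},t-1}$. Hence
\[
\Delta J_t-\delta_{{\rm par},t-1}=J(q_t)-A_{t-1},
\]
and the claimed relation (C14) reduces to showing
\[
\max_{p\in\mathcal C_t}J(p)=A_{t-1}+[J(q_t)-A_{t-1}]_+=\max\{A_{t-1},J(q_t)\}.
\]
Subtracting both sides from $J_\Pi^*$ then gives $X_t=X_{t-1}-[\cdot]_+$. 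Nonnegativity of the positive part yields $X_t\le X_{t-1}$. Finally, $X_t\ge0$ holds because $\mathcal C_t\subseteq\Pi$ (all candidates are admissible) and $J_\Pi^*$ is the supremum over $\Pi$.

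The cases are as follows.

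\textbf{Failed, skipped, or incomplete slot.} Here $\mathcal C_t=\mathcal C_{t-1}$. By the analytical convention $q_t=p_{t-1}$, we get $J(q_t)=J(p_{t-1})\le A_{t-1}$, since the parent always lies in the archive. The positive part therefore vanishes, consistent with an unchanged maximum. The fact that the parent lies in the archive needs a short induction. $p_0\in\mathcal P=\mathcal C_0$, and every parent update in Equation~\eqref{eq:parent-update} sets $p_t=q_t$ only for an admissible, fully evaluated revision, which is added to $\mathcal C_t$ in the same slot.

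\textbf{Valid, fully evaluated proposal.} Here $\mathcal C_t=\mathcal C_{t-1}\cup\{q_t\}$, where deduplication may mean that $q_t$ is already present. In either sub-case the maximum of $J$ over $\mathcal C_t$ equals $\max\{A_{t-1},J(q_t)\}$. This holds because $J$ is a function of the prompt text alone (equal prompts share the same indicator), so a duplicate contributes a value already included in $A_{t-1}$. Notably, this step does not depend on whether the parent is updated; this is exactly where retaining every fully evaluated revision is used.

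The main point requiring care, rather than a real obstacle, is the bookkeeping of conventions. The failed-slot convention must make the gain term vanish, and deduplication must not alter the maximum. The argument is pathwise, so no probabilistic assumptions are needed.
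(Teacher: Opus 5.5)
Your proof is correct and follows essentially the same route as the paper: both reduce (C14) to $\max_{p\in\mathcal C_t}J(p)=\max\{\max_{p\in\mathcal C_{t-1}}J(p),J(q_t)\}$, handle failed slots through the analytical convention $q_t=p_{t-1}$ and duplicates through deduplication invariance, and then identify $J(q_t)-\max_{p\in\mathcal C_{t-1}}J(p)$ with $\Delta J_t-\delta_{{\rm par},t-1}$. You also spell out two facts the paper uses without stating them: the induction showing the parent always lies in the archive, and $\mathcal C_t\subseteq\Pi$ as the reason $X_t\ge0$.
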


The identity follows because the best accuracy after a slot is the
maximum of the previous best accuracy and the proposal's accuracy.
It separates improving the parent from improving the archive. A
proposal can repair its parent yet remain below another saved
candidate, giving no archive gain. A poor proposal cannot erase a good
candidate that is already available for selection. Consequently,
occasional strong proposals can improve the archive even when the
mean accuracy of the proposal distribution is below the parent's.
Final selection can still lose some of this retained quality.

\paragraph{When useful proposals reduce the global gap.}
Let $\mathcal F^{\rm fb}_{t-1}$ be the construction history before
forming the next feedback batch, excluding reserved observations.
It precedes the pre-proposal history $\mathcal F^{\rm prop}_{t-1}$
of Appendix~\ref{app:robust-global-proof}; expectations here average
over both feedback formation and proposal randomness. The following result controls
the part of the global gap above a residual level $\varepsilon$.

\begin{theorem}[Generation through effective archive improvements]
\label{thm:generation-gap}
Assume a nonempty initial bank is completed almost surely. Let
$\varepsilon\in[0,1]$ and $r,\theta\in(0,1]$. Suppose that at every
pre-feedback history with $X_{t-1}>\varepsilon$,
\[
\Pr\!\left\{
\Delta J_t-\delta_{{\rm par},t-1}
\ge\theta(X_{t-1}-\varepsilon)
\;\middle|\;\mathcal F^{\rm fb}_{t-1}
\right\}\ge r.
\]
Then, for $R$ refinement slots,
\[
\mathbb E[(X_R-\varepsilon)_+]
\le(1-r\theta)^R\mathbb E[(X_0-\varepsilon)_+].
\tag{C15}\label{eq:generation-bound}
\]
\end{theorem}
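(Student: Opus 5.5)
The plan is a one-step conditional contraction for $Y_t=(X_t-\varepsilon)_+$, followed by iteration via the tower property. By Proposition~\ref{prop:archive-progress}, $X_t=X_{t-1}-[\Delta J_t-\delta_{{\rm par},t-1}]_+$, and $X_t\le X_{t-1}$ pathwise. Since $z\mapsto(z-\varepsilon)_+$ is nondecreasing, $Y_t\le Y_{t-1}$ always holds. So the whole task is to show that, conditional on $\mathcal F^{\rm fb}_{t-1}$, the decrease has expectation at least $r\theta Y_{t-1}$.

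Fix a pre-feedback history. If $X_{t-1}\le\varepsilon$, then $Y_{t-1}=0$ and monotonicity gives $Y_t=0$, so $\mathbb E[Y_t\mid\mathcal F^{\rm fb}_{t-1}]\le(1-r\theta)Y_{t-1}$ holds trivially. If $X_{t-1}>\varepsilon$, let $A_t$ be the event in the hypothesis. On $A_t$, the archive gain is at least $\theta(X_{t-1}-\varepsilon)$, so
\[
X_t-\varepsilon\le(1-\theta)(X_{t-1}-\varepsilon),
\]
and the right-hand side is nonnegative because $\theta\le1$; hence $Y_t\le(1-\theta)Y_{t-1}$ on $A_t$. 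Off $A_t$ we only use $Y_t\le Y_{t-1}$. Note that $Y_{t-1}$ is $\mathcal F^{\rm fb}_{t-1}$-measurable, since $X_{t-1}$ depends only on the archive built before slot $t$ (population accuracies are deterministic functions of prompts). Therefore
\[
\mathbb E[Y_t\mid\mathcal F^{\rm fb}_{t-1}]\le Y_{t-1}\bigl(1-\theta\Pr(A_t\mid\mathcal F^{\rm fb}_{t-1})\bigr)\le(1-r\theta)Y_{t-1}.
\]

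Taking expectations gives $\mathbb E[Y_t]\le(1-r\theta)\mathbb E[Y_{t-1}]$, and iterating from $t=1$ to $R$ yields \eqref{eq:generation-bound}. Almost-sure completion of the bank ensures that $X_0$ and hence $Y_0$ are well defined. Failed or skipped slots are covered by the convention $q_t=p_{t-1}$: for them $\Delta J_t=0$, so the gain is $[-\delta_{{\rm par},t-1}]_+=0$, and the hypothesis must be met by the proposal law including such outcomes.

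The only delicate point is measurability and the ordering of information. We need $X_{t-1}$ to be known at $\mathcal F^{\rm fb}_{t-1}$, and the hypothesis is required only on histories with $X_{t-1}>\varepsilon$, which is why the trivial case is split off first. No independence across slots is needed, since each step conditions on the full past.
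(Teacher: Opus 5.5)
Your proof is correct and follows the paper's argument essentially step for step: the same $Y_t=(X_t-\varepsilon)_+$, the trivial case via archive monotonicity, contraction by the factor $(1-\theta)$ on the improvement event, the one-step conditional bound $(1-r\theta)Y_{t-1}$, and iteration by the tower property. You also make explicit two points the paper leaves implicit: why $Y_{t-1}$ is $\mathcal F^{\rm fb}_{t-1}$-measurable, and why the positive part in the archive gain can be dropped on the event.
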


On the stated event, the excess gap contracts by at least a fraction
$\theta$; archive retention makes it nonincreasing otherwise.
The success probability includes admission and evaluation failures.
Unexposed errors and early stopping enter through the residual or the
round-specific progress probabilities below.

This condition can be connected to local properties of feedback.
Consider a specified population of parent errors. A useful witness is
an item of feedback that displays correct repair evidence for those
errors, identified from its content before a proposal is generated.
Appendix~\ref{app:witness-progress} derives a lower bound on the
probability of effective archive improvement from three quantities:
the probability of exposing that witness, the expected error mass
corrected conditional on exposure, and the expected new error mass
among proposals achieving a specified amount of correction. The
resulting gain must exceed the parent deficit. Proposals outside
this useful subset may be harmful without damaging the retained
archive, including when the feedback batch itself is deterministic.

Appendix~\ref{app:global-proof} gives a complementary bound under
mean-repair conditions, accounting for parent selection on a reused
search set.

\begin{proof}[Proof of Proposition~\ref{prop:archive-progress}]
Let $V_{t-1}=\max_{p\in\mathcal C_{t-1}}J(p)$. For a valid,
fully evaluated proposal,
$\max_{p\in\mathcal C_t}J(p)=\max\{V_{t-1},J(q_t)\}$.
Deduplication does not change this maximum. A failed or skipped slot
has $q_t=p_{t-1}$ analytically, so the same equality holds without
adding a candidate. Therefore
\[
\max_{p\in\mathcal C_t}J(p)-V_{t-1}
=[J(q_t)-V_{t-1}]_+
=[\Delta J_t-\delta_{{\rm par},t-1}]_+.
\]
Subtract from $J_\Pi^*$ to obtain Equation~\ref{eq:archive-progress}.
Summing over slots gives Equation~\ref{eq:complete-identity}.
\end{proof}

\begin{proof}[Proof of Theorem~\ref{thm:generation-gap}]
Put $Y_t=(X_t-\varepsilon)_+$. If $Y_{t-1}=0$, archive
monotonicity gives $Y_t=0$. Otherwise the stated improvement event
gives $Y_t\le(1-\theta)Y_{t-1}$; off it, $Y_t\le Y_{t-1}$.
Consequently
\[
\mathbb E[Y_t\mid\mathcal F^{\rm fb}_{t-1}]
\le(1-r\theta)Y_{t-1}.
\]
Iterated conditional expectation proves
Equation~\ref{eq:generation-bound}. No independence of proposal
outcomes is used. Failed slots are included in the conditional
probability, rather than excluded from the analysis.
\end{proof}

For deterministic round-specific lower bounds $r_t\theta_t\in[0,1]$,
the same argument replaces $(1-r\theta)^R$ by
$\prod_{t=1}^R(1-r_t\theta_t)$. A known stopped slot has rate zero.
For history-dependent rates the conditional recurrence remains valid,
but their realized average cannot be substituted into this product.
The requirement applies only while the gap exceeds $\varepsilon$.
An unobservable error mode can therefore impose a nonzero residual
without contradicting the theorem.

To connect archive progress to the deployed prompt, telescope Equation~\ref{eq:archive-progress} and substitute
into Equation~\ref{eq:two-gaps} to obtain
\[
\begin{aligned}
J_\Pi^*-J(p_S^*)={}&
\underbrace{X_0}_{\text{initial global gap}}\\
&-\underbrace{\sum_{t=1}^R
[\Delta J_t-\delta_{{\rm par},t-1}]_+}_{\text{accumulated archive gains}}
+\underbrace{\Delta_{\rm sel}}_{\text{selection loss}}.
\end{aligned}
\tag{C16}\label{eq:complete-identity}
\]
Source synthesis sets the starting quality. Revisions improve it only
when they add a better candidate to the archive. Agreement-based
selection determines how much of that quality reaches deployment.
This is an identity for the original global objective, with no
assumption that the search itself finds a global maximizer.

\begin{corollary}[Complete construction gap]
\label{cor:global-gap}
Under Theorem~\ref{thm:generation-gap}, let
$d_{\rm src}\ge\mathbb E[X_0]$ and define
\[
B_{\rm arch}=\min\!\left\{1,d_{\rm src},
\varepsilon+(1-r\theta)^R\mathbb E[(X_0-\varepsilon)_+]\right\}.
\]
Suppose final selection completes almost surely and
$\mathbb E[\Delta_{\rm sel}]\le\zeta_{\rm sel}$. Then
\[
\mathbb E[J_\Pi^*-J(p_S^*)]
\le\min\{1,B_{\rm arch}+\zeta_{\rm sel}\}.
\tag{C17}\label{eq:global-repair}
\]
\end{corollary}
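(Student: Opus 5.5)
The statement to prove is Corollary~\ref{cor:global-gap}. I would obtain it by taking expectations in the exact decomposition of Equation~\eqref{eq:two-gaps}, $J_\Pi^*-J(p_S^*)=\Delta_{\rm gen}+\Delta_{\rm sel}$, with $\Delta_{\rm gen}=X_R$. This gives $\mathbb E[J_\Pi^*-J(p_S^*)]=\mathbb E[X_R]+\mathbb E[\Delta_{\rm sel}]$. The selection term is bounded by $\zeta_{\rm sel}$ by hypothesis; almost-sure completion of final selection makes $p_S^*$, and hence $\Delta_{\rm sel}$, well defined. The remaining work is to show $\mathbb E[X_R]\le B_{\rm arch}$ and then cap the total at one.

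I would bound $\mathbb E[X_R]$ by each of the three entries of the minimum defining $B_{\rm arch}$ separately.
\begin{itemize}
\item \emph{Entry $1$.} Since $X_R=J_\Pi^*-\max_{p\in\mathcal C_R}J(p)$ with $J\in[0,1]$ and $\mathcal C_R\subseteq\Pi$, we have $0\le X_R\le 1$.
\item \emph{Entry $d_{\rm src}$.} Proposition~\ref{prop:archive-progress} gives pathwise monotonicity $X_R\le X_0$, so $\mathbb E[X_R]\le\mathbb E[X_0]\le d_{\rm src}$.
\item \emph{Third entry.} I would use the elementary pointwise inequality $X_R\le\varepsilon+(X_R-\varepsilon)_+$. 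Taking expectations and applying Theorem~\ref{thm:generation-gap} (Equation~\eqref{eq:generation-bound}) gives $\mathbb E[X_R]\le\varepsilon+(1-r\theta)^R\,\mathbb E[(X_0-\varepsilon)_+]$.
\end{itemize}
Since $\mathbb E[X_R]$ is below all three quantities, it is below their minimum $B_{\rm arch}$.

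Combining the two parts yields $\mathbb E[J_\Pi^*-J(p_S^*)]\le B_{\rm arch}+\zeta_{\rm sel}$. Because $0\le J_\Pi^*-J(p_S^*)\le1$, the expectation is also at most one, which gives Equation~\eqref{eq:global-repair}.

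The argument is routine given the earlier results. The only step needing care is the third entry: it relies on Theorem~\ref{thm:generation-gap}, whose hypotheses are assumed here, including almost-sure completion of the initial bank so that $X_0$ is defined. I also need to confirm that failed or skipped slots, treated analytically as $q_t=p_{t-1}$, preserve both $X_R=\Delta_{\rm gen}$ and the monotonicity used for the $d_{\rm src}$ entry. Both follow from the convention used in the proof of Proposition~\ref{prop:archive-progress}.
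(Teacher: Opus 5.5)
Your proposal is correct and follows essentially the same route as the paper. You bound $\mathbb E[X_R]=\mathbb E[\Delta_{\rm gen}]$ by each entry of $B_{\rm arch}$: by $\varepsilon+(1-r\theta)^R\mathbb E[(X_0-\varepsilon)_+]$ via $X_R\le\varepsilon+(X_R-\varepsilon)_+$ and Theorem~\ref{thm:generation-gap}, by $d_{\rm src}$ via archive monotonicity, and by one trivially. You then take expectations in Equation~\eqref{eq:two-gaps}, add $\zeta_{\rm sel}$, and cap the result at one.
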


The selection term can be supplied by
Theorem~\ref{thm:archive-selection} under its sampling assumptions;
Appendix~\ref{app:selection-proof} gives the expectation bound. Dependence
between generation and selection is allowed. With fixed valid
progress rates, additional rounds reduce the excess generation gap
above $\varepsilon$; reducing selection error additionally requires
informative reserved evaluation and limited relative proxy bias.
The expectation of the initial excess gap is retained explicitly:
it cannot in general be replaced by \mbox{$[d_{\rm src}-\varepsilon]_+$}.

\begin{proof}[Proof of Corollary~\ref{cor:global-gap}]
Theorem~\ref{thm:generation-gap} and
$X_R\le\varepsilon+(X_R-\varepsilon)_+$ bound the expected
generation gap by
$\varepsilon+(1-r\theta)^R\mathbb E[(X_0-\varepsilon)_+]$.
Archive monotonicity also gives
$\mathbb E[X_R]\le\mathbb E[X_0]\le d_{\rm src}$, and
$X_R\le1$. Thus $\mathbb E[\Delta_{\rm gen}]\le B_{\rm arch}$.
Take expectations in Equation~\ref{eq:two-gaps}, add
$\mathbb E[\Delta_{\rm sel}]\le\zeta_{\rm sel}$, and use the
trivial upper bound of one to prove Equation~\ref{eq:global-repair}.
Linearity of expectation permits dependence between generation and
selection.
\end{proof}

\subsection{From Feedback Witnesses to Archive Improvements}
\label{app:witness-progress}

Fix a pre-feedback history $\mathcal F^{\rm fb}_{t-1}$ and its parent $p$.
Choose a measurable target set $\mathcal A_t\subseteq\{c_p=0\}$ on the
population probability space, with mass $\mu_t>0$. The set is fixed
before the proposal; it is an analytical description of errors,
not information supplied by ground-truth labels to construction.
For a proposed prompt $q_t$, write
\[
U_t=\Pr(\mathcal A_t,c_{q_t}=1),\qquad
L_t=\Pr(c_p=1,c_{q_t}=0).
\]
Thus $0\le U_t\le \mu_t$, and Equation~\ref{eq:repair-regression}
gives $\Delta J_t\ge U_t-L_t$.

Let $W_t$ be the event that the actual feedback input contains
correct repair evidence for this target set. Its criterion is
specified from the evidence content before proposal generation.
Suppose
\[
\Pr(W_t\mid\mathcal F^{\rm fb}_{t-1})\ge f_t,\qquad
\mathbb E[U_t\mid\mathcal F^{\rm fb}_{t-1},W_t]\ge \underline U_t.
\]
Dummy proposals use $q_t=p$, so they have $U_t=L_t=0$.
The correction expectation includes proposal admission and evaluation
failures. For a zero-probability witness the success lower bound is
zero and no conditional moments are invoked. The following regression
condition concerns proposals that achieve a specified amount of
target correction; other proposals can have arbitrary damage.

\begin{proposition}[Witness-supported archive improvement]
\label{prop:witness-progress}
Choose thresholds $0<\tau_t<\underline U_t\le \mu_t$ and $\ell_t>0$, and suppose
\[
\mathbb E[L_t\mid\mathcal F^{\rm fb}_{t-1},W_t,U_t\ge \tau_t]\le \overline L_t.
\]
Then, conditional on $\mathcal F^{\rm fb}_{t-1}$,
\[
\begin{aligned}
&\Pr\{\Delta J_t-\delta_{{\rm par},t-1}
\ge \tau_t-\ell_t-\delta_{{\rm par},t-1}
\mid\mathcal F^{\rm fb}_{t-1}\}\\
&\hspace{1em}\ge
f_t\frac{\underline U_t-\tau_t}{\mu_t-\tau_t}
\left[1-\frac{\overline L_t}{\ell_t}\right]_+ .
\end{aligned}
\tag{C18}\label{eq:witness-success}
\]
\end{proposition}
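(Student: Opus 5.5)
The plan is to exhibit one explicit event on which the archive-improvement inequality holds, and then lower-bound that event's conditional probability as a product of three factors. Throughout, work conditionally on $\mathcal F^{\rm fb}_{t-1}$. This fixes the parent $p$, the target set $\mathcal A_t$ with mass $\mu_t$, and the parent deficit $\delta_{{\rm par},t-1}$, which therefore act as constants.

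The event is $E=W_t\cap\{U_t\ge\tau_t\}\cap\{L_t\le\ell_t\}$. On $E$, Equation~\ref{eq:repair-regression} gives $\Delta J_t\ge U_t-L_t$, as already noted before the proposition. Hence on $E$ we have $\Delta J_t\ge\tau_t-\ell_t$. Subtracting the constant $\delta_{{\rm par},t-1}$ yields the event inside the probability in Equation~\ref{eq:witness-success}. It therefore suffices to show that $\Pr(E\mid\mathcal F^{\rm fb}_{t-1})$ is at least the right-hand side. The chain rule factors this probability as
\[
\Pr(W_t\mid\mathcal F^{\rm fb}_{t-1})\cdot
\Pr(U_t\ge\tau_t\mid\mathcal F^{\rm fb}_{t-1},W_t)\cdot
\Pr(L_t\le\ell_t\mid\mathcal F^{\rm fb}_{t-1},W_t,U_t\ge\tau_t).
\]
The first factor is at least $f_t$ by assumption.

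For the second factor, use a reverse Markov (Paley--Zygmund type) argument, exploiting the bound $0\le U_t\le\mu_t$. Write $P=\Pr(U_t\ge\tau_t\mid\cdot)$ and split the conditional expectation of $U_t$ on $\{U_t\ge\tau_t\}$ and its complement:
\[
\underline U_t\le\mathbb E[U_t\mid\cdot]\le\mu_t P+\tau_t(1-P).
\]
Rearranging gives $P\ge(\underline U_t-\tau_t)/(\mu_t-\tau_t)$. The denominator is positive because $\tau_t<\underline U_t\le\mu_t$.

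For the third factor, apply Markov's inequality to the nonnegative $L_t$ under the conditional law. This gives $\Pr(L_t>\ell_t\mid\cdot)\le\overline L_t/\ell_t$, so the complementary probability is at least $1-\overline L_t/\ell_t$. Since probabilities are nonnegative, this improves to $[1-\overline L_t/\ell_t]_+$. Multiplying the three lower bounds, each nonnegative, completes the argument.

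The main obstacle is the bookkeeping around degenerate conditioning rather than any inequality. If $\Pr(W_t\mid\mathcal F^{\rm fb}_{t-1})=0$, the conditional moments given $W_t$ are undefined. In that case $f_t\le 0$, so the right-hand side is nonpositive and the claim is trivial; this matches the paper's convention that a zero-probability witness has success bound zero. If instead $\Pr(U_t\ge\tau_t\mid W_t)=0$, the reverse Markov step would force $\underline U_t\le\tau_t$, contradicting the threshold choice. So whenever $W_t$ has positive probability, the last conditional expectation is well defined.

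I would also remark on two points. Dummy proposals ($q_t=p$, with $U_t=L_t=0$) and admission failures are already absorbed into the expectations defining $\underline U_t$ and $\overline L_t$. No independence between correction and damage is used, since the damage bound is imposed only conditional on the correction event.
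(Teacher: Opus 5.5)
Your proposal is correct and follows the paper's proof essentially step for step. Both use the event $W_t\cap\{U_t\ge\tau_t\}\cap\{L_t\le\ell_t\}$, the reverse-Markov bound $\underline U_t\le\tau_t+(\mu_t-\tau_t)\Pr(U_t\ge\tau_t\mid W_t,\mathcal F^{\rm fb}_{t-1})$ from $0\le U_t\le\mu_t$, Markov's inequality for $L_t$ given the correction event, and multiplication by $f_t$, and your treatment of the degenerate conditioning cases matches the paper's remarks.
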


\begin{proof}
Condition on the history and $W_t$. Boundedness of $U_t$ implies
\[
\underline U_t\le\mathbb E[U_t\mid W_t,\mathcal F^{\rm fb}_{t-1}]
\le \tau_t+(\mu_t-\tau_t)\Pr(U_t\ge \tau_t\mid W_t,\mathcal F^{\rm fb}_{t-1}).
\]
Condition further on $U_t\ge \tau_t$, which has positive conditional
probability by the preceding bound. Markov's inequality gives
$\Pr(L_t>\ell_t\mid W_t,\mathcal F^{\rm fb}_{t-1},U_t\ge \tau_t)\le \overline L_t/\ell_t$.
Multiplying the lower bound on $U_t\ge \tau_t$ by the resulting
conditional probability of $L_t\le\ell_t$ lower-bounds their
intersection. On that intersection $\Delta J_t\ge \tau_t-\ell_t$.
Multiplying also by
$\Pr(W_t\mid\mathcal F^{\rm fb}_{t-1})\ge f_t$ proves the claim.
The regression condition describes their dependence; independence
between correction and regression is unnecessary.
\end{proof}

If the threshold net gain is at least
$\theta(X_{t-1}-\varepsilon)$ and the right-hand side is at least
$r$ whenever $X_{t-1}>\varepsilon$, the conditions of
Theorem~\ref{thm:generation-gap} follow. This implication derives a
tail guarantee from conditional repair moments and evidence exposure,
rather than assuming a tail probability separately. If only
$\mathbb E[L_t\mid\mathcal F^{\rm fb}_{t-1},W_t]\le \overline L_t^{\rm all}$
is available, without the additional condition $U_t\ge\tau_t$,
a union bound instead gives the sufficient probability
$f_t[(\underline U_t-\tau_t)/(\mu_t-\tau_t)-\overline L_t^{\rm all}/\ell_t]_+$.

A more explicit sufficient regime uses the comparison prompt and
avoidable-error set of Appendix~\ref{app:repair-conditions}.
Take $\mathcal A_t=\{c_{p^\circ}=1,c_p=0\}$, so
$\mu_t\ge X_{t-1}+\delta_{{\rm par},t-1}-\varepsilon_\circ$.
Suppose exposure has probability at least $f$, and conditional
corrected mass given exposure is at least $s \mu_t$. Choose
$0<b<a<s\le1$, and assume expected new error mass conditional on
both exposure and $U_t\ge a \mu_t$ is at most $\lambda \mu_t$. Set
$\theta=a-b$, and assume the pathwise parent deficit is at most
$\bar\delta$. Then one can take
\[
r=f\frac{s-a}{1-a}\left[1-\frac{\lambda}{b}\right]_+,\qquad
\varepsilon=\varepsilon_\circ+
\frac{1-\theta}{\theta}\bar\delta,
\tag{C19}\label{eq:witness-global-rates}
\]
provided $r>0$ and $\varepsilon<1$. Indeed, choose
$\tau_t=a \mu_t$ and $\ell_t=b \mu_t$. Their net gain above the
archive best is at least
\[
\theta \mu_t-\delta_{{\rm par},t-1}
\ge\theta(X_{t-1}-\varepsilon_\circ)
-(1-\theta)\bar\delta
=\theta(X_{t-1}-\varepsilon).
\]
Here the pathwise parent assumption is stronger than a bound on its
expectation; the two cannot be interchanged. Witnesses may cover
different predefined error modes, but the combined target set and
moment bounds must describe the actual feedback input.

The exposure, transfer, and regression conditions describe the given
synthesizer and student. Even a deterministic feedback batch can yield
mostly harmful proposals while a useful subset improves the archive;
its conditional witness probability is zero or one.

\subsection{Source Witnesses and Initial Candidate Quality}
\label{app:source-witness}

Fix a comparison prompt $p^\circ$ independently of construction, with
$\varepsilon_\circ=J_\Pi^*-J(p^\circ)$, and a measurable partition
$E_1,\ldots,E_d$ of the task-input population. For stochastic decoding,
put correctness indicators for all prompts on a fixed common probability
space with their correct marginal laws; equal prompts use the same
indicator. This coupling is an analytical device and does not change
the decoding procedure. Let $\mu_j=\Pr(E_j,c_{p^\circ}=1)$.

For an admitted initial prompt $p_i$, let $W^{\rm src}_{ij}$ be the
event that its synthesis input includes a teacher solution displaying
the rule for mode $j$. The event is defined by the source evidence,
without referring to the success of the resulting prompt. Under the
actual law of admitted candidates, let $\mathcal H^{\rm src}_{i-1}$
denote the synthesis history before drawing the evidence for candidate
$i$, including earlier attempts and admitted prompts but excluding
reserved observations. Suppose
$\Pr(W^{\rm src}_{ij}\mid \mathcal H^{\rm src}_{i-1})\ge e_j$ and
\[
\mathbb E\!\left[\Pr(c_{p_i}=1\mid E_j,c_{p^\circ}=1)
\mid \mathcal H^{\rm src}_{i-1},W^{\rm src}_{ij}\right]\ge s_j.
\]
The inner probability is on a fresh population input and decoding
draw; the outer expectation is over synthesis. Zero-mass modes can be
omitted. The exposure and transfer bounds may be required only for one
specified admitted candidate; requiring them throughout the initial
bank is a stronger sufficient condition.

\begin{proposition}[Initialization from source witnesses]
\label{prop:source-witness}
Under these conditions,
\[
\mathbb E\!\left[J_\Pi^*-\max_{p\in\mathcal P}J(p)\right]
\le\min\!\left\{1,\varepsilon_\circ+
\sum_{j=1}^d\mu_j(1-e_js_j)\right\}.
\tag{C20}\label{eq:source-bound}
\]
\end{proposition}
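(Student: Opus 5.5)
The plan is to bound the deficit of the initial bank by the deficit of a single specified admitted candidate $p_i$, relative to the comparison prompt $p^\circ$. Since $p_i\in\mathcal P$, we have $J_\Pi^*-\max_{p\in\mathcal P}J(p)\le J_\Pi^*-J(p_i)=\varepsilon_\circ+[J(p^\circ)-J(p_i)]$. It therefore suffices to show
\[
\mathbb E[J(p^\circ)-J(p_i)]\le\sum_{j=1}^d\mu_j(1-e_js_j).
\]
The cap at one then follows from $0\le J_\Pi^*-\max_{\mathcal P}J\le1$.

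On the common coupling space, $J(p^\circ)-J(p_i)\le\Pr(c_{p^\circ}=1,c_{p_i}=0)$, because $c_{p^\circ}-c_{p_i}\le\mathbf 1\{c_{p^\circ}=1,c_{p_i}=0\}$ pointwise. This holds for each realized $p_i$, with the probability taken over a fresh input and decoding draw. Decomposing over the partition $E_1,\ldots,E_d$ then gives
\[
\Pr(c_{p^\circ}=1,c_{p_i}=0)=\sum_j\mu_j\bigl[1-\Pr(c_{p_i}=1\mid E_j,c_{p^\circ}=1)\bigr],
\]
where zero-mass modes are omitted. Writing $\sigma_{ij}=\Pr(c_{p_i}=1\mid E_j,c_{p^\circ}=1)\in[0,1]$, which is a function of the realized prompt, it remains to show $\mathbb E[\sigma_{ij}]\ge e_js_j$ for every $j$.

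To prove this, condition on $\mathcal H^{\rm src}_{i-1}$. Because $\sigma_{ij}\ge0$, we have $\mathbb E[\sigma_{ij}\mid\mathcal H^{\rm src}_{i-1}]\ge\Pr(W^{\rm src}_{ij}\mid\mathcal H^{\rm src}_{i-1})\,\mathbb E[\sigma_{ij}\mid\mathcal H^{\rm src}_{i-1},W^{\rm src}_{ij}]\ge e_js_j$. If the witness has zero conditional probability, then $e_j=0$ and the bound is trivial. Taking outer expectations and summing over $j$ gives the displayed inequality. Note that $\mu_j$ and $\varepsilon_\circ$ are deterministic because $p^\circ$ is fixed independently of construction.

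The main obstacle is measure-theoretic bookkeeping rather than any inequality. The key point is that $J(p_i)$ and $\sigma_{ij}$ are evaluated on a fresh population draw, independent of the synthesis randomness. Because of this, the coupling inequality can be applied for each fixed realized prompt and then integrated over the law of admitted candidates. This must be done without conditioning on the event that the resulting prompt succeeds, which is exactly why the witness event is defined from source evidence alone. Once that is settled, the remaining steps are direct linearity arguments.
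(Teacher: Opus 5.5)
Your proof is correct and follows essentially the same route as the paper's. You bound the bank's gap by that of one specified admitted candidate, use the pointwise coupling inequality $J(p^\circ)-J(p_i)\le\Pr(c_{p^\circ}=1,c_{p_i}=0)$, decompose over $E_1,\ldots,E_d$, lower-bound each transfer term by conditioning on the source-witness event, integrate over synthesis, and cap at one. The only implicit point, which the paper shares, is that the product step $\Pr(W^{\rm src}_{ij}\mid\mathcal H^{\rm src}_{i-1})\,\mathbb E[\sigma_{ij}\mid\mathcal H^{\rm src}_{i-1},W^{\rm src}_{ij}]\ge e_js_j$ uses $e_j,s_j\ge0$, which is natural for probability bounds.
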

\begin{proof}
For the specified candidate, conditioning first on exposure shows that
its expected probability of being correct on
$E_j\cap\{c_{p^\circ}=1\}$ is at least $\mu_je_js_j$.
Thus its expected mass of errors on that set is at most
$\mu_j(1-e_js_j)$. Pointwise in the candidate,
\[
J_\Pi^*-J(p_i)
=\varepsilon_\circ+J(p^\circ)-J(p_i)
\le\varepsilon_\circ+\Pr(c_{p^\circ}=1,c_{p_i}=0).
\]
Sum over the partition, take expectations, and use that the best
initial candidate has no larger gap. Accuracy gaps are at most one.
No independence among generated candidates is used.
\end{proof}

This bound separates availability of relevant solution information
($e_j$) from its transfer into student behavior ($s_j$). For a fixed
source bank of $N_{\rm src}$ examples containing $n_j$ witnesses, a raw uniform
draw of $m$ distinct examples has exposure probability
$1-\binom{N_{\rm src}-n_j}{m}/\binom{N_{\rm src}}m$. For $m$ independent draws with witness
probability $w_j$, it is $1-(1-w_j)^m$. These formulas describe the
synthesis input, which in our configuration contains three examples;
the total source-bank size is not the per-prompt exposure count.
Admission can reweight these probabilities. If $V$ denotes admission
in a single-attempt law, then
$\Pr(W_j\mid V)=\Pr(W_j\cap V)/\Pr(V)$.
With repeated attempts and uniqueness checks, exposure must be computed
under the distribution of admitted candidates conditional on the synthesis
history. The exposure probability for an unfiltered sample of source
examples equals this quantity only when admission preserves that
probability.

Equation~\ref{eq:source-bound} makes no automatic claim that increasing
the number of candidates improves the bound: its gain depends on
additional diversity or tail information. It also retains
$\varepsilon_\circ$ explicitly. A good comparison prompt within a
restricted family does not by itself identify the optimum over $\Pi$.

\subsection{Feedback Exposure, Repair, and Regression}
\label{app:repair-conditions}

Let $\mathcal F^{\rm fb}_{t-1}$ contain all construction information used before
forming round $t$'s feedback, fixing parent $p=p_{t-1}$ but leaving the
round's unobserved proposal randomness unconditioned. Source and cached
search data may be reused arbitrarily. Define
\[
D_j(p)=E_j\cap\{c_{p^\circ}=1,c_p=0\},\qquad
d_j(p)=\Pr(D_j(p)),\qquad D_+(p)=\sum_jd_j(p).
\]
The teacher need not observe $D_j$ or ground-truth correctness.
These are population quantities for analyzing transfer.

Let $W_{tj}$ be the event that the actual feedback batch exposes an
admissible solution and student-response comparison displaying a
correct repair rule for mode $j$. It is determined before generating
the revision and cannot be defined by a successful revision afterward.
If batching is deterministic given history, its conditional probability
$f_{tj}$ is simply zero or one. Define
$R_{tj}(q)=\Pr(c_q=1\mid D_j(p))$ for $d_j(p)>0$.
The following are sufficient local conditions, required for every
pre-feedback history under consideration:
\begin{enumerate}
\item \textbf{Exposure and transfer.}
$f_{tj}=\Pr(W_{tj}\mid\mathcal F^{\rm fb}_{t-1})$, and
$\mathbb E[R_{tj}(q_t)\mid\mathcal F^{\rm fb}_{t-1},W_{tj}]\ge r_{tj}$.
Moreover,
$\sum_jd_j(p)f_{tj}r_{tj}\ge\alpha D_+(p)-u$.
For a zero-probability witness, take $r_{tj}=0$.
\item \textbf{Regression control.}
$\mathbb E[\Pr(c_p=1,c_{q_t}=0)\mid\mathcal F^{\rm fb}_{t-1}]
\le\beta D_+(p)+h$.
\end{enumerate}
Here $0\le\beta<\alpha\le1$, $u,h\ge0$, and failed or incomplete
proposals use $q_t=p$. Thus transfer rates include the probability of
actually obtaining an admitted, fully evaluated revision. They do not
condition away failures. Shared witnesses, coupled repairs across
modes, and dependence across rounds are allowed.

The error mass $u$ permits incomplete exposure of the comparison
prompt's solvable cases. The term $h$ permits collateral mistakes not
proportional to that mass. An event such as ``the teacher disagrees''
does not alone certify a useful witness: an incorrect reference may
provide misleading feedback. This effect can reduce $f_{tj}r_{tj}$ and
increase regression; candidate-dependent proxy bias affects parent and
reserved selection separately.

\begin{proposition}[One-step repair bound]
\label{prop:repair-step}
With $\eta=\alpha-\beta$ and $\nu=u+h$, these conditions imply
\[
\mathbb E[J_\Pi^*-J(q_t)\mid\mathcal F^{\rm fb}_{t-1}]
\le(1-\eta)[J_\Pi^*-J(p)]+\eta\varepsilon_\circ+\nu.
\tag{C21}\label{eq:repair-step}
\]
\end{proposition}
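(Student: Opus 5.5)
The plan is to start from the exact repair--regression decomposition in Equation~\ref{eq:repair-regression}. I would lower-bound its repair term by the mass of comparison-solvable parent errors that the proposal fixes, and upper-bound its regression term directly by the regression-control condition. Then I would convert $D_+(p)$ into the parent's global gap using the comparison prompt $p^\circ$. Throughout, I work on the fixed common probability space for correctness indicators from Appendix~\ref{app:source-witness}. Failed or incomplete proposals use $q_t=p$, so they contribute zero repair and zero regression and need no separate treatment.

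\textbf{Step 1 (repair).} The sets $D_j(p)=E_j\cap\{c_{p^\circ}=1,c_p=0\}$ are disjoint because $E_1,\ldots,E_d$ partition the inputs, and each lies inside $\{c_p=0\}$. Hence, pathwise in the proposal,
\[
\Pr(c_p=0,c_{q_t}=1)\ge\sum_{j:\,d_j(p)>0}\Pr(D_j(p),c_{q_t}=1)=\sum_j d_j(p)R_{tj}(q_t).
\]
Next I take the conditional expectation given $\mathcal F^{\rm fb}_{t-1}$. Since $R_{tj}\ge0$, I may drop the complement of $W_{tj}$, which gives
\[
\mathbb E[R_{tj}(q_t)\mid\mathcal F^{\rm fb}_{t-1}]\ge f_{tj}\,\mathbb E[R_{tj}(q_t)\mid\mathcal F^{\rm fb}_{t-1},W_{tj}]\ge f_{tj}r_{tj}.
\]
The convention $r_{tj}=0$ covers zero-probability witnesses. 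The exposure-and-transfer condition then yields an expected repair of at least $\alpha D_+(p)-u$.

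\textbf{Step 2 (regression and combination).} Regression control bounds the expected new-error mass by $\beta D_+(p)+h$. Taking conditional expectations in Equation~\ref{eq:repair-regression} and subtracting the two bounds gives
\[
\mathbb E[J(q_t)\mid\mathcal F^{\rm fb}_{t-1}]\ge J(p)+\eta D_+(p)-\nu.
\]
Here I use that $p$ is fixed by the history, so $J(p)$ is measurable.

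\textbf{Step 3 (linking $D_+$ to the gap).} Summing over the partition gives $D_+(p)=\Pr(c_{p^\circ}=1,c_p=0)$. This is at least $J(p^\circ)-J(p)$, because $J(p^\circ)-J(p)=\Pr(c_{p^\circ}=1,c_p=0)-\Pr(c_{p^\circ}=0,c_p=1)$. Since $J(p^\circ)=J_\Pi^*-\varepsilon_\circ$, we have $D_+(p)\ge J_\Pi^*-J(p)-\varepsilon_\circ$. Because $\eta=\alpha-\beta>0$, I can multiply through by $\eta$. Subtracting the result of Step~2 from $J_\Pi^*$ then gives
\[
\mathbb E[J_\Pi^*-J(q_t)\mid\mathcal F^{\rm fb}_{t-1}]\le(1-\eta)[J_\Pi^*-J(p)]+\eta\varepsilon_\circ+\nu,
\]
which is Equation~\ref{eq:repair-step}.

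The only delicate point is Step~1. One must check that restricting attention to the witness event is legitimate only through the nonnegativity of $R_{tj}$. One must also check that the conditional expectation of the population quantity $R_{tj}(q_t)$ is taken over the proposal randomness alone, with $p$ and $d_j(p)$ held fixed by $\mathcal F^{\rm fb}_{t-1}$. No independence between repairs in different modes is needed, because the bound is linear in the modes.
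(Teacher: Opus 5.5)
Your proof is correct and follows the paper's argument step for step. You lower-bound the repair term in Equation~\eqref{eq:repair-regression} using the disjoint sets $D_j(p)\subseteq\{c_p=0\}$ and the nonnegativity of $R_{tj}$ off the witness event $W_{tj}$, subtract the regression bound to obtain $\mathbb E[J(q_t)-J(p)\mid\mathcal F^{\rm fb}_{t-1}]\ge\eta D_+(p)-\nu$, and finish with $D_+(p)\ge J(p^\circ)-J(p)=J_\Pi^*-J(p)-\varepsilon_\circ$ followed by rearrangement, exactly as the paper does (only a cosmetic aside: given $\mathcal F^{\rm fb}_{t-1}$ the conditional expectation also averages over feedback formation, not proposal randomness alone, which changes nothing).
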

\begin{proof}
The disjoint sets $D_j(p)$ are all subsets of $\{c_p=0\}$. Nonnegative
correction probabilities and conditioning on $W_{tj}$ give
\[
\mathbb E[\Pr(c_p=0,c_{q_t}=1)\mid\mathcal F^{\rm fb}_{t-1}]
\ge\sum_jd_j(p)f_{tj}r_{tj}\ge\alpha D_+(p)-u.
\]
Subtract the regression bound and apply
Equation~\ref{eq:repair-regression}. This yields
$\mathbb E[J(q_t)-J(p)\mid\mathcal F^{\rm fb}_{t-1}]
\ge\eta D_+(p)-\nu$.
Since $D_+(p)\ge J(p^\circ)-J(p)
=J_\Pi^*-J(p)-\varepsilon_\circ$, rearrangement proves the claim.
This last inequality also holds when the parent is better than the
comparison prompt. No claim of pathwise accuracy improvement is used.
\end{proof}

\subsection{Adaptive Search Control and the Complete Construction Bound}
\label{app:global-proof}

\paragraph{Archive and parent invariants.}
The archive after slot $t$ is
$\mathcal C_t=\mathcal P\cup\{q_i:i\le t,\ q_i\text{ is valid
and fully evaluated}\}$. A failed, skipped, or
unevaluated proposal is represented by $q_t=p_{t-1}$; it adds no
candidate or observation. The initial parent maximizes cached search
agreement over $\mathcal P$. Induction on Equation 3 gives
\[
\widehat J_s(p_t)
=\max_{p\in\mathcal C_t}\widehat J_s(p).
\]
For this sufficient bound, each distinct prompt has a single canonical
cached search score, including any repeated occurrence. Strict
replacement and content deduplication then preserve this identity;
the archive keeps every distinct candidate at its earliest occurrence.
Expectations below are over the analyzed construction law, with
completion assumptions as stated in the respective results.

\paragraph{Controlling the search parent.}
Let $\Pi_0\subseteq\Pi$ be a finite prompt family fixed independently
of search evaluation and containing every candidate that can be
generated in the analyzed construction. Suppose the full vector of
cached agreement observations for $\Pi_0$ has independent search
groups, fixed weights, and population-matched means. For
$\delta_s\in(0,1)$ and $\epsilon_s=\sqrt{v_s\log(2|\Pi_0|/\delta_s)/2}$, weighted
Hoeffding and a union bound imply the event
\[
\mathcal E_s^{\rm conc}=\left\{\sup_{p\in\Pi_0}
|\widehat J_s(p)-\widetilde J(p)|\le\epsilon_s\right\},
\qquad\Pr((\mathcal E_s^{\rm conc})^c)\le\delta_s.
\]
Write $\rho_{\Pi_0}=\max_{p\in\Pi_0}b(p)-\min_{p\in\Pi_0}b(p)$.
For every history on $\mathcal E_s^{\rm conc}$, the cached empirical-maximizer
property gives
\[
0\le\max_{p\in\mathcal C_t}J(p)-J(p_t)
\le\rho_{\Pi_0}+2\epsilon_s.
\tag{C22}\label{eq:parent-gap}
\]
Off this event the gap is at most one. Hence one may take
\[
\zeta_{\rm par}=\min\{1,\rho_{\Pi_0}+2\epsilon_s+\delta_s\}.
\]
The family may be the full length-bounded $\Pi$; using a
smaller family requires actual candidate containment. Its optimum
never replaces $J_\Pi^*$ in the result. A very large family can make
this sufficient bound loose. A sharper adaptive generalization
argument can replace it, but a union bound over only the realized
adaptive candidates cannot.

\begin{theorem}[Generation under conditional mean repair]
\label{thm:mean-repair}
Assume initial admission completes almost surely. At every
pre-feedback history, including unsuccessful slots, suppose the
conditions of Proposition~\ref{prop:repair-step} hold with
$\eta=\alpha-\beta>0$ and $\nu=u+h$.
Let $\mathbb E[X_0]\le d_{\rm src}$ and
$\mathbb E[\delta_{{\rm par},t-1}]\le\zeta_{\rm par}$ for each slot.
With $(1-\eta)^0=1$, define
\[
\begin{aligned}
B_{\rm mean}={}&(1-\eta)^R d_{\rm src}\\
&+[1-(1-\eta)^R]\left[
\varepsilon_\circ+\frac{\nu+(1-\eta)\zeta_{\rm par}}{\eta}\right].
\end{aligned}
\]
Then
\[
\mathbb E[\Delta_{\rm gen}]
\le\min\{1,d_{\rm src},B_{\rm mean}\}.
\tag{C23}\label{eq:mean-generation}
\]
\end{theorem}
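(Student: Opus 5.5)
The plan is to run a one-step recursion on the expected archive gap $\mathbb E[X_t]$, using Proposition~\ref{prop:repair-step} for the proposal and the parent-deficit bound $\zeta_{\rm par}$ to pass from the parent back to the archive. I will get the first two entries of the minimum from archive monotonicity and the third from this recursion.

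\textbf{The trivial entries.} Proposition~\ref{prop:archive-progress} gives $0\le X_R\le X_0$ pathwise, and accuracy gaps are at most one. Since $X_R=\Delta_{\rm gen}$, this yields $\mathbb E[\Delta_{\rm gen}]\le\mathbb E[X_0]\le d_{\rm src}$ and $\mathbb E[\Delta_{\rm gen}]\le 1$.

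\textbf{Pathwise comparison with the proposal.} After slot $t$ the archive contains $q_t$. If the slot failed, the analytic convention sets $q_t=p_{t-1}$, and $p_{t-1}$ already lies in $\mathcal C_{t-1}\subseteq\mathcal C_t$. In both cases
\[
X_t\le J_\Pi^*-J(q_t).
\]
This is the only place archive retention is used. It holds even when $q_t$ does not replace the parent, which matters because a revision may fail the search test and still be good.

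\textbf{One-step recursion.} Proposition~\ref{prop:repair-step} applies at every pre-feedback history, unsuccessful slots included, and gives
\[
\mathbb E[J_\Pi^*-J(q_t)\mid\mathcal F^{\rm fb}_{t-1}]\le(1-\eta)[J_\Pi^*-J(p_{t-1})]+\eta\varepsilon_\circ+\nu.
\]
Next I substitute the exact identity $J_\Pi^*-J(p_{t-1})=X_{t-1}+\delta_{{\rm par},t-1}$, which is just the definition of $\delta_{{\rm par},t-1}$. Taking full expectations, using $\mathbb E[\delta_{{\rm par},t-1}]\le\zeta_{\rm par}$, and using $1-\eta\ge0$ (since $0<\eta\le1$) to keep the inequalities in the right direction, I obtain
\[
\mathbb E[X_t]\le(1-\eta)\mathbb E[X_{t-1}]+c,\qquad c=\eta\varepsilon_\circ+\nu+(1-\eta)\zeta_{\rm par}.
\]

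\textbf{Iteration.} Iterating $R$ times from $\mathbb E[X_0]\le d_{\rm src}$, again using the nonnegative coefficient $(1-\eta)^R$, gives
\[
\mathbb E[X_R]\le(1-\eta)^Rd_{\rm src}+c\sum_{t=1}^R(1-\eta)^{R-t}=(1-\eta)^Rd_{\rm src}+\frac{1-(1-\eta)^R}{\eta}\,c.
\]
Dividing $c$ by $\eta$ gives $\varepsilon_\circ+[\nu+(1-\eta)\zeta_{\rm par}]/\eta$, so the right-hand side is exactly $B_{\rm mean}$. For $R=0$ it reduces to $d_{\rm src}$. Combining this with the first paragraph proves Equation~\eqref{eq:mean-generation}.

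\textbf{Expected obstacle.} The delicate point is measurability and conditioning. The parent deficit $\delta_{{\rm par},t-1}$ is random and correlated with the history, so it can only be bounded in expectation after the tower property is applied. The conditional bound of Proposition~\ref{prop:repair-step} must therefore be integrated before $\zeta_{\rm par}$ is inserted. I would also check that the proposition's conditions hold at every history, including stopped ones, so that the convention $q_t=p_{t-1}$ is compatible with them. No independence across slots is needed.
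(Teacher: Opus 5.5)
Your proposal is correct and matches the paper's proof step for step. Both arguments use the same ingredients in the same order:
\begin{itemize}
\item the pathwise bound $X_t\le J_\Pi^*-J(q_t)$ from archive retention, with the dummy convention $q_t=p_{t-1}$;
\item the conditional application of Proposition~\ref{prop:repair-step}, combined with the identity $J_\Pi^*-J(p_{t-1})=X_{t-1}+\delta_{{\rm par},t-1}$;
\item integration before inserting $\zeta_{\rm par}$, followed by the geometric sum;
\item archive monotonicity together with $X_R\le1$ for the entries $d_{\rm src}$ and $1$ of the minimum.
\end{itemize}
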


\begin{proof}[Proof of Theorem~\ref{thm:mean-repair}]
Define the archive's accuracy gap after slot $t$ and the previous
parent's deficit relative to the best candidate in $\mathcal C_{t-1}$:
\[
X_t=J_\Pi^*-\max_{p\in\mathcal C_t}J(p),\qquad
\delta_{{\rm par},t-1}=\max_{p\in\mathcal C_{t-1}}J(p)-J(p_{t-1}).
\]
They are nonnegative, and
$J_\Pi^*-J(p_{t-1})=X_{t-1}+\delta_{{\rm par},t-1}$.
An admitted $q_t$ enters $\mathcal C_t$; a dummy proposal equals the
parent already in it. Therefore $X_t\le J_\Pi^*-J(q_t)$ on every
trajectory. Apply Proposition~\ref{prop:repair-step} conditionally:
\[
\mathbb E[X_t\mid\mathcal F^{\rm fb}_{t-1}]
\le(1-\eta)(X_{t-1}+\delta_{{\rm par},t-1})
+\eta\varepsilon_\circ+\nu.
\]
Take unconditional expectations and use
$\mathbb E[\delta_{{\rm par},t-1}]\le\zeta_{\rm par}$. Iterating gives
\[
\mathbb E[X_R]\le (1-\eta)^R d_{\rm src}
+\sum_{i=0}^{R-1}(1-\eta)^i
\left[\eta\varepsilon_\circ+\nu+(1-\eta)\zeta_{\rm par}\right].
\]
Use the geometric sum and $X_R=\Delta_{\rm gen}$ to obtain
the bound $B_{\rm mean}$. Archive monotonicity also gives
$X_R\le X_0$, proving Equation~\ref{eq:mean-generation}.
For $R=0$, the sum is zero and
the bound reduces to $d_{\rm src}$. For $\eta=1$ and $R\ge1$,
it reduces to $\varepsilon_\circ+\nu$.
The unconditional parent-gap bound includes the failure probability
of $\mathcal E_s^{\rm conc}$, allowing the repair recursion to use the original
proposal law.
\end{proof}

\paragraph{Varying rates and stopping.}
Constant positive $\eta$ requires the local conditions at every
history covered by Theorem~\ref{thm:mean-repair}. A stopped or empty-feedback slot
cannot be assigned positive repair capacity without a residual term
large enough to cover its unaddressed error. For deterministic
round-specific rates $\eta_t\in[0,1]$ and residuals $\nu_t$, the same
proof gives
\[
\begin{aligned}
\mathbb E[X_R]\le{}&
\prod_{t=1}^R(1-\eta_t)d_{\rm src}\\
&+\sum_{t=1}^R\prod_{i=t+1}^R(1-\eta_i)
\left[\eta_t\varepsilon_\circ+\nu_t
+(1-\eta_t)\zeta_{{\rm par},t-1}\right].
\end{aligned}
\tag{C24}\label{eq:varying-repair}
\]
A known dummy slot satisfies $X_t=X_{t-1}$ exactly and can be omitted
from the recursion. History-dependent random rates require retaining
the conditional recursion; they cannot be replaced by their observed
average in the product above. These statements do not assume fresh
independent feedback each round.

\subsection{Connections to the Empirical Analysis}
\label{app:mechanism-tests}

The empirical comparisons examine three mechanisms in the construction
analysis: transfer from worked solutions, feedback-guided candidate
generation, and finite-sample selection. The solution-removal control
(Appendix~\ref{app:solution-removal}) measures the contribution of teacher
solutions across synthesis and refinement. The independent-synthesis
control (Appendix~\ref{app:refinement-ablation}) holds the initial bank and
number of proposal opportunities fixed, comparing feedback-guided
revisions with further synthesis from source examples. These comparisons
connect source information and feedback to the quality of generated
candidates.

The 96 actual revisions in Appendix~\ref{app:archive-diagnostic} expose
the repair--damage tradeoff: incorrect-to-correct and correct-to-incorrect
transitions measure each proposal's effects relative to its parent.
Search-score changes and parent updates show how this tradeoff interacts
with the acceptance rule. Keeping all valid, fully evaluated proposals
in the archive also permits comparison of trajectory progress with the
best candidate generated during construction.

Finally, archive-best and reserved-selected test accuracy separate
empirical candidate quality from selection loss within each realized
archive. The selection-size analysis in
Table~\ref{tab:selection-size} isolates the effect of the number of
reserved questions while holding candidate prompts and responses fixed.
Together with the common-question agreement diagnostic, these
measurements connect finite-sample ranking to the selection term in the
theory. The global guarantees above characterize construction under
their stated coverage, transfer, and progress conditions; the empirical
comparisons quantify these mechanisms in the benchmark runs.

\clearpage
\section{Extended Related Work}
\label{app:extended-related-work}

\textbf{Prompting and parameter adaptation.}
Prompting guides frozen models through context and instructions
\citep{brown2020,liu2023prompt}, including intermediate reasoning
\citep{cot,zerocot} and sequential decomposition \citep{leasttomost};
prefix and prompt tuning learn continuous task vectors
\citep{prefix,prompttuning}. K2P constructs a natural-language instruction
while keeping student parameters fixed.

\textbf{Teacher knowledge transfer.}
Knowledge distillation trains student parameters on teacher behavior
\citep{kd}, output sequences \citep{seqkd}, or reasoning
\citep{reasoningteachers,stepbystep}. Prompt-level methods instead
transfer reusable instructions: PLD \citep{pld} uses gold-conditioned
reasoning and student correctness; reasoning distillation by prompt
optimization \citep{rdpo} uses judged reasoning consistency. K2P uses
worked solutions for synthesis and refinement, and extracted answers
for label-free evaluation of a frozen student.

\textbf{Instruction optimization.}
Discrete search uses gradient-guided tokens \citep{autoprompt},
gradient-free edits \citep{grips}, or generated instructions
\citep{ape}. Textual gradients \citep{apo}, scored histories
\citep{opro}, and evolutionary populations \citep{evoprompt,promptbreeder}
guide proposals. DSPy and MIPRO optimize instructions and demonstrations
in LM programs \citep{dspy,mipro}; GEPA reflects on execution trajectories
to evolve prompts \citep{gepa}. K2P couples solution-based proposals
with student execution and agreement-guided search, followed by reserved
archive selection.

\textbf{Reasoning refinement and experience reuse.}
Self-Refine revises outputs with model feedback \citep{selfrefine},
although intrinsic self-correction can fail without external feedback
\citep{selfcorrect}. Reflexion stores verbal reflections \citep{reflexion};
ReasoningBank extracts memories from successful and failed experiences
\citep{reasoningbank}. Self-Consistency aggregates reasoning paths
\citep{sc}, and Tree of Thoughts searches within-problem reasoning states
\citep{tot}. K2P concentrates adaptation in construction and deploys
one fixed instruction across questions.

\end{document}